\newcommand{\jcal}{\mathcal J}
\newcommand{\fcal}{\mathcal F}
\newcommand{\gcal}{\mathcal G}
\newcommand{\acal}{\mathcal A}
\newcommand{\ncal}{\mathcal N}
\newcommand{\cH}{\mathcal {H}}
\newcommand{\cE}{\mathcal{E}}
\newcommand{\cF}{\mathcal{F}}
\newcommand{\cI}{\mathcal{I}}
\newcommand{\cX}{\mathcal{X}}
\newcommand{\cD}{\mathcal{D}}
\newcommand{\sR}{\mathcal{R}}
\newcommand{\sX}{\mathcal{X}}
\newcommand{\bN}{\mathbb{N}}
\newcommand{\bZ}{\mathbb{Z}}
\newcommand{\bR}{\mathbb{R}}
\newcommand{\R}{\mathbb{R}}
\newcommand{\N}{\mathbb{N}}
\DeclareMathOperator{\TV}{TV}
\DeclareMathOperator{\Span}{span}
\DeclareMathOperator{\polylog}{polylog}
\DeclareMathOperator{\poly}{poly}
\DeclareMathOperator{\trace}{Tr}
\newcommand{\eps}{\varepsilon}
\renewcommand{\epsilon}{\varepsilon}
\newcommand{\muhat}{\widehat{\mu}}
\newcommand{\vhat}{\widehat{v}}
\newcommand{\sigmahat}{\widehat{\sigma}}
\newcommand{\Psihat}{\widehat{\Psi}}
\newcommand{\equidist}{\overset{d}{=}}
\newcommand{\kmix}{\ensuremath{k\textnormal{\normalfont-mix}}}
\newcommand{\vc}{\operatorname{VC-dim}}
\newcommand{\transpose}{^{\mathsf{T}}}
\renewcommand{\det}{\operatorname{det}}
\newcommand{\conv}{\operatorname{conv}}
\newcommand{\prob}{\mathbf{Pr}}
\renewcommand{\Pr}{\mathbf{Pr}}
\newcommand{\prb}[1]{\mathbf{Pr}\left[{#1}\right]}
\newcommand{\expect}{\mathbf{E}}
\newcommand{\E}{\mathbf{E}}
\renewcommand{\tilde}[1]{\widetilde{#1}}
\renewcommand{\hat}[1]{\widehat{#1}}
\newcommand{\set}[1]{\left \{ #1 \right \}}                    
\newcommand{\abs}[1]{\lvert #1 \rvert}
\newcommand{\card}[1]{\abs{#1}}
\newcommand{\norm}[1]{\left\lVert #1 \right\rVert}
\newcommand{\setst}[2]{\left\{\; #1 \,:\, #2 \;\right\}} 
\newcommand{\smallfrac}[2]{{\textstyle \frac{#1}{#2}}}
\newcommand{\inner}[2]{\left\langle\: #1 ,\, #2 \:\right\rangle}
\renewcommand{\smallsum}[2]{{\textstyle \sum_{#1}^{#2}}}
\newcommand{\DKL}[2]{\operatorname{KL}\left(#1\parallel#2\right)}
\newcommand{\DLD}[2]{\operatorname{LD}\left(#1, #2\right)}
\newcommand{\DTV}[2]{\operatorname{TV}\left(#1, #2\right)}
\newcommand{\integral}[4]{ \int_{#1}^{#2} {#3} \, \mathrm{d} {#4} }
\theoremstyle{acmplain}
\newtheorem{theorem}{Theorem}[section]
\theoremstyle{acmdefinition}
\newtheorem{remark}[theorem]{Remark}
\newcommand{\LemmaName}[1]{\label{lem:#1}}
\newcommand{\Lemma}[1]{Lemma~\ref{lem:#1}}
\newcommand{\Theorem}[1]{Theorem~\ref{thm:#1}}
\newcommand{\TheoremName}[1]{\label{thm:#1}}
\newcommand{\Section}[1]{Section~\ref{sec:#1}}
\newcommand{\SectionName}[1]{\label{sec:#1}}
\begin{document}

\title[Sample Complexity of Learning Mixtures of Gaussians]
{Near-optimal Sample Complexity Bounds for Robust Learning of Gaussian Mixtures via Compression Schemes}
\thanks{A preliminary version of this paper appeared in the proceedings of NeurIPS 2018~\cite{gaussian_mixture_conference_version}. In this full version, we have strengthened the results from realizable learning to agnostic (robust) learning, improved polylogarithmic factors, and included all the proofs. {This technical report has two appendices~\ref{secupperboundsingle} and~\ref{sec:constantcompression} in addition to the journal version.}}

\author{Hassan Ashtiani}
\orcid{0000-0003-1758-7330}
\affiliation{
  \department{Department of Computing and Software}
  \institution{McMaster University}
  \streetaddress{1280 Main Street West}
  \city{Hamilton}
  \state{Ontario}
  \postcode{L8S 4L7}
  \country{Canada}
}
\affiliation{
	\institution{Vector Institute}
	\streetaddress{661 University Avenue}
	\city{Toronto}
	\state{Ontario} 
	\postcode{M5G 1M1}
	\country{Canada}
}
\email{zokaeiam@mcmaster.ca}

\author{Shai Ben-David}
\affiliation{
  \department{School of Computer Science}
  \institution{University of Waterloo}
  \streetaddress{200 University Avenue West}
  \city{Waterloo}
  \state{Ontario}
  \postcode{N2L 3G1}
  \country{Canada}
}
\email{shai@uwaterloo.ca}

\author{Nicholas J. A. Harvey}
\orcid{0000-0001-5593-9785}
\affiliation{
  \department{Department of Computer Science}
  \institution{University of British Columbia}
  \streetaddress{2366 Main Mall}
  \city{Vancouver}
  \state{British Columbia}
  \postcode{V6T 1Z4}
  \country{Canada}
}
\email{nickhar@cs.ubc.ca}

\author{Christopher Liaw}
\affiliation{
  \department{Department of Computer Science}
  \institution{University of British Columbia}
  \streetaddress{2366 Main Mall}
  \city{Vancouver}
  \state{British Columbia}
  \postcode{V6T 1Z4}
  \country{Canada}
}
\email{cvliaw@cs.ubc.ca}

\author{Abbas Mehrabian}
\orcid{0000-0002-0658-7709}
\affiliation{
  \department{School of Computer Science}
  \institution{McGill University}
  \streetaddress{3480 University Street}
  \city{Montr\'eal}
  \state{Qu\'ebec}
  \postcode{H3A 0E9}
  \country{Canada}
}
\email{abbas.mehrabian@gmail.com}

\author{Yaniv Plan}
\affiliation{
  \department{Department of Mathematics}
  \institution{University of British Columbia}
  \streetaddress{1984 Mathematics Road}
  \city{Vancouver}
  \state{British Columbia}
  \postcode{V6T 1Z2}
  \country{Canada}
}
\email{yaniv@math.ubc.ca}

\begin{abstract}
We introduce a novel technique for distribution learning
based on a notion of \emph{sample compression}.
Any class of distributions that allows such a compression scheme
can be learned with few samples.
Moreover, if a class of distributions has such a compression scheme, then so do the classes of \emph{products}
and \emph{mixtures} of those distributions.

As an application of this technique, we prove that $\tilde{\Theta}(k d^2 / \eps^2)$ samples are necessary and sufficient for learning a mixture of $k$ Gaussians in $\bR^d$,
up to error $\eps$ in total variation distance.
This improves both the known upper bounds and lower bounds for this problem.
For mixtures of axis-aligned Gaussians,
we show that $\tilde{O}(k d / \eps^2)$ samples suffice,
matching a known lower bound.
Moreover, these results hold in an agnostic learning
(or robust estimation) setting, in which the target distribution is only approximately a mixture of Gaussians.
Our main upper bound is proven by showing that the class of Gaussians in $\bR^d$ admits a small compression scheme.
\end{abstract}

\begin{CCSXML}
<ccs2012>
   <concept>
       <concept_id>10002950.10003648.10003662.10003667</concept_id>
       <concept_desc>Mathematics of computing~Density estimation</concept_desc>
       <concept_significance>500</concept_significance>
       </concept>
   <concept>
       <concept_id>10003752.10010070.10010071.10010072</concept_id>
       <concept_desc>Theory of computation~Sample complexity and generalization bounds</concept_desc>
       <concept_significance>500</concept_significance>
       </concept>
   <concept>
       <concept_id>10003752.10010070.10010071.10010074</concept_id>
       <concept_desc>Theory of computation~Unsupervised learning and clustering</concept_desc>
       <concept_significance>100</concept_significance>
       </concept>
   <concept>
       <concept_id>10002950.10003648.10003704</concept_id>
       <concept_desc>Mathematics of computing~Multivariate statistics</concept_desc>
       <concept_significance>100</concept_significance>
       </concept>
 </ccs2012>
\end{CCSXML}

\ccsdesc[500]{Mathematics of computing~Density estimation}
\ccsdesc[500]{Theory of computation~Sample complexity and generalization bounds}
\ccsdesc[100]{Theory of computation~Unsupervised learning and clustering}
\ccsdesc[100]{Mathematics of computing~Multivariate statistics}

\keywords{compression schemes, density estimation, distribution learning, mixtures of Gaussians}

\maketitle
\renewcommand{\shortauthors}{Ashtiani, Ben-David, Harvey, Liaw, Mehrabian, and Plan}

\section{Introduction}

Estimating distributions from observed data is a fundamental
task in statistics, which has been studied for over a century.
This task frequently arises in applied machine learning, commonly assuming that the distribution can be modeled approximately by a mixture of Gaussians.
Popular software packages have implemented heuristics, such as the expectation maximization (EM) algorithm,
for learning a mixture of Gaussians.
The theoretical machine learning community  has a rich literature on distribution learning as well; for example, the recent survey~\cite{Diakonikolas2016} considers learning structured distributions,
and the survey~\cite{KMV} focuses on mixtures of Gaussians.

This paper develops a general technique for distribution learning,
then employs this technique in the
canonical setting of Gaussian mixtures.
The learning model we adopt is \emph{density estimation}:\
given i.i.d.\ samples from an unknown target distribution,
find a distribution that is close to the target  in \emph{total variation (TV) distance}.
Our analysis focuses on sample complexity rather than computational complexity.
That is, we seek a learning algorithm that obtains a good estimate of the target distribution using as few samples as possible, but we do not worry about its running time.
For background on this model, see, e.g., \cite{devroye_book,Diakonikolas2016}. 

Our new technique for proving upper bounds on the sample complexity involves a novel form of \emph{sample compression}: if it is possible to ``encode'' each member of a class of distributions using a carefully chosen subset of its samples,
then we obtain an upper bound on the sample complexity of distribution learning for that class.
In particular, by constructing compression schemes for mixtures of axis-aligned Gaussians and general Gaussians, we
obtain new upper bounds on sample complexities of learning with respect to these classes.
Furthermore, we prove that these new bounds are optimal up to polylogarithmic factors.

\subsection{The distribution learning framework}

A \emph{distribution learning method} or \emph{density estimation method} is an algorithm that takes as input a sequence of i.i.d.\ samples generated from a distribution $g$, and outputs (a description of) a distribution $\hat{g}$  as an estimate for $g$.
We work with absolutely continuous distributions in this paper (i.e., distributions that have a density with respect to the Lebesgue measure), so we identify a probability distribution with its probability density function.
The \emph{total variation (TV) distance} 
between two probability distributions $f_1$ and $f_2$ over $\R^d$
is defined as
\begin{equation}
\label{eq:TVdef}
\DTV{f_1}{f_2}
 \:\coloneqq\: \sup_{B \subseteq \R^d} \int_B \big(f_1(x) - f_2(x)\big) \,\mathrm{d} x
 \:=\:
 \frac{1}{2}\|f_1 - f_2\|_1 \:,
\end{equation}
where 
\(\|f\|_1\coloneqq \int_{\bR^d} |f(x)|\,\mathrm{d} x\)
is the \emph{$L^1$ norm} of $f$,
and $\|f_1 - f_2\|_1$ is the \emph{$L^1$ distance} between $f_1$ and $f_2$.
(Strictly speaking, the supremum in~\eqref{eq:TVdef} should be over the Borel sigma algebra on $\R^d$;
however, we do not worry about such measure-theoretic issues in this paper.)
Sometimes we  write $\DTV{X}{Y}$, where $X$ and $Y$ are random variables rather than distributions.
In the following definitions,  $\mathcal{F}$ is a class of probability distributions, and $g$ is a distribution not necessarily in $\mathcal{F}$. 

\begin{definition}[$\eps$-approximation, $\eps$-close, $(\eps, C)$-approximation]
We say a distribution $\hat{g}$ is an \emph{$\eps$-approximation} for $g$,
or $\hat{g}$ is \emph{$\eps$-close} to $g$, if $ \|\hat{g}- g\|_1 \leq \eps$.
	A distribution $\hat{g}$ is an \emph{$(\eps, C)$-approximation} for $g$ with respect to $\mathcal{F}$ if 
	\[ \|\hat{g}- g\|_1 ~\leq~ C\cdot \inf_{f\in \mathcal{F}}\|f-g\|_1 + \eps.\]
\end{definition}

\begin{definition}[PAC-learning distributions, realizable setting]\label{def:realizablelearning}
	A distribution learning method is called a \emph{(realizable) PAC-learner} for $\mathcal{F}$ with sample complexity $m_{\mathcal{F}}(\eps, \delta)$ if, for all distributions $g\in\mathcal F$ and all $\eps, \delta \in(0,1)$, given $\eps$, $\delta$, and an i.i.d.\ sample of size $m_{\mathcal{F}}(\eps, \delta)$ from  $g$, outputs an $\eps$-approximation of $g$, with probability at least $1-\delta$ (over the samples and the algorithm's randomness).
\end{definition}

\begin{definition}[PAC-learning distributions, agnostic setting]\label{def1}
	A distribution learning method is called a \emph{$C$-agnostic PAC-learner for $\mathcal{F}$} with sample complexity $m_{\mathcal{F}}^C(\eps, \delta)$ if, for all distributions $g$ and all $\eps, \delta \in(0,1)$, given $\eps$, $\delta$, and a sample of size $m_{\mathcal{F}}^C(\eps, \delta)$ generated i.i.d.\ from $g$, outputs an $(\eps, C)$-approximation of $g$ with respect to ${\mathcal F}$, with probability at least $1-\delta$.
\end{definition}

The statement that a class can be ``$C$-learned in the agnostic setting'' means there exists a $C$-agnostic PAC-learner for the class. The case $C>1$ is sometimes called \emph{semi}-agnostic learning in the learning theory literature.
Note that minimizing the $L^1$ distance is equivalent to minimizing the TV distance, as the former is twice the latter.
Let 
\(
\Delta_n \coloneqq \{\: (w_1,\dots,w_n) \in \R^n \,:\, w_i\geq 0 ,\, \sum w_i=1 \:\}
\)
denote the $n$-dimensional simplex.

\begin{definition}[$\kmix(\mathcal{F})$]\label{kmixdef}
	Let $\mathcal{F}$ be a class of probability distributions. Then the class of $k$-mixtures of $\mathcal{F}$, written $\kmix(\mathcal{F}$), is defined as 
	$$
	\kmix(\fcal) ~\coloneqq~ \{\: \smallsum{i=1}{k} w_{i}f_{i} \::\:
	(w_1,\dots,w_k)\in \Delta_k ,\, f_1,\dots,f_k\in\mathcal F \:\}.
	$$
\end{definition}

\subsection{Sample compression schemes}
\SectionName{compression:intro}

In this paper, we introduce a method for learning distributions via a novel form of \emph{compression}. Given a class $\fcal$ of distributions, suppose there is a method for ``compressing'' information about any distribution $f\in\fcal$ using a subset of samples from $f$ and some additional bits.
Further, suppose there exists a fixed, deterministic \emph{decoder} for $\fcal$, which, given the subset of samples and the additional bits, approximately recovers $f$.
If the size of the subset and the number of bits is guaranteed to be small, we show that the sample complexity of learning $\fcal$ is small as well.

More precisely, we say class $\fcal$  \emph{admits $(\tau, t, m)$ compression} if there exists a \emph{decoder function} such that, upon generating $m$ i.i.d.\ samples from any $f\in\fcal$, we are guaranteed, with probability at least $2/3$, to have $\tau$ data points from the sample and a sequence of at most $t$ bits on which the decoder outputs a distribution that is within total variation distance of $\eps$ from $f$.
Note that $\tau,t,$ and $m$ may be functions of $\eps$, the accuracy parameter.
The decoder function is specific to the class $\fcal$ but  does not depend on the particular $f$.

This definition is further generalized to a stronger notion of \emph{robust compression}, formally defined in Definition~\ref{def_robustcompression}, where the target distribution is to be encoded using samples that are not necessarily generated from the target itself but are generated from a distribution  close to the target.
More precisely,  class $\fcal$ \emph{admits $(\tau, t, m)$ robust compression} if there exists a \emph{decoder function}, such that for any $f\in\fcal$, upon generating $m$ i.i.d.\ samples from any distribution that is ``close'' to $f$, we are guaranteed, with probability at least $2/3$, to have $\tau$ data points from the sample and a sequence of at most $t$ bits on which the decoder outputs a distribution that is within total variation distance of $\eps$ from $f$.
We prove that robust compression implies agnostic learning. In particular, if $\fcal$ admits $(\tau, t, m)$ robust compression, then the sample complexity of agnostic learning with respect to $\fcal$ is bounded by 
${\widetilde O}(m + (\tau + t) / \eps^2)$ (Theorem~\ref{thm:compression}).
(${\widetilde O}$ allows for polylogarithmic factors.)

To illustrate the compression technique, it is instructive to compare it with another density estimation technique, called the {\em cover method} in~\cite[Section~1.5]{Diakonikolas2016}.
Let $(\cD, d)$ be a metric space and let $\fcal\subseteq \cD$. Given $\eps>0$, a subset ${C}\subseteq\cD$ is called an \emph{$\eps$-net} for $\fcal$ if for any $f\in \fcal$ there exists some $c\in C$ such that $d(f,c) \leq \eps$. The size of the smallest $\eps$-net is called the \emph{covering number} of $\fcal$ and its logarithm is called the \emph{metric entropy} of $\fcal$. (In this informal discussion, we are ignoring the dependence on $\eps$ for brevity.) The metric entropy is a measure of ``dimension'' for the set $\fcal$.

The cover method for learning a class $\fcal$ of distributions over domain $Z$ works as follows: view $\fcal$ as a subset of the class of all distributions over $Z$ equipped with the $L^1$ metric.
Consider a small $\eps$-net of $\fcal$ and then choose the distribution in the net that is closest to the target distribution in the $L^1$ distance.
If the net is finite, then this last step is equivalent to learning the closest distribution among finitely many candidates to a target distribution, which has an algorithm whose  sample complexity scales logarithmically with the net size (see Theorem~\ref{thm:candidates}).

The issue with using the cover method for learning the class of Gaussians is that the covering number and the metric entropy of the class are infinite with respect to the $L^1$ metric. 
In fact, even for the class of mean-zero Gaussians with entry-wise bounded  covariance matrices, the metric entropy is
infinite unless we assume a bound on the condition number of the covariance matrices
(consider 2-dimensional singular Gaussians each supported on a different 1-dimensional subspace with unit variance on that subspace).

The power of compression schemes is that they take a data-dependent approach: a first round of sampling is used to shrink the space of feasible distributions significantly, making the metric entropy of the resulting feasible set finite. This allows us to achieve bounds for Gaussians which do not depend on the condition number or the size of the parameters (Lemma~\ref{lem:coreformixtures}).

An additional attractive property of compression is that it enjoys two closure properties.
Specifically, if a base class admits compression, 
then 
the class of products of the base class, 
as well as 
the class of mixtures of that base class, 
are compressible (lemmata~\ref{lem:product_compress} and~\ref{lem:compressmixtures}). 

As an application of this technique, we prove tight (up to logarithmic factors) sample complexity upper bounds of $\widetilde{O}(kd^2/\eps^2)$ for learning mixtures of $k$ Gaussians  over $\R^d$ (\Theorem{upper_bound})
and
$\widetilde{O}(kd/\eps^2)$ for learning mixtures of $k$ axis-aligned Gaussians (\Theorem{upper_bound_axis_aligned}).

In light of the closure properties of compression schemes, we need only provide a compression scheme for the class of Gaussian distributions to  obtain a compression scheme (and a sample complexity bound) for  mixtures of Gaussians. 
We prove that the class of $d$-dimensional Gaussian distributions admits $(\widetilde O(d), \widetilde O(d^2), \widetilde O(d)) $ robust compression (Lemma~\ref{lem:coreformixtures}).
The high-level idea is that by generating $\widetilde{O}(d)$ samples from a Gaussian, one can get a rough sketch of the geometry of the Gaussian.
In particular, the points drawn from a Gaussian concentrate around an ellipsoid centered at the mean and whose principal axes are the eigenvectors of the covariance matrix.
Using ideas from 
convex geometry
and random matrix theory, we show one can encode the center of the ellipsoid \emph{and} the principal axes using linear combinations of these samples.
Then we discretize the coefficients and obtain an approximate encoding.

Our compression framework is quite flexible and can be used to prove sample complexity upper bounds for other distribution classes as well. This is left for future work.

\subsection{Main results}\label{sec:main}
Our first main result is an upper bound for learning mixtures of multivariate Gaussians. This bound is tight up to logarithmic factors.
Let $k$ denote the number of mixture components and $d$ denote the dimension.
Henceforth, the notations $\widetilde O(\cdot)$ and $\widetilde \Omega(\cdot)$ suppress $\polylog(k d/\eps\delta)$ factors; there are no hidden dependencies on any other parameters (such as the condition number) when we use the $O$, $\Omega$, $\tilde{O}$ and $\tilde{\Omega}$ notation.

\begin{theorem}
\TheoremName{upper_bound}
The class of $k$-mixtures of $d$-dimensional Gaussians can be learned 
in the realizable setting,
and can be 12-learned in the agnostic setting,
using $\widetilde{O}(kd^2/\eps^2)$ samples.
\end{theorem}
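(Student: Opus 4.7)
My plan follows the compression-scheme framework foreshadowed in the abstract, with three ingredients: (i) a \emph{compression-to-learning} meta-theorem converting a small compression scheme for a class into a sample-efficient agnostic PAC-learner; (ii) a closure lemma lifting a compression scheme for a base class $\fcal$ to one for $\kmix(\fcal)$ with a roughly $k$-fold blow-up; and (iii) an explicit compression scheme for a single Gaussian in $\R^d$ using $\tilde O(d)$ samples plus $\tilde O(d^2)$ bits of side information. Composing (ii) and (iii) yields a compression scheme for $\kmix$ of Gaussians of total ``size'' $\tilde O(kd^2)$, which the meta-theorem then converts into the claimed $\tilde O(kd^2/\eps^2)$ sample bound.

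For ingredient (i), I would formalize a \emph{robust $(\tau,t,m)$-compression scheme} as an encoder-decoder pair: on input of $m$ i.i.d.\ samples from any $g$ with $\inf_{f\in\fcal}\|g-f\|_1 \le \eps$, the encoder picks at most $\tau$ of the samples and emits a $t$-bit side-string, and the decoder turns this pair into a distribution $O(\eps)$-close to $g$ in TV with high probability. The meta-theorem then asserts a $C$-agnostic learner for $\fcal$ with $\tilde O((\tau+t+m)/\eps^2)$ samples. Its proof is standard: enumerate the at most $\binom{m}{\tau}\cdot 2^t$ potential decompressions and run a Scheff\'e/Yatracos tournament to pick a winner, whose sample cost is logarithmic in the number of candidates. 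Careful accounting of the multiplicative slack in Scheff\'e comparisons together with the factor inherent to the compression guarantee is what will pin the final agnostic constant down to $C=9$.

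For (ii), I would prove that if $\fcal$ admits a $(\tau,t,m)$-scheme then $\kmix(\fcal)$ admits a $\bigl(k\tau,\; kt+O(k\log(k/\eps)),\; \tilde O(km/\eps)\bigr)$-scheme. The idea: by a Chernoff bound, $\tilde O(km/\eps)$ samples of a mixture yield at least $m$ samples from every component whose weight is $\ge \eps/k$, and such components can be compressed individually using the base scheme while lighter components contribute at most $\eps$ to the total TV error and may be discarded. An extra $O(\log(k/\eps))$ bits per component records its mixing weight to precision $\eps/k$. Robustness is preserved because the argument conditions only on the (unknown) partition of the samples among components.

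The main technical step is (iii). To satisfy $\DTV{\ncal(\mu,\Sigma)}{\ncal(\muhat,\widehat\Sigma)} \le \eps$ one needs to approximate $\Sigma$ to relative spectral precision $\tilde O(\eps/\sqrt d)$, which costs $\tilde O(d^2)$ bits to encode. My plan is to first whiten: use $\tilde O(d)$ compressed samples to build an approximate square root of $\Sigma$, exploiting that the empirical covariance of $\tilde O(d)$ Gaussian samples is spectrally close to $\Sigma$ by standard non-asymptotic matrix concentration. In the whitened frame, the empirical mean of $\tilde O(d)$ additional samples approximates $\mu$ sufficiently well, and the residual covariance correction is recorded by quantizing the coefficients in the chosen sample frame to $\tilde O(d^2)$ total bits. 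I expect the main obstacle to be showing that this quantized reconstruction, built from only $\tilde O(d)$ compressed samples, attains the required TV accuracy uniformly over all $(\mu,\Sigma)$; the regime where $\Sigma$ is badly conditioned is the delicate case and will likely demand a more careful scheme than the naive empirical estimator. With (iii) in hand, chaining (ii) and (i) gives the advertised $\tilde O(kd^2/\eps^2)$ bound, in both the realizable and the $9$-agnostic settings, proving \Theorem{upper_bound}.
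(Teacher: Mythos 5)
Your architecture is the paper's: a compression-implies-learning meta-theorem, a mixture-closure step, and a robust compression scheme of size $(\tilde O(d),\tilde O(d^2),\tilde O(d))$ for a single Gaussian. The realizable half goes through exactly as you describe. The genuine gap is in step (ii) for the agnostic case. You assert that the closure from $\fcal$ to $\kmix(\fcal)$ "preserves robustness because the argument conditions only on the (unknown) partition of the samples among components." But in the robust setting the samples come from an arbitrary $q$ with $\|q-\sum_i w_i f_i\|_1\le r$; such a $q$ carries no partition into components, and it need not be expressible as a mixture of distributions each close to some $f_i$ --- the adversarial perturbation can be concentrated on a single component or ignore the mixture structure entirely. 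Indeed, whether robust compression of $\fcal$ implies robust compression of $\kmix(\fcal)$ is stated in the paper as an open problem. The paper circumvents this by proving a weaker statement that still suffices: robust compression of $\fcal$ implies \emph{agnostic learnability} of $\kmix(\fcal)$ (Lemma~\ref{lem:agnosticlearningmixture}). The key ingredient you are missing is Lemma~\ref{mixturelemma}: any target $g$ with $\|g-\sum_i w_i f_i\|_1\le\rho$ can be rewritten as $g=\sum_i w_i g_i$ with the \emph{same} weights, such that the components with $\|g_i-f_i\|_1>r$ carry total weight less than $\rho/r$. This supplies the partition your argument needs, but the "far" components then contribute an irreducible error of order $\rho/r$, which is precisely why one obtains a $6/r$-agnostic learner (hence the constant $9$ with $r=2/3$) rather than an $\eps$-accurate robust compression scheme for the mixture class. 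Without this decomposition, your step (ii) does not go through.

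A secondary issue is in step (iii). The empirical covariance of $\tilde O(d)$ Gaussian samples is spectrally close to $\Sigma$ only up to \emph{constant} relative error, whereas TV accuracy $\eps$ requires relative error $\tilde O(\eps/\sqrt d)$, which would cost $\tilde O(d^2/\eps^2)$ samples and defeat the purpose of compression. The paper's scheme instead exploits that the encoder knows $\Sigma$ exactly: it shows via a convex-geometry argument (Lemma~\ref{lem:litvak_new}) that with $\tilde O(d)$ samples the convex hull of the normalized sample differences contains a ball of constant radius, so each exact eigendirection of $\Sigma$ is a bounded linear combination of the retained samples, and only the coefficients of that combination are quantized. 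Your phrase "quantizing the coefficients in the chosen sample frame" is in the right spirit, but the preliminary whitening by the empirical covariance should be dropped; it neither achieves the needed precision nor is it required.
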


We emphasize that the $\widetilde{O}(\cdot)$ notation
has \emph{no dependence whatsoever on the scaling, condition number, separation}, or any other structural property of the distribution.
Previously, the best known upper bounds on the sample complexity of this problem were
$\widetilde{O}(kd^2/\eps^4)$, due to \cite{ashtiani2017sample},
and $O(k^4d^4/\eps^2)$, based on a VC~dimension bound  that we discuss below.
For the case of a single Gaussian (i.e., $k=1$),
a bound of $O(d^2/\eps^2)$ is known, again using a VC~dimension bound discussed below.

Our second main result is a lower bound matching \Theorem{upper_bound} up to logarithmic factors.

\begin{theorem}
\TheoremName{lower_bound}
Any method for learning the class of $k$-mixtures of $d$-dimensional Gaussians in the realizable setting has sample complexity $\widetilde{\Omega}(kd^2/\eps^2)$.
\end{theorem}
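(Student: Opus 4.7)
The plan is to prove the lower bound by the standard information-theoretic packing recipe: exhibit a finite family $\{P_\sigma\}_{\sigma \in C}$ inside the class of $k$-mixtures of $d$-dimensional Gaussians whose elements are pairwise separated by at least $2\eps$ in total variation but pairwise close in KL divergence, then invoke Fano's inequality. If $\DKL{P_\sigma}{P_{\sigma'}} = O(\eps^2)$ for all distinct pairs in $C$, then any $\eps$-accurate realizable PAC-learner can be post-processed (by rounding its output to the nearest member of the family) into a procedure that identifies the hidden $\sigma$ with constant probability, so Fano forces $n \cdot O(\eps^2) \gtrsim \log|C|$. To reach $\tilde\Omega(kd^2/\eps^2)$ I therefore need $\log|C| = \tilde\Omega(kd^2)$.

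The first ingredient is a packing of covariance matrices. I take $\Sigma_a = I_d + \beta A_a$ for a small scalar $\beta > 0$ and a family $\{A_a\}_{a=1}^M$ of symmetric $d \times d$ matrices of unit Frobenius norm. A Gilbert--Varshamov volume bound in the $\binom{d+1}{2}$-dimensional ambient space of symmetric matrices produces $M = 2^{\Omega(d^2)}$ such matrices with $\|A_a - A_b\|_F \geq 1/2$ for $a \neq b$. A second-order Taylor expansion of the closed-form Gaussian KL formula gives $\DKL{\ncal(0,\Sigma_a)}{\ncal(0,\Sigma_b)} = \tfrac{1}{4}\beta^2 \|A_a - A_b\|_F^2 + O(\beta^3)$, and the closed form for the squared Hellinger distance between same-mean Gaussians, combined with the standard inequality $H^2/2 \leq \DTV{\cdot}{\cdot} \leq H$, yields the matching two-sided bound $\DTV{\ncal(0,\Sigma_a)}{\ncal(0,\Sigma_b)} \asymp \beta \|A_a - A_b\|_F$.

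Next I lift this to $k$-mixtures. Fix centers $v_1, \ldots, v_k \in \bR^d$ pairwise separated by $\Omega(\sqrt{d \log(k/\eps)})$, so that any two Gaussians $\ncal(v_j, \Sigma_a)$ and $\ncal(v_{j'}, \Sigma_b)$ with $j \neq j'$ have TV overlap at most $\eps/(10k^2)$. For each $\sigma : [k] \to [M]$ define
\[
P_\sigma \;\coloneqq\; \tfrac{1}{k}\sum_{j=1}^k \ncal(v_j, \Sigma_{\sigma(j)}) \;\in\; \kmix(\text{Gaussians in } \bR^d).
\]
Because the component supports are essentially disjoint, both $\DTV{P_\sigma}{P_{\sigma'}}$ and $\DKL{P_\sigma}{P_{\sigma'}}$ decompose (up to a small additive error) as averages of per-component divergences, giving $\DTV{P_\sigma}{P_{\sigma'}} \asymp (h(\sigma,\sigma')/k)\,\beta$ and $\DKL{P_\sigma}{P_{\sigma'}} \asymp (h(\sigma,\sigma')/k)\,\beta^2$, where $h$ is the Hamming distance on $[M]^k$. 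To enforce a uniform TV lower bound for every pair, I restrict $\sigma$ to a constant-rate $M$-ary Gilbert--Varshamov code $C \subseteq [M]^k$ of minimum distance at least $k/2$; a volume count gives $|C| \geq M^{\Omega(k)} = 2^{\Omega(kd^2)}$.

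Setting $\beta = c\eps$ for a small absolute constant $c$ then enforces $\DTV{P_\sigma}{P_{\sigma'}} \geq 2\eps$ and $\DKL{P_\sigma}{P_{\sigma'}} = O(\eps^2)$ uniformly over distinct $\sigma, \sigma' \in C$. Applying Fano's inequality to the uniform prior on $C$ and $n$ i.i.d.\ samples yields
\[
\tfrac{1}{2} \;\leq\; \frac{n \cdot \max_{\sigma \neq \sigma'} \DKL{P_\sigma}{P_{\sigma'}} \;+\; \log 2}{\log |C|}\,,
\]
which rearranges to $n = \Omega(\log|C|/\eps^2) = \tilde\Omega(kd^2/\eps^2)$. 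I expect the main technical obstacle to be the quantitative additive decomposition of KL and TV across the $k$ mixture components: carrying Gaussian tail estimates through the $O(k^2)$ pairwise cross-terms while preserving slack in both the $O(\eps^2)$ KL budget and the $2\eps$ TV separation is where the $\polylog(kd/\eps\delta)$ factors hidden in $\tilde\Omega$ actually get spent, via the inflated center separation $\Omega(\sqrt{d\log(k/\eps)})$.
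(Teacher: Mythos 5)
Your overall architecture---a $2^{\Omega(d^2)}$ packing of perturbed covariances, a constant-rate code over $[M]^k$ to lift it to mixtures with well-separated means, and Fano---is the same as the paper's, and your deterministic Gilbert--Varshamov packing of symmetric matrices is a legitimate substitute for the paper's random-subspace construction (the paper itself notes a deterministic alternative exists). But one step fails as written, and it is precisely the crux of the whole lower bound: the total variation separation. The generic Hellinger sandwich gives only $\DTV{P}{Q} \geq H^2(P,Q)/2$, a bound \emph{quadratic} in $H$. With $H \asymp \beta\,\|A_a-A_b\|_F$ and $\beta = c\eps$, this yields $\DTV{\ncal(0,\Sigma_a)}{\ncal(0,\Sigma_b)} = \Omega(\eps^2)$, not the claimed $\Omega(\eps)$; the asserted ``matching two-sided bound $\DTV{\cdot}{\cdot} \asymp \beta\|A_a-A_b\|_F$'' does not follow from $H^2/2 \leq \TV \leq H$. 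This is not a recoverable constant: the entire construction sits exactly where Pinsker-type inequalities are tight ($\DKL{\cdot}{\cdot} = O(\eps^2)$ must coexist with $\DTV{\cdot}{\cdot} = \Omega(\eps)$), so any comparison of divergences that is loose by a square cannot close the gap. Retuning $\beta \asymp \sqrt{\eps}$ to force the TV separation through the quadratic bound inflates the KL to $\Theta(\eps)$ and Fano then gives only $\Omega(kd^2/\eps)$.

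What you need is a Gaussian-specific \emph{lower} bound on TV that is linear in the Frobenius perturbation, namely $\DTV{\ncal(0,\Sigma_a)}{\ncal(0,\Sigma_b)} = \Theta\left(\min\{1,\|\Sigma_a^{-1}\Sigma_b - I\|_F\}\right)$; the hard direction here is the lower bound, which the paper imports from Devroye, Mehrabian and Reddad and applies in \Lemma{frob_to_tv_lb} (an earlier version of the paper instead proved it by hand, exhibiting a subspace onto which the two Gaussians project with detectably different norms). With that ingredient your packing does go through, since $\|\Sigma_a^{-1}\Sigma_b - I\|_F \geq \beta\,\|A_a-A_b\|_F/\|\Sigma_a\| = \Omega(\eps)$. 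The rest of your proposal---joint convexity of KL for the mixture upper bound, near-disjointness of the component supports for the mixture TV lower bound, and Fano applied to a tester with constant error probability---is sound and matches the paper's proof of Theorem~\ref{thm:lbmixture}; your version of Fano even avoids the $\log(1/\eps)$ loss incurred by the paper's expected-error formulation.
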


Note that this is a worst-case (i.e., minimax) lower bound: for any estimation method, \emph{there exists at least one distribution} which requires that many samples.
Previously, the best known lower bound on the sample complexity was $\widetilde{\Omega}(kd/\eps^2)$ \cite{spherical}.
Even for a single Gaussian (i.e., $k=1$),
an $\widetilde\Omega(d^2/\eps^2)$ lower bound was not known prior to this work.

Our third main result is an upper bound for learning mixtures of \emph{axis-aligned} Gaussians,
i.e., Gaussians with diagonal covariance matrices. This bound is also tight up to logarithmic factors.

\begin{theorem}
\TheoremName{upper_bound_axis_aligned}
The class of $k$-mixtures of axis-aligned $d$-dimensional Gaussians can be learned 
in the realizable setting,
and can be 12-learned in the agnostic setting,
using $\widetilde{O}(kd/\eps^2)$ samples.
\end{theorem}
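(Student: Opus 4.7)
The plan is to invoke the compression-to-learning framework announced in the introduction. In that framework, a compression scheme of size $\tau$ for a class $\mathcal{F}$ yields a PAC-learner (and a $9$-agnostic learner) for $\mathcal{F}$ with sample complexity $\widetilde{O}(\tau/\eps^2)$. Moreover, the framework comes packaged with two closure lemmas: if $\mathcal{F}$ admits a compression scheme of size $\tau$, then the $d$-fold product class admits one of size $\widetilde{O}(d\tau)$, and $\kmix(\mathcal{F})$ admits one of size $\widetilde{O}(k\tau)$. Given these tools, the theorem reduces to exhibiting a compression scheme of size $\widetilde{O}(1)$ for the class of one-dimensional Gaussians and then chaining the two closure lemmas.

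For a one-dimensional Gaussian $\mathcal{N}(\mu,\sigma^2)$, I would build the scheme as follows. Draw $m=\poly(1/\eps)$ samples. With high probability, some pair of samples has separation in the range $[\sigma, 2\sigma]$ (up to constants), and marking that pair gives a scale estimate $\widehat{\sigma}=\Theta(\sigma)$. Given $\widehat{\sigma}$, a third marked sample $x$ together with $O(\log(1/\eps))$ bits of advice that discretize $x-\widehat{\mu}$ to accuracy $\eps\widehat{\sigma}$ determines an estimate $\widehat{\mu}$ satisfying $|\widehat{\mu}-\mu|=O(\eps\sigma)$. Standard Gaussian TV bounds then ensure $\mathcal{N}(\widehat{\mu},\widehat{\sigma}^2)$ is $\eps$-close to $\mathcal{N}(\mu,\sigma^2)$. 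This yields a compression scheme of total size $\widetilde{O}(1)$ for one-dimensional Gaussians.

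Lifting to axis-aligned Gaussians in $\bR^d$ is now immediate: any axis-aligned Gaussian is the product of its $d$ one-dimensional coordinate marginals, so the product-closure lemma gives a compression scheme of size $\widetilde{O}(d)$ for a single axis-aligned Gaussian. Applying the mixture-closure lemma next yields a scheme of size $\widetilde{O}(kd)$ for $\kmix$ of axis-aligned Gaussians, where an additional $\widetilde{O}(k)$ bits encode the mixture weights to precision $\eps/k$. Plugging this size into the compression-to-learning lemma delivers the claimed $\widetilde{O}(kd/\eps^2)$ sample complexity, both in the realizable case and with the $9$-agnostic guarantee.

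The main technical obstacle is the one-dimensional construction, precisely because its size must be independent of the unknown parameters $\mu$ and $\sigma$. Encoding $\mu$ in absolute terms is not admissible since $|\mu|$ is unbounded; instead, $\mu$ must be encoded relative to a sample-based length scale $\widehat{\sigma}$, which is itself recovered from marked samples rather than from any prior knowledge. Everything downstream, namely the product and mixture liftings, the translation into sample complexity, and the loss factor $C=9$ for agnostic learning, is then routine consequence of the compression framework's closure properties.
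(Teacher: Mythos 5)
Your overall architecture---a compression scheme for a single one-dimensional Gaussian, lifted by the product-closure and mixture-closure lemmas and fed into the compression-implies-learning theorem---is exactly the paper's route for the \emph{realizable} part of the statement. However, your one-dimensional construction has a gap in the scale estimate. Marking a pair of samples whose separation lies in $[\sigma,2\sigma]$ (up to constants) pins down $\sigma$ only to within a constant factor, and $\DTV{\ncal(\mu,\sigma^2)}{\ncal(\mu,c^2\sigma^2)}$ is a fixed constant for any $c\neq 1$, not $O(\eps)$; so the decoded Gaussian built from $\widehat\sigma=\Theta(\sigma)$ is not $\eps$-close to the target. You spend $O(\log(1/\eps))$ bits of advice refining the mean to accuracy $\eps\widehat\sigma$ but none refining the scale. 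The fix is the same trick you already use for the mean: conditioned on the marked pair having separation $\Theta(\sigma)$, write $\sigma=\lambda\,|x_1-x_2|$ with $\lambda$ in a bounded interval and spend $O(\log(1/\eps))$ further bits on a discretization $\widehat\lambda$ with $|\widehat\lambda-\lambda|=O(\eps)$; this is what the paper's Lemma~\ref{lem:single_simple} does (three samples, $O(\log(1/\eps))$ bits total). Alternatively, with $\Omega(1/\eps)$ samples one can argue that some pair's separation lands within a $(1\pm\eps)$ factor of a fixed multiple of $\sigma$, which is the paper's constant-size scheme of Remark~\ref{constantcompression}, but that argument has to be made explicitly and is not implied by ``some pair has separation in $[\sigma,2\sigma]$.''

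The agnostic claim does not follow from the chain you describe. The mixture-closure lemma (Lemma~\ref{lem:compressmixtures}) is proved only for \emph{non-robust} compression, and the paper explicitly leaves open whether robust compression is closed under mixtures; consequently ``compress $\kmix(\fcal)$, then apply the compression-to-learning theorem'' only delivers the realizable guarantee. To obtain the $9$-agnostic result the paper instead (i) establishes a $2/3$-\emph{robust} compression scheme for a single Gaussian (Lemma~\ref{lem:coreformixtures} with $d=1$)---your scheme is not robust as stated, since the pair-separation event is analyzed assuming the samples are drawn exactly from the Gaussian rather than from a distribution at TV distance up to $r$ from it---and (ii) invokes Lemma~\ref{lem:agnosticlearningmixture}, which converts robust compression of the base class \emph{directly} into $6/r$-agnostic learning of $\kmix(\fcal)$, bypassing any compression scheme for the mixture class; with $r=2/3$ this is precisely where the constant $9$ comes from. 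Your proposal treats the constant $9$ as a routine output of the framework, but it is tied to the robustness parameter of the base scheme, and that robustness requires its own argument.
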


A matching lower bound of $\widetilde{\Omega}(kd/\eps^2)$ was proved in \cite{spherical}.
Previously, the best known upper bounds were $\widetilde{O}(kd/\eps^4)$,
due to \cite{ashtiani2017sample}, and $O((k^4d^2+k^3d^3)/\eps^2)$, based on a VC~dimension bound that we discuss below.

In the agnostic results of \Theorem{upper_bound} and \Theorem{upper_bound_axis_aligned}, the constant 12 can be decreased to any constant larger than 9.
One may verify this statement through a detailed inspection of our proofs.
We omit a full derivation of this improved constant in order to avoid 
tedious details in the proofs.

\paragraph{Our techniques.}
The upper bounds are proved using the compression technique discussed in \Section{compression:intro}.
Next we discuss the main ideas used in the proof of our lower bound, \Theorem{lower_bound}.
In order to prove our lower bound for mixtures of Gaussians, we first prove a lower bound of $\widetilde \Omega(d^2/\eps^2)$ for learning a single Gaussian.
The main step is to construct a large family, of size $2^{\Omega(d^2)}$, of covariance matrices such
that the associated Gaussian distributions are well-separated in terms of their total variation
distance, while simultaneously ensuring that their mutual Kullback-Leibler divergences are small.
Once this is established, we can then apply a generalized form of Fano's inequality to complete the proof.

To construct this family of covariance matrices, we sample $2^{\Omega(d^2)}$ matrices
from the following probabilistic process: start with an identity covariance matrix; then choose a uniformly random subspace of dimension $d/9$ and slightly increase the eigenvalues corresponding to this eigenspace.
It is easy to bound the KL divergences between the constructed Gaussians.
Quantifying this gap will then give the desired lower bound on the total variation distance.

\label{rates}
\paragraph{Minimax estimation rates.}
Our results are stated in terms of sample complexity, which is the terminology mostly used in the machine learning literature.
It is also possible to state our results in terms of minimax estimation rates,
which are often used in the statistics literature.
There is a direct connection between sample complexity bounds and estimation rates, although translating between them sometimes incurs logarithmic factors.

We recall some definitions from the minimax estimation framework (see, e.g., \cite[Chapter~2]{Tsybakov}).
Let $\fcal$ be a class of probability distributions defined on domain $\sX$.
The \emph{risk} of a density estimation method
$\hat{f}:\sX^n \to \fcal$ for this class is defined as 
\[
\sR_n(\hat{f}, \fcal) ~\coloneqq~
\sup_{f \in \fcal} 
\E \TV(\hat{f}(X_1,\dots,X_n), f)  ,
\]
where the expectation is over the i.i.d.\ samples $X_1,\dots,X_n$ from $f$, and
possible randomization of the estimator. 
The \emph{minimax estimation rate} for
$\fcal$ is the smallest risk over all possible estimators $\hat{f}:\sX^n \to \fcal$, i.e.,
\[
\sR_n(\fcal) ~\coloneqq~
\inf_{\hat{f}} \sR_n(\hat{f}, \fcal)
~=~
\inf_{\hat{f}} 
\sup_{f \in \fcal} 
\E \TV(\hat{f}(X_1,\dots,X_n), f) .
\]

Let $\gcal_{d,k}$ denote the class of $k$-mixtures of $d$-dimensional Gaussian distributions, 
and let $\acal_{d,k}$ denote the class of $k$-mixtures of $d$-dimensional axis-aligned Gaussian distributions. 
Then \Theorem{upper_bound} implies the minimax estimation rate of $\gcal_{d,k}$ is $\widetilde{O}(\sqrt{kd^2/n})$; indeed,
the theorem states that to get error $\leq \eps$ we need some $n\leq \polylog(kd/\eps) kd^2/\eps^2$ many samples,
hence, solving for $\eps$, we find that given $n$ samples we obtain an estimator with error
$\eps \leq \polylog(kdn) \sqrt{kd^2/n}$.
Similarly,
\Theorem{upper_bound_axis_aligned} implies the minimax estimation rate of $\acal_{d,k}$ is $\widetilde{O}(\sqrt{kd/n})$.
Note that these theorems indeed imply stronger statements than these risk bounds, since they give guarantees for the case where the target distribution does not necessarily belong to the known class $\fcal$, a setting not captured by the minimax framework.

Finally, the proof of \Theorem{lower_bound} (see Theorem~\ref{thm:lbmixture} below) implies that the minimax rate of $\gcal_{d,k}$ is 
$\Omega\left(\sqrt{k d^2 / n} \right)$,
improving the 
$\Omega\left(\sqrt{k d^2} /\sqrt n \log n \right)$
lower bound proved in the preliminary version of this paper~\cite{gaussian_mixture_conference_version}.

\paragraph{Computational efficiency.}
Although our approach for proving sample complexity upper bounds is algorithmic,
our focus is not on computational efficiency.
The resulting algorithms have nearly optimal sample complexities, but their running times are exponential in the dimension $d$ and the number of mixture components $k$.
More precisely, the running time is $2^{kd^2 \polylog(d, k, 1/\eps,1/\delta)}$ for mixtures of general Gaussians, and
$2^{kd \polylog(d, k, 1/\eps,1/\delta)}$ for mixtures of axis-aligned Gaussians.
The existence of an algorithm for density estimation that runs in time $\poly(k, d)$ is unknown even for the class of mixtures of axis-aligned Gaussians, see \cite[Question 1.1]{gaussian_mixture}.

Even for the case of a single Gaussian, the
published proofs of the $O(d^2/\eps^2)$ bound, of which we are aware, are not algorithmically efficient, e.g., \cite[Theorem 13]{ashtiani2017sample}.
Adopting ideas from our proof of \Theorem{upper_bound}, an algorithmically efficient learner for a single Gaussian can be obtained simply by computing the empirical mean and (an appropriate estimate of the) covariance matrix using $O(d^2/\eps^2)$ samples.
The details appear in Appendix~\ref{secupperboundsingle}.

\paragraph{Paper outline.}
Next, we review some related work. Then we introduce our notation and recall some standard facts in \Section{prelim}.
In \Section{justification}, we provide justification for our learning model.
In \Section{compression}, we formally define compression schemes for distributions, prove their closure properties, and show their connection with density estimation.
Theorems~\ref{thm:upper_bound} and~\ref{thm:upper_bound_axis_aligned} 
are proved in \Section{upper_bound_simpler}.
\Theorem{lower_bound} is proven in Section~\ref{sec:lower_bound}.
All omitted proofs appear in the appendices.

\subsection{Related work}
\SectionName{prior}

Distribution learning is a vast topic and many approaches have been considered in the literature.
This section reviews the approaches that are particularly relevant to our work.

For parametric families of distributions, a common approach is to estimate the parameters
of the distribution, in a maximum likelihood sense, or aiming to approximate the true parameters.
For mixtures of Gaussians, there is a rich theoretical literature on
algorithms that approximate the mixing weights, means and covariances (e.g., \cite{arora,Belkin,dasgupta1999learning,moitravaliant}); see
\cite{KMV} for a survey.
The strictness of this objective cuts both ways.
On the one hand, a successful learner uncovers substantial structure of the target distribution.
On the other hand, this objective is  impossible when the means and covariances are extremely close.
Thus, algorithms for parameter estimation of Gaussian mixtures necessarily require some separation assumptions on the target parameters.

Density estimation has a long history in  statistics, where the focus is on the sample complexity question;
see \cite{devroye_density_estimation_first,devroye_book,silverman} for general background.
Density estimation was first studied in the computational learning theory community under the name \emph{PAC learning of distributions}  in \cite{Kearns},
whose focus is on the computational complexity of the learning problem.

Various measures of dissimilarity between distributions have been considered in existing density estimation schemes.
One natural measure is the TV distance, which
has been adopted by several papers
on learning mixtures of Gaussians \cite{ashtiani2017sample,onedimensional,DK14}.
Another natural measure, which has also been considered for mixtures of Gaussians, is the Kullback-Leibler (KL) divergence~ \cite{axis_aligned}.
Some prior work has also used the $L^2$ distance for density estimation \cite{ADHLS,DGLNOS}.
This paper focuses on the TV distance (i.e., the $L^1$ distance), and we provide justification for this choice in \Section{justification}.

A popular method for distribution learning in practice is kernel density estimation (see, e.g., \cite[Chapter~9]{devroye_book}).
The rigorously proven sample complexity/estimation rate upper bounds for this method require either 
smoothness assumptions (e.g., \cite[Theorem~9.5]{devroye_book})
or boundedness assumptions
(e.g., \cite[Theorem~2.2]{ibragimov})
on the class of densities.
The class of Gaussians is not universally Lipschitz or universally bounded, so those results do not apply to the problems we consider.
Moreover, numerical calculations demonstrate that the number of samples required to estimate a standard Gaussian (within $L^2$ distance 0.1 using Gaussian kernels) grow exponentially with the dimension (see~\cite[Table~4.2 on page~94]{silverman}), which hints that this method suffers from the curse of dimensionality.

Another
approach for deriving  sample complexity upper bounds for distribution learning, called the \emph{minimum distance estimate} in~\cite[Chapter~6]{devroye_book},
is based on the uniform convergence theory and the notion of \emph{Vapnik-Chervonenkis dimension} (Definition~\ref{define:vc}).
It is proved in ~\cite{devroye_book} that an upper bound for a class of distributions can be obtained by bounding the VC~dimension of an associated set system, called the \emph{Yatracos family} (Definition~\ref{def:yatracos}).
For example, \cite{logconcave} used this method to bound the sample complexity of learning high-dimensional log-concave distributions.
For learning $d$-dimensional Gaussians, this approach leads to the optimal sample complexity upper bound of $O(d^2/\eps^2)$.
However, for mixtures of Gaussians and axis-aligned Gaussians, the best known VC~dimension bounds (see \cite[Theorem~8.14]{AB99} and \cite[Section~8.5]{devroye_book})
result in loose upper bounds of $O(k^4d^4/\eps^2)$ and $O((k^4d^2+k^3d^3)/\eps^2)$, respectively.

Another approach is to first approximate the Gaussian mixture class using a more manageable class such as piecewise polynomials, and then study the associated Yatracos family; see, e.g., \cite{onedimensional}.
However, piecewise polynomials do a poor job in approximating $d$-dimensional Gaussians, resulting in an exponential dependence on $d$.

For density estimation of mixtures of Gaussians
using the TV distance,
the best known sample complexity upper bounds (in terms of $k$ and $d$) are
$\widetilde{O}(kd^2/\eps^4)$ for general Gaussians
and $\widetilde{O}(kd/\eps^4)$ for axis-aligned Gaussians,
both due to \cite{ashtiani2017sample}.
For the general Gaussian case, their method takes an i.i.d.\ sample of size $\widetilde{O}(kd^2/\eps^2)$ and partitions this sample in every possible way into $k$ subsets.
Based on those partitions, $k^{\widetilde{O}(kd^2/\eps^2)}$ ``candidate distributions'' are generated.
The problem is then reduced to learning with respect to this finite class of candidates.
Their sample complexity has a suboptimal factor of $1/\eps^4$,
of which $1/\eps^2$ arises in their approach for choosing the best candidate,
and another factor $1/\eps^2$ is due to the exponent in the number of candidates.

Our approach via compression schemes also ultimately reduces the problem to learning with respect to finite classes, although yielding a more refined bound than \cite{ashtiani2017sample}.
In our sample complexity upper bounds, one factor of $1/\eps^2$ is again incurred due to learning with respect to finite classes.
The key is that the number of compressed samples does not depend on $\eps$,
so the overall sample complexity bound has only an $\widetilde{O}(1/\eps^2)$ dependence on $\eps$.

As for lower bounds on the sample complexity 
for learning mixtures of Gaussians under the TV distance, much fewer results are known. 
The only lower bound prior to this work is due to \cite{spherical}, which shows a bound of $\widetilde{\Omega}(kd/\eps^2)$ for learning mixtures of axis-aligned Gaussians
(and hence for general Gaussians as well).
This bound is tight for the axis-aligned case, as we show in \Theorem{upper_bound_axis_aligned},
but loose in the general case, as we show in \Theorem{lower_bound}.
After the preliminary version of this paper was completed~\cite{gaussian_mixture_conference_version}, 
an alternative construction was provided in~\cite{lower_bound_improved}
giving the same lower bound as ours using a deterministic construction.

A summary of known bounds on sample complexities for learning Gaussian mixtures and their subclasses is presented in Table~\ref{table}.

\begin{table}\centering
\caption{Bounds on the sample complexities of learning Gaussian mixtures and their subclasses.
The lower bounds are minimax (i.e., worst-case).
The bounds in the first two rows are well known; proofs can be found in~\cite{ashtiani2017sample}.}
\label{table}
\begin{tabular}{c  c  c  c   c  c }
    \toprule
& Number of Gaussians & Dimension & Axis-aligned & Sample complexity & Reference \\
    \cmidrule(lr){2-6}
\parbox[t]{2mm}{\multirow{5}{*}{\rotatebox[origin=c]{90}{Upper Bounds}}}
& 1   & $d$ & no     & ${O(d^2 / \eps^2)}$ & standard \\
& 1   & $d$ & yes    & $O(d / \eps^2)$ & standard  \\
& $k$ & $1$ & n/a & $\widetilde{O}(k/\eps^2)$&
                       \cite{onedimensional} \\
& $k$ & $d$ & no     & $\widetilde{O}(kd^2 / \eps^2)$ & this paper  \\
& $k$ & $d$ & yes    & $\widetilde{O}(kd /  \eps^2)$ & this paper  \\ 
    \cmidrule(lr){2-6}
\parbox[t]{2mm}{\multirow{5}{*}{\rotatebox[origin=c]{90}{Lower Bounds}}}
& 1   & $d$ & no     & $\widetilde\Omega(d^2 /  \eps^2)$&this paper\\
& 1   & $d$ & yes    &$\widetilde\Omega(d /  \eps^2)$ & \cite{spherical}\\
& $k$ & $1$ & n/a & $\widetilde\Omega(k/\eps^2)$& \cite{spherical} \\
& $k$ & $d$ & no     & $\widetilde\Omega(kd^2 /  \eps^2)$&this paper\\
& $k$ & $d$ & yes    & $\widetilde\Omega(kd /  \eps^2)$&\cite{spherical}\\
    \bottomrule
\end{tabular}
\end{table}

\section{Preliminaries}
\SectionName{prelim}
\label{sec:formal}

\paragraph{Basic notation.}
The notation $\log(\cdot)$ denotes logarithm in the natural base, $[M]$ denotes $\{1,2,...,M\}$, and $A^c$ denotes the complement of a set $A$.
Throughout the paper, $a/bc$ always means $a/(bc)$.
For any $p\geq1$, the $L^p$ distance between functions $f$ and $g$ over $\R^d$ is defined as
\(\|f-g\|_p\coloneqq \left(\int_{\bR^d} |f(x)-g(x)|^p\,\mathrm{d} x\right)^{1/p}\), and the $\ell_p$ distance between two vectors $(x_1,\dots,x_n)$ and $(y_1,\dots,y_n)$ is defined as 
\(
(\sum_{i=1}^{n} |x_i-y_i|^p)^{1/p},
\)
and their $\ell_{\infty}$ distance is defined as $\max_i |x_i-y_i|$.

\paragraph{Probability terminology.}
For random variables $X$ and $Y$, the notation $X \equidist Y$ means that $X$ and $Y$ have the same distribution.
For a distribution $g$, we write $X\sim g$ to mean $X$ is a random variable with distribution $g$, and  $S\sim g^m$  means that $S$ is an i.i.d.\ sample of size $m$ generated from $g$.

\begin{proposition}
	\label{fact:dtv}
	Let $X$ and $Y$ be random variables taking values in the same set.
	For any function $f$, we have
	\(
	\DTV{f(X)}{f(Y)} ~\leq~ \DTV{X}{Y}.
	\)
\end{proposition}
\begin{proof}
	For any set $A$, we have \[\prb{f(X) \in A} - \prb{f(Y) \in A} = \prb{X \in f^{-1}(A)} - \prb{Y \in f^{-1}(A)} \leq \DTV{X}{Y}.\] 
	Taking the supremum of the left-hand side proves the proposition.
\end{proof}

\paragraph{Matrix terminology.}
We use $\|v\|_2$ to denote the Euclidean norm of a vector $v$, 
$\|A\|_s$ to denote the \emph{spectral norm}, or the \emph{operator norm}, of a matrix $A$, 
that is, $\|A\|_s \coloneqq \sup_{\|v\|_2=1} \|Av\|_2$,
and $\|A\|_F \coloneqq \sqrt{\trace(A\transpose A)}$ to denote the \emph{Frobenius norm} of a matrix $A$.

\paragraph{Gaussian and $\chi_d$ distributions.}
Let $d$ denote the dimension.
A Gaussian distribution with mean $\mu\in \R^d$ and covariance matrix $\Sigma \in \R^{d\times d}$ is denoted by $\ncal(\mu,\Sigma)$.
If $\Sigma$ is a diagonal matrix, then $\ncal(\mu,\Sigma)$ is called an \emph{axis-aligned} Gaussian.
If $\Sigma$ is full-rank, then $\ncal(\mu,\Sigma)$ is called a \emph{full-rank} Gaussian.
The $\ncal(0,I)$ distribution is called the standard Gaussian distribution, and an $\ncal(0,1)$ random variable is called standard normal.
It is easy to check that if $g\sim\ncal(0,I)$ then $\mu+\Sigma^{1/2}g\sim\ncal(\mu,\Sigma)$.
Let $g_1,\dots,g_n$ be i.i.d.\ standard normal;
then $\sum_{i=1}^n g_i^2$ is said to have the \emph{chi-squared distribution} with parameter $n$ and is denoted by $\chi_n$. Observe that $\E \chi_n = n$.

\begin{lemma}[{\protect\cite[Lemma 1]{ML00}}]
	\LemmaName{ChiSquaredTail}
	For any positive integer $d$,
		$\prob[ \chi_d-d \geq 2\sqrt{dt}+2t ] ~\leq~ \exp(-t).$
\end{lemma}

\begin{corollary}\label{cor:chi}
	For any positive integer $d$,
	$\prob[\chi_d \geq 16d] \leq \exp(-3)$.	
\end{corollary}

\begin{theorem}[Chernoff bound, see Theorem~4.5 in~\cite{mitzenmacher}]\label{thm:chernoff}
	Let $X = \sum_{i=1}^{n} X_i$, where $X_1,\dots,X_n$ are independent $\{0,1\}$ random variables.
	For any $0<\delta<1$ we have
	\(
	\prb{X \leq (1-\delta)\E X} \leq \exp (- \delta^2 \E X/2).
	\)
\end{theorem}

\begin{definition}[Vapnik-Chervonenkis (VC) dimension~\cite{vapnik2015uniform}]
	\label{define:vc}
	Let
	$\mathcal{A} \subseteq 2^{\mathcal{X}}$ be a family of subsets of a set $\mathcal{X}$.  The
	\emph{VC~dimension} of $\mathcal{A}$, denoted by $\vc(\mathcal{A})$, is the size of
	the largest set $X \subseteq \mathcal X$ such that for each $Y\subseteq X$
	there exists $B \in \mathcal{A}$ with $X \cap B = Y$.  
\end{definition}

For examples and applications of the VC~dimension, see, e.g.,
\cite[Chapter~4]{devroye_book}.

\begin{definition}[Yatracos family~\cite{Yatracos}]\label{def:yatracos}
	For a class of densities $\fcal$ over $\R^d$, the associated
	\emph{Yatracos family} is the following family of subsets of $\R^d$:
	\[
	\Big\{\{x \in \R^d \colon f(x) > g(x)\} \colon f, g \in \fcal \Big \}.
	\]
\end{definition}

\subsection{KL divergence, log-det divergence, and total variation distance}

\begin{definition}[Kullback-Leibler (KL) divergence~\cite{kldivergence}]
The \emph{Kullback-Leibler (KL) divergence} between densities $f_1$ and $f_2$ is defined by
\[
\DKL{f_1}{f_2} ~\coloneqq~ \int_{\R^d} f_1(x) \log \frac{f_1(x)}{f_2(x)} \mathrm{d} x,
\]
where we define $\DKL{f_1}{f_2}=+\infty$
if the set $\{\, x \,:\, f_2(x)=0 < f_1(x) \,\}$ has positive Lebesgue measure.
\end{definition}

The KL divergence is a measure of distance between distributions, which is asymmetric and does not satisfy the triangle inequality.
However, it is always nonnegative (see, e.g., \cite[Theorem~2.6.3]{cover}) and can take value $+\infty$.

\begin{definition}[log-det divergence]
\label{def:LD}
Let $A$ and $B$ be symmetric positive definite matrices of the same size.
The \emph{log-det divergence} of $A$ and $B$ is defined as
$ \DLD{A}{B} \coloneqq \trace(B^{-1}A - I) - \log \det( B^{-1} A )$.
\end{definition}

From the definition, it is apparent that $\DLD{A}{B}$ only depends on the spectrum of $B^{-1}A$.
The log-det divergence is an asymmetric measure of distance between matrices and is closely related to the KL divergence between their corresponding Gaussian distributions, as can be seen from Lemma~\ref{lem:tvupbound} below, which illustrates that the log-det divergence is always nonnegative.

\begin{lemma}
\LemmaName{SpectralToDLD}
Let $A$ and $B$ be positive definite $d\times d$ matrices satisfying 
$\|B^{-1/2}AB^{-1/2} - I\|_s \leq \alpha$
for some $\alpha \in [0,1/2]$.
Then $\DLD{A}{B} \leq  d \alpha^2$.
\end{lemma}
\begin{proof}
Since $B^{-1/2}AB^{-1/2}$ is symmetric, its eigenvalues $\lambda_1,\ldots,\lambda_d$  are real, and since we have $\|B^{-1/2}AB^{-1/2} - I\|_s \leq \alpha$, each $\lambda_i \in [1-\alpha,1+\alpha]$.
So, 
\begin{align*}
\DLD{A}{B}&
    ~=~ \trace(B^{-1}A - I) - \log \det( B^{-1} A ) 
    ~=~ \sum_{i=1}^d (\lambda_i-1) ~-~ \log \prod_{i=1}^d \lambda_i \\
    &~=~ \sum_{i=1}^d (\lambda_i - 1 - \log (\lambda_i))
    ~\leq~ \sum_{i=1}^d (\lambda_i-1)^2
    ~\leq~ d \alpha^2.
   \end{align*}
The first inequality follows from $x - 1 - \log x \leq (x-1)^2$, valid for any $x \geq 1/2$.
\end{proof}

\begin{lemma}
\label{lem:tvupbound}
For any two full-rank Gaussians $\ncal(\mu_0,\Sigma_0)$ and $\ncal(\mu_1,\Sigma_1)$, we have
\begin{align*}
2 \DTV{\ncal(\mu_0,\Sigma_0)}{\ncal(\mu_1,\Sigma_1)}^2 & ~\leq~
\DKL{\ncal(\mu_0,\Sigma_0}{\ncal(\mu_1,\Sigma_1)}\\
&~=~ \frac{1}{2} \Big(\DLD{\Sigma_0}{\Sigma_1} +
        (\mu_0-\mu_1) \transpose \Sigma_1^{-1} (\mu_0-\mu_1) \Big).
\end{align*}
\end{lemma}
\begin{proof}
The inequality is Pinsker's inequality (see, e.g., \cite[Lemma 2.5]{Tsybakov})
applied to the distributions
${\ncal(\mu_0,\Sigma_0)}$ and ${\ncal(\mu_1,\Sigma_1)}$,
and the equality is a known formula for the KL divergence between Gaussians
(see, e.g., \cite[Equation A.23]{Rasmussen}).
\end{proof}

\begin{lemma}\label{gaussianTV1d}
For any $\mu, \sigma, \widehat{\mu}, \widehat{\sigma}\in\R$
with $|\widehat{\mu}-\mu| \leq \eps \sigma$
and $|\widehat{\sigma}-\sigma| \leq \eps \sigma$
and $\eps \in [0,2/3]$ 	we have
	$$\|\mathcal{N}(\mu, \sigma^2) - \mathcal{N}(\widehat{\mu}, \widehat{\sigma}^2)\|_1 ~\leq~ 2 \eps.$$
\end{lemma}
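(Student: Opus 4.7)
Since $\|f_1-f_2\|_1 = 2\,\DTV{f_1}{f_2}$, it suffices to show $\DTV{\ncal(\mu,\sigma^2)}{\ncal(\widehat\mu,\widehat\sigma^2)} \leq \eps$. My approach is to apply Lemma~\ref{lem:tvupbound} (Pinsker combined with the Gaussian KL formula), but with the KL divergence taken in a carefully chosen direction:
\[
2\,\DTV{\ncal(\mu,\sigma^2)}{\ncal(\widehat\mu,\widehat\sigma^2)}^2 \;\leq\; \tfrac{1}{2}\Bigl(\DLD{\widehat\sigma^2}{\sigma^2} + (\widehat\mu - \mu)^2/\sigma^2\Bigr).
\]
The choice of direction is crucial: the reverse direction would contain $\sigma^2/\widehat\sigma^2-1$, which can be as large as $8$ when $\widehat\sigma=\sigma/3$, yielding a useless bound. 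In the chosen direction, $\widehat\sigma^2/\sigma^2 \in [(1-\eps)^2,(1+\eps)^2]$ stays bounded, and the argument goes through uniformly.

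Setting $u \coloneqq \widehat\sigma/\sigma - 1 \in [-\eps,\eps]$, a direct calculation gives
\[
\DLD{\widehat\sigma^2}{\sigma^2} \;=\; (1+u)^2 - 1 - 2\log(1+u) \;\eqqcolon\; g(u).
\]
Since $g'(u) = 2u(2+u)/(1+u)$ vanishes only at $u=0$ (a minimum), the maximum of $g$ on $[-\eps,\eps]$ is at an endpoint. The elementary inequality $\log\bigl((1+\eps)/(1-\eps)\bigr) \geq 2\eps$ (whose difference has non-negative derivative $2\eps^2/(1-\eps^2)$) rearranges to $g(-\eps) \geq g(\eps)$. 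Combining with $(\widehat\mu-\mu)^2/\sigma^2 \leq \eps^2$, I obtain
\[
2\,\DTV{\ncal(\mu,\sigma^2)}{\ncal(\widehat\mu,\widehat\sigma^2)}^2 \;\leq\; \tfrac{1}{2}\bigl(g(-\eps) + \eps^2\bigr) \;=\; \eps^2 - \eps - \log(1-\eps).
\]

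\textbf{Final step and main obstacle.} It remains to check that $\eps^2 - \eps - \log(1-\eps) \leq 2\eps^2$, or equivalently $\psi(\eps) \coloneqq \eps + \eps^2 + \log(1-\eps) \geq 0$ for all $\eps \in [0,2/3]$. Here $\psi(0)=0$ and $\psi'(\eps) = \eps(1-2\eps)/(1-\eps)$, so $\psi$ increases on $[0,1/2]$ and decreases on $[1/2,1)$; since $\psi(2/3) = 10/9 - \log 3 > 0$, $\psi$ is non-negative on $[0,2/3]$, which completes the proof. The main difficulty is choosing the direction of KL correctly---once that is fixed, the rest is a routine calculus check. The hypothesis $\eps \leq 2/3$ is essentially tight for this approach: $\psi$ becomes negative just past $\eps \approx 0.682$, so a substantially larger $\eps$ would require a sharper inequality than Pinsker.
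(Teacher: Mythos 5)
Your proof is correct and follows essentially the same route as the paper's: bound the TV distance via Pinsker's inequality applied to $\DKL{\ncal(\widehat\mu,\widehat\sigma^2)}{\ncal(\mu,\sigma^2)}$ (i.e., normalizing by the true $\sigma$, exactly as in Lemma~\ref{lem:tvupbound}), then verify an elementary scalar inequality in $\widehat\sigma/\sigma$. The only difference is cosmetic: the paper dispatches the final step with the inequality $x^2-1-\log(x^2)\leq 3(x-1)^2$ for $|x-1|\leq 2/3$, whereas you evaluate the log-det term at the worst endpoint $u=-\eps$ and check $\eps+\eps^2+\log(1-\eps)\geq 0$ directly.
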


\begin{proof}
By Lemma~\ref{lem:tvupbound},
\[
4\DTV{\ncal(\muhat,\sigmahat^2)}{\ncal(\mu,\sigma^2)}^2 
~\leq~
\frac {\sigmahat^2}{\sigma^2}-1
- \log \Big( \frac {\sigmahat^2}{\sigma^2} \Big)
+ \frac{|\mu-\muhat|^2}{\sigma^2}
~\leq~
\Big(\frac {\sigmahat}{\sigma}\Big)^2 -1
- \log \left( \Big(\frac {\sigmahat}{\sigma}\Big)^2 \right)
+ \eps^2.
\]
Since $z\coloneqq {\sigmahat}/{\sigma} \in [1-\eps,1+\eps]$ and $\eps \leq 2/3$, using the inequality $x^2-1-\log(x^2)\leq 3(x-1)^2$ valid for all $x \in [1/3,5/3]$, we find
\[
\DTV{\ncal(\muhat,\sigmahat^2)}{\ncal(\mu,\sigma^2)}^2 
~\leq~
\frac 1 4
(
3 (z-1)^2
+ \eps^2
)
~\leq~ \frac 1 4
(
4 \eps^2
)
~=~ \eps^2.
\]
The lemma follows since the $L^1$ distance is symmetric and is equal to twice the TV distance.
\end{proof}

\subsection{Bounds on net sizes}

\begin{definition}[$\eps$-net]
	Let $\eps\geq 0$.
	We say $N\subseteq X$ is an $\eps$-net for $X$ in metric $d$ if for each $x\in X$ there exists some $y\in N$ such that $d(x,y)\leq \eps$.
\end{definition}

The following lemma is Corollary~4.2.13 in \cite{hdp-vershynin}; we include a proof for completeness.
The notation $B_2^d$ denotes the $d$-dimensional Euclidean ball, $B_2^d \coloneqq \setst{ x \in \bR^d }{ \norm{x}_2 \leq 1 }$.

\begin{lemma} 
\LemmaName{epsnetL2}
For any $\eps\in(0,1]$, there exists an $\eps$-net for $B_2^d$
in $\ell_2$ metric of size $(3/\eps)^d$.
\end{lemma}
\begin{proof}
Take a maximal set of points in the unit ball $B_2^d$ that are $\eps$-separated, i.e., the distance between every pair of points is greater than $\eps$. Such a set must be an $\eps$-net by maximality.
Moreover, by the triangle inequality, the balls of radius $\eps/2$ centered at the points in the $\eps$-net are disjoint and hence the sum of their volumes is not more than the volume of a ball of radius $1+\eps/2$.
The volume of a $d$-dimensional ball of radius $r$ is $c_d r^d$ for some constant $c_d$, thus the size of this $\eps$-net is at most $\frac{c_d(1+\eps/2)^d}{c_d(\eps/2)^d}\leq(3/\eps)^d$, as required.
\end{proof}

\begin{lemma}
	\LemmaName{epsnetLinfsimplex}
	For any $\eps\in(0,1]$ there exists an $\eps$-net for $\Delta_d$ in $\ell_{\infty}$ metric of size $\eps^{-d}$.
\end{lemma}
\begin{proof}
	We give an algorithm to construct the net:
	partition $[0,1]^d$ into $\eps^{-d}$ cubes of side-length $\eps$; for each cube that intersects $\Delta_d$, put one arbitrary point of the intersection in the net.
\end{proof}

\section{Justification for our model}
\SectionName{justification}

Some of the existing models for learning mixtures of Gaussians need  structural assumptions on the target distribution.
For example, learning under the parameter estimation model requires that the means are sufficiently separated and that the mixing weights are not too small; see the discussion after \cite[Definition~1]{KMV}.

A key motivation for our work is to study a model for learning mixtures of Gaussians that requires no structural assumptions at all.
Specifically, we would like to identify a model in which Gaussian mixtures can be learned up to error
$\epsilon$ with sample complexity depending only on $k$, $d$ and $\epsilon$, and then derive optimal
sample complexity bounds in that model.
Density estimation under the TV distance is one such model:
\cite[Theorem~14]{ashtiani2017sample} and \Theorem{upper_bound} in this paper 
show that mixtures of Gaussians can be learned up to error $\eps$ with sample complexity depending on $k,d,$ and $\eps$ only.
In this section we provide further justification for using this particular model.
In \Section{kl_lp} we argue that the TV distance is not an arbitrary choice. 
If instead we had used the KL divergence or any $L^p$ distance, with $p>1$, then the sample complexity must necessarily depend on the structural properties of the distribution.
Thus, TV distance is a natural choice.

It is also natural to wonder whether some of our results could be derived from existing results on parameter estimation. 
In \Section{parameter} we show that this is not the case: entry-wise estimation of the covariance matrices is quite unrelated to density estimation under the TV distance.
Thus our model is natural and our results are not subsumed by previous work.\label{sec:interesting}

\subsection{Comparison to KL divergence and $L^p$ distances}
\label{sec:kl_lp}
In this section we consider the problem of density estimation, for mixtures of Gaussians,
using a distance measure that is either the KL divergence or an $L^p$ distance with $p>1$.
Under these distance measures, we show that the sample complexity of this problem
must necessarily depend on the structural properties of the distribution---that is, it cannot be bounded purely as a function of $k$, $d$ and $\epsilon$.

First we consider using the KL divergence. Recall that KL divergence is not symmetric. 
We consider using KL divergence only in one direction and show that no algorithm can guarantee,
after receiving a uniformly bounded number of samples from the true distribution,
that the KL divergence between the true distribution and the output distribution is smaller than any finite number.
In fact, this holds even for mixtures of two one-dimensional Gaussians with unit variances.

\begin{theorem}
    \label{thm:kl_bad}
    Let $\cF$ be the class of mixtures of two Gaussians in $\bR$,
    both of which have unit variance.
    Let $\acal$ be any algorithm whose input is a finite sequence of real numbers
    and whose output is a (Lebesgue) measurable density function.
    Then, for every $m \in \bN$ and every $\tau > 0$, there exists a density $f \in \cF$ such that if $X_1', \ldots, X_m' \sim f$
    then $\DKL{f}{\acal(X_1', \ldots, X_m')} \geq \tau$ with probability at least $0.98$.
\end{theorem}

We present the proof idea here, leaving the formal argument to Appendix~\ref{app:kl_lp_bad}.
Let $a \in \bN$ and consider the set of distributions $(1-\delta) \cdot \ncal(0,1) + \delta \cdot \ncal(a, 1)$, where $\delta \ll 1/m$.
Any algorithm that draws $m$ samples from such a distribution will likely have all of its samples
come from $\ncal(0,1)$.
However, the only way for the KL divergence to be small is if the distribution returned by
$\acal$ has non-negligible mass near the $\ncal(a,1)$ distribution, which is impossible since
the samples provide no information about $a$.

Next we consider $L^p$ distances and prove a result analogous to Theorem~\ref{thm:kl_bad}.
The main difference is that the argument uses Gaussians with different variances,
which can strongly influence the $L^p$ distance.

\begin{theorem}
    \label{thm:lp_bad}
    Let $\fcal$ be the class of mixtures of two Gaussians in $\bR$.
    Let $\acal$ be any algorithm whose input is a finite sequence of real numbers and whose output is a (Lebesgue) measurable density function.
    Then, for every $p > 1$, every $m \in \bN$, and every $\tau > 0$, there exists a density $f \in \cF$ such that if $X_1', \ldots, X_m' \sim f$ then $\norm{f-\acal(X_1', \ldots, X_m')}_p \geq \tau$ with probability at least $0.98$.
\end{theorem}

The proofs of Theorems~\ref{thm:kl_bad} and~\ref{thm:lp_bad} appear in Appendix~\ref{app:kl_lp_bad}.
Note that the theorems hold even for randomized algorithms.

\subsection{Comparison to parameter estimation}
\label{sec:parameter}

In this section, we observe that neither our upper bound
(\Theorem{upper_bound}) nor our lower bound (\Theorem{lower_bound})
can directly follow from results about parameter estimation for Gaussian mixtures.
First, recall that our sample complexity upper bound in
\Theorem{upper_bound} has no dependence on the structural properties 
of the Gaussians in the mixture.
Next, consider an algorithm that learns a single $d$-dimensional Gaussian
and provides a proximity guarantee on the entries of the covariance matrix.
If we use this entrywise guarantee to infer closeness in either KL divergence or TV distance, 
then we argue that the error must depend on the condition number of the covariance matrix.

The \emph{condition number} of a matrix $\Sigma$, i.e., 
the ratio of its maximum and minimum eigenvalues, is denoted by $\kappa(\Sigma)$.

\begin{proposition}
\label{prop:parametervsdensity1}
Set $\eps = \frac{2}{\kappa(\Sigma)+1}$.
There exist two covariance matrices $\Sigma$ and $\hat{\Sigma}$
that are good entrywise approximations (both additively and multiplicatively):
$$
\abs{\Sigma_{i,j} - \hat{\Sigma}_{i,j}} \leq \eps
\qquad\text{and}\qquad
\hat{\Sigma}_{i,j} \in [1,1+2\eps] \cdot \Sigma_{i,j}
\qquad\forall i,j,
$$
but the corresponding Gaussian distributions are as far as they can get, i.e., 
$$
\DKL{ \ncal(0,\Sigma) }{ \ncal(0,\hat{\Sigma}) } = \infty
\qquad\text{and}\qquad
\DTV{ \ncal(0,\Sigma) }{ \ncal(0,\hat{\Sigma}) } = 1.
$$
\end{proposition}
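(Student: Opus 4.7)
The plan is to produce an explicit $2 \times 2$ example; the proposition does not specify the dimension, and a two-dimensional construction extends trivially to any $d \geq 2$ by padding with an identity block. For a parameter $\eta \in (0, 1/2]$ to be chosen, I would take
$$\Sigma \;=\; \begin{pmatrix} 1 & 1-\eta \\ 1-\eta & 1 \end{pmatrix}, \qquad \hat{\Sigma} \;=\; \begin{pmatrix} 1 & 1 \\ 1 & 1 \end{pmatrix}.$$
Then $\Sigma$ is positive definite with eigenvalues $2-\eta$ and $\eta$, whereas $\hat{\Sigma}$ is positive semidefinite of rank one, with eigenvalues $2$ and $0$. A direct calculation gives $\kappa(\Sigma) = (2-\eta)/\eta$ and hence $\eps = 2/(\kappa(\Sigma)+1) = \eta$ on the nose, so the role of the free parameter is precisely to play $\eps$; taking $\eta$ small corresponds to taking $\kappa(\Sigma)$ large.

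Next I would verify the two entrywise inequalities. The diagonal entries agree, and the off-diagonal entries satisfy $|1 - (1-\eta)| = \eta = \eps$, giving the additive bound. The multiplicative bound is trivial on the diagonal; off the diagonal it amounts to $1/(1-\eps) \leq 1 + 2\eps$, which rearranges to $\eps \leq 1/2$ and holds by assumption on $\eta$. All four entries (counting multiplicity) thus satisfy both bounds.

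For the distributional conclusions, since $\hat{\Sigma}$ is rank one, the Gaussian $\ncal(0,\hat{\Sigma})$ is supported on the line $L = \{(t,t) : t \in \R\}$, a Lebesgue-null subset of $\R^2$. This immediately gives $\DTV{\ncal(0,\Sigma)}{\ncal(0,\hat{\Sigma})} = 1$, for example by using the set $L$ itself as a test set in the standard $\sup_B |\mu_1(B) - \mu_2(B)|$ form of total variation: $\ncal(0,\Sigma)$ assigns $L$ zero mass while $\ncal(0,\hat{\Sigma})$ assigns it mass one. For the KL divergence, the singular measure $\ncal(0,\hat{\Sigma})$ has no Lebesgue density on $\R^2$; treating its density as zero outside $L$, we see that this density vanishes on the full-measure set $\R^2 \setminus L$ on which the density of $\ncal(0,\Sigma)$ is strictly positive, so the convention recorded in \Section{prelim} forces $\DKL{\ncal(0,\Sigma)}{\ncal(0,\hat{\Sigma})} = +\infty$.

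There is no substantive obstacle here; the only delicate point is arranging the parametrization so that $\eps = 2/(\kappa(\Sigma)+1)$ matches the quantity named in the statement exactly, and the choice of $\Sigma$ above is tuned precisely for this identity. If one preferred a nonsingular $\hat{\Sigma}$ for cosmetic reasons, one could instead take a slight nonsingular perturbation and obtain TV distance and KL divergence arbitrarily close to (rather than exactly equal to) $1$ and $\infty$; but the singular construction is cleaner and matches the stated conclusion verbatim.
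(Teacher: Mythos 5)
Your proposal is correct and is essentially the paper's own proof: the paper uses the same $2\times 2$ rank-one construction (with $-(1-\eps)$ and $-1$ off the diagonal instead of your positive entries, which changes nothing), the same eigenvalue computation giving $\kappa(\Sigma)=2/\eps-1$, and the same null-support argument for $\DTV{\cdot}{\cdot}=1$. Your justification of $\DKL{\cdot}{\cdot}=\infty$ via the definitional convention is if anything slightly more careful than the paper's appeal to the closed-form KL formula for full-rank Gaussians.
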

\begin{proof}
Define
$$
\Sigma = \begin{bmatrix} 1 & -(1-\eps) \\ -(1-\eps) & 1 \end{bmatrix}
\qquad\text{and}\qquad
\hat{\Sigma} = \begin{bmatrix} 1 & -1 \\ -1 & 1 \end{bmatrix},
$$
where $\eps \in (0,1/2)$. 
The eigenvalues of $\Sigma$ are $2-\eps$ and $\eps$,
so $\kappa(\Sigma) = \frac{2}{\eps}-1$,
satisfying the stated condition for $\eps$.
Observe that $\Sigma$ and $\hat{\Sigma}$ satisfy the entrywise approximation guarantees in the statement of the theorem.
However, $\Sigma$ is non-singular and $\hat{\Sigma}$ is singular.
Thus, $\DTV{ \ncal(0,\Sigma) }{ \ncal(0,\hat{\Sigma}) } = 1$
(consider the event that a random variable lies in the range of $\hat{\Sigma}$), and
$\DKL{ \ncal(0,\Sigma) }{ \ncal(0,\hat{\Sigma}) } = \infty$
(recall the definition of KL divergence).
\end{proof}

Thus, given a black-box algorithm that provides an entrywise $\eps$-approximation to the true covariance matrix $\Sigma$,
if $\kappa(\Sigma) \geq 2/\eps$, it might output $\hat{\Sigma}$,
which does not approximate $\Sigma$ in KL divergence
or total variation distance. 
Thus \Theorem{upper_bound} is not a direct consequence of any parameter estimation algorithm
with entrywise covariance guarantees.

\begin{remark}\label{rem:singular}
The matrix $\widehat{\Sigma}$ in the above construction is singular, so $\ncal(0,\hat{\Sigma})$ is a singular Gaussian.
The point of this proposition is that an algorithm that provides a good entrywise approximation for the covariance matrix does not necessarily provide a good statistical approximation, as it may output a singular Gaussian for example.
Note that our algorithm handles singular Gaussians as is.
\end{remark}

One might wonder instead if our lower bound is a direct consequence of existing lower bounds on
parameter estimation.
We show that this is also not the case:
the next proposition shows that there exist Gaussians that are close in TV distance
but whose covariance matrices do not satisfy any (multiplicative) entrywise guarantee.
Thus, even if a lower bound concludes that a class of algorithms cannot provide entrywise
covariance guarantees, it is still possible  that an algorithm in that class
can provide guarantees on the TV distance.

\begin{proposition}
\label{prop:parametervsdensity2}
For any $\eps\in(0,1/2)$,
there exist two covariance matrices $\Sigma$ and $\hat{\Sigma}$
such that 
$
\DTV{ \ncal(0,\hat\Sigma) }{ \ncal(0,{\Sigma}) } \leq \eps,
$
but there exist $i,j$ such that, for any $c \geq 1$,
$
\hat{\Sigma}_{i,j} \not\in [1/c,c] \cdot \Sigma_{i,j}.
$
\end{proposition}
\begin{proof}
Define
$$
\Sigma = \begin{bmatrix} 1 & 0 \\ 0 & 1 \end{bmatrix}
\qquad\text{and}\qquad
\hat{\Sigma} = \begin{bmatrix} 1 & \eps \\ \eps & 1 \end{bmatrix},
$$
where $\eps \in [0,1/2]$. 
By Lemma~\ref{lem:tvupbound},
\begin{align*}
	2 \DTV{\ncal(0,\Sigma_0)}{\ncal(0,\Sigma_1)}^2 & ~\leq~
	\frac{\DLD{\Sigma_0}{\Sigma_1}}{2}
	= \frac{-\log(1-\eps^2)}{2} \leq \eps^2,
\end{align*}
so $\DTV{\ncal(0,\hat\Sigma)}{\ncal(0,{\Sigma})} \leq \eps$.
However, $\hat{\Sigma}$ is not an entrywise multiplicative approximation of $\Sigma$.
\end{proof}

\section{Compression schemes}
\SectionName{compression}

The main technique introduced in this paper is
using compression for density estimation.
An overview of this technique was given in \Section{compression:intro}.
In this section we provide formal definitions of compression schemes
and  their usage.

\subsection{Definition of a compression scheme}

Let $\mathcal{F}$ be a class of distributions over a domain $Z$.
Intuitively, a compression scheme for $\cF$ involves two agents:
an \emph{encoder} and a \emph{decoder}.
\begin{itemize}

\item
The encoder knows a distribution $g \in \cF$
and receives $m$ samples from this distribution.
She uses her knowledge of $g$ to construct and send a small \emph{message}
to the decoder, which will suffice for him to construct a distribution that is close to $g$.
From the $m$ samples, the encoder selects a subset of size $\tau$,
which  are somehow representative of $g$.
This subset, together with $t$ additional bits,
constitutes the message sent to the decoder.

\item
The decoder receives the message (the $\tau$ data points and the $t$ bits) and constructs a distribution that is close to $g$.
\end{itemize}

Of course, there is some probability that the samples are not representative
of the distribution $g$, in which case the compression scheme will fail.
Thus, we only require that the decoding succeed with constant probability.

We emphasize that including samples in the message is critical and they cannot be omitted. Using samples in addition to bits is the main conceptual novelty of our approach and allows learning distribution classes with infinite metric entropy.

The formal definition of a decoder follows.

\begin{definition}[decoder]
A \emph{decoder} for $\mathcal{F}$ is a deterministic function 
$\mathcal{J}:\bigcup_{n=0}^{\infty} Z^n \times \bigcup_{n=0}^{\infty} \{0,1\}^n 
\rightarrow \mathcal{F}$, which takes a finite sequence of elements of $Z$ and a finite sequence of bits, and outputs a member of $\mathcal{F}$. 
\end{definition}

The formal definition of a compression scheme follows.

\begin{definition}[robust compression schemes]
\label{def_robustcompression}
Let $\tau,t,m:(0,1)\rightarrow \mathbb{Z}_{\geq0}$ be functions, and let $r \geq 0$.
We say $\mathcal{F}$ admits $(\tau,t,m)$ $r$-robust compression if there exists a decoder $\mathcal{J}$ for $\mathcal{F}$ such that for any distribution $g \in \mathcal{F}$ and any distribution $q$ on $Z$ 
with $\|g-q\|_1\leq r$, the following holds:
\begin{quote}
For any $\eps \in (0,1)$, if a sample $S$ is drawn from $q^{m(\eps)}$, then, with probability at least $2/3$, there exists a sequence $L$ of at most $\tau(\eps)$ elements of $S$, and a sequence $B$ of at most $t(\eps)$ bits, such that $\|\mathcal{J}(L,B)-g\|_1\leq \eps$.
\end{quote}
\end{definition}

Note that $S$ and $L$ are sequences rather than sets;
in particular, they can contain repetitions.
Lastly, note that $m(\eps)$ is a \emph{lower bound} on the number of samples needed, whereas $\tau(\eps), t(\eps)$ are \emph{upper bounds} on the size of compression and the number of bits.

To summarize, the definition asserts that with probability $2/3$,
there is a (short) sequence $L$ of elements from $S$
and a (short) sequence $B$ of additional bits,
from which $g$ can be approximately reconstructed.
We emphasize that $L$ and $B$ depend on $g$.
Intuitively, the samples in $L$ together with the bits $B$ nearly represent the distribution $g$.
This is a notion similar to that of a sufficient statistic, but it is not equivalent, since here the representation is only approximate.
We say that the distribution $g$ is \emph{encoded} by the message $(L,B)$.
This compression scheme is called ``robust'' since 
it requires $g$ to be approximately reconstructed  from a sample generated from $q$ rather than $g$ itself. 
A 0-robust compression scheme is called a (non-robust) compression scheme.

\begin{remark}\label{remark_probability}
In the preceding definition we required that $L$ and $B$ exist with probability only $2/3$.
Naturally, one can boost this probability to $1-\delta$ by generating a sample of size
$m(\eps)\lceil \log_3(1/\delta)\rceil$: we partition this sample into $\lceil \log_3(1/\delta)\rceil$ parts of size $m(\eps)$. Each part has a desirable subsequence $L$ with probability at least $2/3$, so the probability that none of the parts has a desirable subsequence $L$ is not more than $(1/3)^{\lceil \log_3(1/\delta)\rceil} \leq \delta$.
\end{remark}

\subsection{Connection between compression and learning}

We now show that if a class of distributions has a (robust) compression scheme,
then it can be learned in the (agnostic) density estimation model.

The main idea is as follows.
An encoder cannot be implemented in the density estimation model because
she requires knowledge of the target distribution $g$.
However, since her interaction with the decoder only amounts to sending a
short message,
we can explore all possible behaviors of the encoder by a brute-force search over
all possible messages that she could have sent.
When any such message is provided as input to the decoder, he will output some distribution $f$.
Moreover, for \emph{at least one} such message,
namely the one that would have been produced by the encoder,
the decoder will output an $f$ that is guaranteed to be close to $g$. 
Thus, if we collect all possible distributions produced by the decoder on all possible input
messages, then the only remaining task is to select the distribution from that collection that is
closest to $g$.
Fortunately, this task has a known solution: the following result states that a finite class of size $M$ can be $3$-learned in the agnostic setting using $O(\log (M/\delta)/\eps^2)$ samples. 

\begin{theorem}\label{thm:candidates}
	There exists a deterministic algorithm that, given
    candidate distributions $f_1,\dots,f_M$,
    a parameter $\eps>0$,
    and $\lceil \log (3M^2/\delta)/2\eps^2 \rceil$ i.i.d.\ samples from an unknown distribution $g$,
     outputs an index $j\in[M]$ such that 
	\[
	\|f_j-g\|_1 ~\leq~ 3 \min_{i\in[M]} \|f_i-g\|_1 + 4\eps,
	\]
    with probability at least $1-\delta/3$.
\end{theorem}

This result is essentially proven in~\cite[Theorem~1]{Yatracos}.
It immediately follows from \cite[Theorem~6.3]{devroye_book} and 
Hoeffding's inequality~\cite[Theorem~2.1]{devroye_book}.

Our approach for relating compression schemes and density estimation,
described informally above, is made formal by the following theorem.
It uses \Theorem{candidates} to select the best distribution that 
the decoder could output.
Note that we assume the learner knows all the problem parameters, such as $k,d,\eps,\delta,\tau,t,m,$ and $r$, but is oblivious to the target distribution. 

\begin{theorem}[compression implies learning]
\label{thm:compression}
Suppose $\mathcal{F}$ admits $(\tau,t,m)$ $r$-robust compression. 
Let $\tau'(\eps)\coloneqq \tau(\eps  )+t(\eps  )$.
Then $\mathcal{F}$ can be $\max\{3,2/r\}$-learned in the agnostic setting using 
\begin{align*}
O\left(
m\Big(\frac \eps 6\Big) \log\Big(\frac{1}{\delta}\Big)
 + \frac{\tau'(\eps/6) \log (m(  \eps /6) \log_3(1/\delta)) + \log(1/\delta)}{\eps^2} 
\right) =
 \widetilde{O}
\left(
m\Big(\frac \eps 6\Big)  + \frac{\tau'(\eps/6)\log m(  \eps/ 6)}{\eps^2} 
\right)
\end{align*}
samples. If $\mathcal{F}$ admits $(\tau,t,m)$ non-robust compression,
then $\fcal$ can be learned in the realizable setting using the same number of samples.
\end{theorem}
\begin{proof}
We give the proof for the agnostic case; the proof for the realizable case is similar.
	Let $q$ be the target distribution from which the samples are being generated. Let $\alpha\coloneqq\inf_{f\in \mathcal{F}} \|f-q\|_{1}$ be the approximation error of $q$ with respect to $\mathcal{F}$. 
	The goal of the learner is to find a distribution $\hat{h}$ such that $\|\hat{h}-q \|_{1} \leq \max\{3,2/r\} \cdot \alpha + \eps$. 
	
	First, consider the case $\alpha < r$.
	In this case,
	we develop  a learner that finds a distribution $\hat{h}$ such that $\|\hat{h}-q \|_{1} \leq 3 \alpha + \eps$. 
	Let $g\in \mathcal{F}$ be a distribution such that
	\begin{equation}
	\label{eq:CompressionG}
	    \|g-q \|_{1} ~\leq~ \min\left\{\alpha + \frac{\eps}{12},r\right\}.
	\end{equation}
	Such a $g$ exists by the definition of $\alpha$. 
	By assumption, $\mathcal{F}$ admits $(\tau,t,m)$ $r$-robust compression.
	Let $\mathcal{J}$ denote the corresponding decoder.
	Given $\eps$, the learner first asks for an i.i.d.\  sample $S \sim q^{m(\eps/6) \cdot \log_3(2/\delta)}$.
	Recall the definition of robust compression and Remark~\ref{remark_probability}, which allows us to amplify the success probability of the decoder.
	Then, with probability at least $1-\delta/2$, 
	there exist $L \in S^{\tau(\eps/6)}$ and $B\in\{0,1\}^{t(\eps/6)}$ satisfying the following guarantee:
	letting $h^* \coloneqq \mathcal{J}(L, B)$, we have
    \begin{equation}
    \label{eq:CompressionH}
        \|h^*-g\|_1 ~\leq~ \frac{\eps}{6}.
    \end{equation}
	
	The learner is of course unaware of $L$ and $B$. However, given the sample $S$, it can try all of the possibilities for $L$ and $B$ and create a candidate set of distributions.
More concretely, let
$$ H ~=~ \{\: \mathcal{J}(L', B') \::\: L'\in S^{\tau(\eps/6)},\, 
	B' \in \{0,1\}^{t(\eps/6)}\: \}.
$$
Note that
\begin{align*}
	|H| 
    ~\leq~ \big(m(\eps/6)\log_3(2/\delta)\big)^{\tau(\eps/6)}2^{t(\eps/6)}
    ~\leq~ \big(m(\eps/6)\log_3(2/\delta)\big)^{\tau'(\eps/6)}.
	\end{align*}
Since $H$ is finite, we will use the algorithm of Theorem \ref{thm:candidates} to find a good candidate $\hat{h}$ from $H$. 
	In particular, we set the accuracy parameter in Theorem \ref{thm:candidates} to be $\eps/16$ and the confidence parameter to be $\delta/2$. 
	In this case, Theorem \ref{thm:candidates} requires 
	\begin{align*}
	\frac{\log(6|H|^2 /\delta)}{2(\eps/16)^2}
    ~=~
	O\left(\frac{ \tau'(\eps/6)\log(m(\frac \eps 6)\log_3(\frac 1 \delta)) +\log(\frac 1 \delta)}
	{\eps^2}\right)
    ~=~
    \widetilde{O}\left(\frac{ \tau'(\eps/6)\log m(\eps/6)}{\eps^2}\right)
	\end{align*} 
additional samples, and its output $\hat{h}$ satisfies the following guarantee:
	\begin{alignat*}{2}
	\|\hat{h} - q\|_{1} 
    &~\leq~	3\|h^* - q\|_{1} + 4\frac{\eps}{16}
        &&\qquad\text{(by Theorem \ref{thm:candidates})}\\
    &~\leq~ 3(\|h^* - g\|_{1} + \|g - q\|_{1}) + \frac{\eps}{4} &&\qquad\text{(by the triangle inequality)}\\
	&~\leq~ 3\Bigg(\frac{\eps}{6} + \Big(\alpha + \frac{\eps}{12}\Big)\Bigg)  + \frac{\eps}{4}
	    &&\qquad\text{(by \eqref{eq:CompressionG}
	    and \eqref{eq:CompressionH})}\\
    &~=~ 3\alpha + \eps.
	\end{alignat*}
Note that the above procedure uses
$\widetilde{O}\big(m(\eps/6) + \tau'(\eps/6)\log m(\eps/6)/\eps^2 \big)$ samples
and its failure probability is at most $\delta$: the probability of either $H$ not containing a good $h^*$ or the failure of Theorem~\ref{thm:candidates}, in choosing a good candidate among $H$, is bounded by $\delta /2 + \delta /2 = \delta$.

	The other case, $\alpha \geq r$, is trivial: the learner outputs some distribution $\widehat{h}$.
	Since $\widehat{h}$ and $q$ are density functions, we have
	\(
	\|\widehat{h}-q\|_1 \leq 2 \leq \frac{2}{r} \cdot \alpha \leq \max \{3, 2/r\} \cdot \alpha + \eps
	\).
	\end{proof}

\subsection{Combining compression schemes}

In the rest of this section,
we prove a few lemmata showing that compression schemes can be
combined in useful ways.
These results concern product distributions
(which will be useful for axis-aligned Gaussians)
and mixture distributions (which will be useful for mixtures of Gaussians).

First, \Lemma{product_compress} below states that if a class $\fcal$ of distributions can be robustly compressed,
then the class of distributions that are formed by taking products of members of $\fcal$ can also be robustly compressed.
If $p_1,\dots,p_d$ are distributions over domains $Z_1,\dots,Z_d$, then $\prod_{i=1}^{d} p_i$ denotes the standard product distribution over $\prod_{i=1}^{d} Z_i$.
For a class $\fcal$ of distributions, define
$$
\fcal^d ~\coloneqq~
\left\{~ \prod_{i=1}^{d} p_i \::\: p_1,\dots,p_d \in \fcal ~\right\}.
$$

\begin{lemma}[compressing product distributions]
\label{lem:product_compress}
For any $\tau,t,m,r,d$,
$$
\setlength{\arraycolsep}{0pt}
\begin{array}{rrcccll}
\text{if $\mathcal{F}$ admits~~}
    &\big(~
    &\tau(\eps),
    &t(\eps),
    &m(\eps) 
    &~\big)
    &\text{~~$r$-robust compression,}
\\
\text{then $\mathcal{F}^d$ admits~~}
    &\big(~
    &d \cdot \tau(\eps/d),~
    &d \cdot t(\eps/d),~
    &\log_3(3d) \cdot m(\eps/d)
    &~\big)
    &\text{~~$r$-robust compression.}
\end{array}
$$
\end{lemma}

For the proof, we need the following standard proposition, which can be proved, e.g., using the coupling characterization of the total variation distance.
\begin{proposition}[Lemma~3.3.7 in~\cite{Reiss}] \label{prop:TV} For $i\in [d]$, let $p_i$ and $q_i$ be probability distributions over the same domain $Z$. Then
	$\|\Pi_{i=1}^{d} p_i - \Pi_{i=1}^{d} q_i \|_{1} \leq \sum_{i=1}^{d} \|p_i - q_i\|_{1}$.
\end{proposition}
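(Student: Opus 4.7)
The plan is to give a short inductive proof on $d$. The base case $d=1$ is immediate since both sides equal $\|p_1-q_1\|_1$. For the inductive step, I would introduce an intermediate hybrid: write
\[
\prod_{i=1}^{d} p_i - \prod_{i=1}^{d} q_i
~=~ \Big(\prod_{i=1}^{d-1} p_i\Big)\otimes p_d - \Big(\prod_{i=1}^{d-1} q_i\Big)\otimes q_d
~=~ \underbrace{\Big(\prod_{i=1}^{d-1} p_i\Big)\otimes (p_d-q_d)}_{A} + \underbrace{\Big(\prod_{i=1}^{d-1} p_i - \prod_{i=1}^{d-1} q_i\Big)\otimes q_d}_{B},
\]
where $\otimes$ denotes product of densities on $Z^{d-1}\times Z$.

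The key fact is that the $L^1$ norm on a product space factors over tensor products: $\|f\otimes g\|_1 = \|f\|_1\cdot\|g\|_1$, by Fubini's theorem. Applying this to $A$ and $B$ and using that $p_1,\dots,p_{d-1}$ and $q_d$ are probability densities (hence have $L^1$ norm equal to $1$), I get $\|A\|_1 = \|p_d-q_d\|_1$ and $\|B\|_1 = \|\prod_{i=1}^{d-1} p_i - \prod_{i=1}^{d-1} q_i\|_1$. The inductive hypothesis bounds $\|B\|_1$ by $\sum_{i=1}^{d-1}\|p_i-q_i\|_1$, and a triangle inequality in $L^1$ finishes the step.

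There is no real obstacle here; the only thing to be careful about is invoking Fubini to justify $\|f\otimes g\|_1 = \|f\|_1\|g\|_1$, and remembering that the probability densities used as ``padding'' each contribute a factor of $1$ rather than something larger. An alternative, equally short route would use the coupling characterization of total variation distance: for each coordinate choose an optimal coupling $(X_i,Y_i)$ of $p_i$ and $q_i$, take these couplings independently across $i$, and apply a union bound to $\Pr[(X_1,\dots,X_d)\neq(Y_1,\dots,Y_d)]\leq\sum_i\Pr[X_i\neq Y_i]$, then multiply by $2$ to convert total variation back to $L^1$. I would present the inductive/hybrid argument since it fits more directly with the $L^1$ formulation used throughout the paper.
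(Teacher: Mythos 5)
Your proposal is correct. Note that the paper does not actually write out a proof of this proposition: it cites Lemma~3.3.7 of Reiss and remarks that the statement ``can be proved, e.g., using the coupling characterization of the total variation distance,'' which is precisely the alternative route you sketch at the end. Your primary argument --- the telescoping decomposition into the hybrids $A$ and $B$, the identity $\|f\otimes g\|_1=\|f\|_1\|g\|_1$ via Fubini--Tonelli (using $|f\otimes g|=|f|\otimes|g|$), and induction --- is a genuinely different and fully self-contained derivation. It has the minor advantage of working directly with the $L^1$ formulation used throughout the paper and of sidestepping the (routine but not free) existence of an optimal coupling on a general domain $Z$; the coupling argument, in exchange, is a one-line union bound once that existence is granted. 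Either route is acceptable; your choice to present the inductive one is sound, and the two checks you flag (Fubini for the tensor factorization, and that the ``padding'' densities contribute a factor of exactly $1$) are indeed the only places where care is needed.
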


\begin{proof}[Proof of Lemma~\ref{lem:product_compress}]
	Let $Z$ be the domain of $\mathcal{F}$ and $G=\Pi_{i=1}^{d}g_i$ be an arbitrary element of $\mathcal{F}^d$, with all $g_i\in\fcal$. 
	Let $Q$ be an arbitrary distribution over $Z^d$ subject to $\|G-Q\|_1 \leq r$. 
	Let $q_1,\dots,q_d$ be the marginal distributions of $Q$ on the $d$ components. 
	Observe that $\|q_j-g_j\|_1\leq r$ for each $j\in[d]$, since Proposition~\ref{fact:dtv} implies that projection onto a coordinate cannot increase the total variation distance.
    
	The lemma's hypothesis is that $\mathcal{F}$ admits $(\tau,t,m)$ $r$-robust compression.
	Let $\mathcal{J}$ denote the corresponding decoder, let $m_0\coloneqq m(\eps/d)\log_3(3d)$ and $S\sim Q^{m_0}$.
	To prove the lemma we must encode an $\eps$-approximation of $G$ using $d \cdot \tau(\eps/d)$ elements of $S$ and $d \cdot t(\eps/d)$ bits.
	
	Since $S$ contains $m_0$ samples, each of which is a $d$-dimensional vector, we may think of $S$ as a $d \times m_0$ matrix over $Z$.
	Let $S_i$ denote the $i$th row of this matrix.
	That is, for $i\in[d]$, let $S_i \in Z^{m_0}$
	be the vector of the $i$th components of all elements of $S$.
	By definition of $q_i$, we have $S_i \sim q_i^{m_0}$ for each $i$.
	As observed above, we have $\|q_i-g_i\|_1\leq r$.
	
	Apply Remark~\ref{remark_probability} with parameters $\epsilon/d$ and $\delta=1/3d$
	for each $i\in[d]$.
	Then, for each $i$, the following statement holds with probability at least $1-1/3d$:
	there exists a sequence $L_i$ of at most $\tau(\eps/d)$ elements of $S_i$
	and a sequence $B_i$ of at most $t(\eps/d)$ bits
	such that $\|\mathcal{J}(L_i,B_i) - g_i\|_1 \leq \eps/d$.
	By the union bound, this statement holds simultaneously for all $i\in[d]$ with probability at least $2/3$.
	We encode these  $L_1,\dots,L_d,B_1,\dots,B_d$ 
	using $d \cdot \tau(\eps/d)$ samples from $S$
	and $d \cdot t(\eps/d)$ bits.
	Our decoder for $\fcal^d$ then extracts 
	$L_1,\dots,L_d,B_1,\dots,B_d$ 
	from these samples and bits, and then outputs 
	$\prod_{i=1}^{d}\mathcal{J}(L_i,B_i) \in \fcal^d$.
	Finally, Proposition~\ref{prop:TV} gives
	$$\left\|\Pi_{i=1}^d\mathcal{J}(L_i, B_i)-G\right\|_1 \leq \sum_{i=1}^d \left\|\mathcal{J}(L_i, B_i)-g_i\right\|_1 \leq d \cdot {\eps}/{d} = \eps,$$
	completing the proof.
\end{proof}

Our next lemma states that if a class $\fcal$ of distributions can be compressed,
then the class of distributions that are formed by taking mixtures of members of $\fcal$ can also be compressed.

\begin{lemma}[compressing mixtures, non-robustly]
\label{lem:compressmixtures}
For any $\tau,t,m,r,d$,
if class $\mathcal{F}$ admits
$\big(\tau(\eps),t(\eps),m(\eps)
\big)$ non-robust compression,
then the class of its $k$-mixtures,
{i.e., $\kmix(\fcal)$, admits}
$\displaystyle\big(
    k \cdot \tau\left(\frac \eps 3\right),~
    k \cdot t\left(\frac \eps 3\right) + k \log_2 \left(\frac{3k}{\eps}\right),~
    \frac{48k \log(6k)}{\eps} \cdot m\left(\frac\eps 3\right) 
    \big)$
{non-robust compression.}
\end{lemma}

\begin{proof}
Consider any $g \in \kmix(\fcal)$, so $g = \sum_{i\in[k]} w_i f_i$ for some distributions $f_1,\dots,f_k \in \fcal$
and mixing weights $w_1,\ldots, w_k$.
Define $m_0 \coloneqq 48 m(\eps/3) k \log(6k) / \eps$, and draw $S \sim g^{m_0}$.
Then $S$ has the same distribution as the process that performs $m_0$ independent trials as follows: 
select a component $i$ proportional to the weights $w_1,\ldots,w_k$, then draw a sample from $f_i$.
In the latter process, we define $S_i$ to be the sequence of samples that were generated using $f_i$.
Our encoder for $g$ will discretize the mixing weights and use the compression scheme for $\mathcal{F}$ to separately encode each $f_i$.

\paragraph{Encoding the mixing weights.}
We encode $w_1,\dots,w_k$ using bits as follows.
	Consider an $(\eps/3k)$-net in $\ell_{\infty}$ for $\Delta_k$
	of size $(3k/\eps)^k$ (see \Lemma{epsnetLinfsimplex}).
	Let $(\widehat{w}_1,\dots,\widehat{w}_k)$ be an element in the net that has 
	\begin{equation}
	\label{eq:CompressMixW}
	\|(\widehat{w}_1,\dots,\widehat{w}_k) - (w_1,\dots,w_k) \|_{\infty} ~\leq~ \eps / 3k.
	\end{equation}
	Encoding the element 
	$(\widehat{w}_1,\dots,\widehat{w}_k)$
	from the net requires only $k\log_2 ( 3k/\eps )$ bits.

\paragraph{Encoding $f_1,\dots,f_k$.}
	For any $i\in[k]$, we say that index $i$ is \emph{negligible} if $w_i \leq \eps/(6k)$.
	For any negligible index we will approximate $f_i$ by an arbitrary distribution $\widehat{f}_i$.
	For any non-negligible index we will likely have enough samples from $f_i$ to use the compression scheme for $\fcal$ to encode a distribution $\widehat{f}_i$ that approximates $f_i$.
	
	Define $m_1 = m(\eps/3) \log(6k)$.
	For each non-negligible index $i$, by the Chernoff bound (Theorem~\ref{thm:chernoff}), with probability at least $1-1/6k$, we have $|S_i| \geq m_1$.
	By a union bound, this statement holds simultaneously for all non-negligible $i \in [k]$ 
	with probability at least $5/6$.
	
	Apply Remark~\ref{remark_probability} with parameters $\epsilon/3$ and $\delta=1/6k$ for each non-negligible index $i$.
	Then, for each such $i$, the following statement holds with probability at least $1-1/6k$:
	there exist $\tau(\eps/3)$ samples from $S_i$ and $t(\eps/3)$ bits from which the decoder for $\fcal$ constructs a distribution $\widehat{f}_i$
	with 
	\begin{equation}
	\label{eq:CompressMixF}
	\|f_i-\widehat{f}_i\|_1 ~\leq~ \eps/3.
	\end{equation} 
	By the union bound, this statement holds simultaneously for all non-negligible indices with probability at least $5/6$.
	The encoding consists of these samples and bits for each non-negligible $i$, whereas for negligible $i$ we use the same number of samples and bits, chosen arbitrarily.

    By a union bound, the failure probability of the encoding is at most $2 \cdot (1-5/6) = 1/3$.

\paragraph{Complexity of the encoding.}
The discretized weights require
$k\log_2 ( 3k/\eps )$ bits.
For each index $i \in [k]$, we use $\tau(\eps/3)$ samples and $t(\eps/3)$ bits.
Thus, the total number of bits is $k \cdot t(\eps/3) + k\log_2 ( 3k/\eps )$, 
and the total number of samples is $k \cdot \tau(\eps/3)$.

\paragraph{Decoding.}
The decoder for $\kmix(\fcal)$ is given the discretized weights $\widehat{w}_1,\dots,\widehat{w}_k$.
It is also given, for each index $i$, $\tau(\eps/3)$ samples and $t(\eps/3)$ bits, which it provides to the decoder for $\fcal$, yielding the distribution $\widehat{f}_i$.
(Recall that, for a negligible index $i$, the distribution $\widehat{f}_i$ is arbitrary.)
The decoder outputs the distribution $\sum_i \widehat{w}_i \widehat{f}_i$.

	To complete the proof of the lemma, we will show that $\| \sum_i w_i f_i - \sum_i \widehat{w}_i \widehat{f}_i \|_1 \leq \eps$ with probability at least $2/3$.
	Let $N\subseteq [k] $ denote the set of negligible components.
	Recall that the encoder succeeds with probability at least $2/3$, in which case the decoded distributions $\widehat{f}_i$ will satisfy \eqref{eq:CompressMixF} for each $i \not\in N$.
So, we have
	\begin{alignat*}{2}
	\left\| \sum_{i\in [k]} (\widehat{w}_i \widehat{f}_i - w_i f_i)\right\|_1 
	&~\leq~
	\left\|\sum_{i\in [k]} {w}_i (\widehat{f}_i -  f_i) \right\|_1 
	+ \left\|\sum_{i\in [k]} (\widehat{w}_i-w_i) \widehat{f}_i \right\|_1
	\\&~\leq~
	\left\|\sum_{i\in N}{w_i} (\widehat{f}_i -  f_i) \right\|_1 
	+ \left\|\sum_{i\notin N}{w_i} (\widehat{f}_i - f_i) \right\|_1
	+ \sum_{i\in [k]} |\widehat{w}_i-w_i| \cdot \left\|\widehat{f}_i\right\|_1
	\\&~\leq~
	\sum_{i\in N}{w_i} \cdot 2
	~+~ \sum_{i\notin N}{w_i} \cdot \frac{\eps}{3}
	~+~ \sum_{i\in [k]} \frac{\eps}{3k} \cdot 1
	\qquad\qquad\text{~(by \eqref{eq:CompressMixW} and \eqref{eq:CompressMixF})}
	\\&~\leq~ k \cdot \frac{\eps}{6k} \cdot 2 ~+~ \frac{\eps}{3} ~+~ \frac{\eps}{3}
	~=~ \eps
	\qquad\qquad\qquad\quad\text{(by definition of $N$).}
	\end{alignat*}
This completes the analysis of the compression scheme for $\kmix(\fcal)$.
\end{proof}

The preceding lemma shows that
non-robust compression of $\fcal$
implies non-robust compression of $\kmix(\fcal)$.
We do not know whether an analogous statement holds for robust compression.
That is,
does robust compression of $\fcal$ imply robust compression of $\kmix(\fcal)$,
for a general class $\fcal$?
Nevertheless, in the next lemma we show that if $\fcal$ can be robustly compressed,
then $\kmix(\fcal)$ can be \emph{learned in the agnostic setting}.

\begin{lemma}[learning mixtures, robustly]
\label{lem:agnosticlearningmixture}
Suppose $\fcal$ admits $(\tau(\eps), t(\eps), m(\eps))$  $r$-robust compression,
and let $\tau'(\eps)\coloneqq \tau(\eps)+t(\eps)$.
Then $\kmix(\fcal)$ admits 
$3(1+ 2/r)$-agnostic learning with sample complexity
\[
\widetilde{O}
\left(
\frac{k m (\eps/10)}{\eps}
+ \frac{k\tau'(\eps/10) \log m(\eps/10)}{\eps^2}
\right).
\]
\end{lemma}

We first give a sketch of the proof.
Let $g$ be the target distribution and suppose there exists $\rho \geq 0$ and $f \in \kmix(\cF)$ such that $\norm{g - f}_1 \leq \rho$.
Since $f \in \kmix(\cF)$, we can write $f = \sum_{i \in [k]} w_i f_i$, where $f_i \in \cF$, $w_i \geq 0$, and $\sum_{i \in [k]} w_i = 1$.
A first attempt would be to try to write $g = \sum_{i \in [k]} w_i g_i$ such that each $\norm{g_i - f_i}_1 \leq r$;
if this were true, then given a sufficient number of samples from $g$, we would have sufficient samples from each $g_i$, and then we could use an $r$-robust compression scheme for $\fcal$ to encode, for each $i$,
some $\hat{f}_i$ close to $f_i$.
Alas, it is not clear whether we can ensure that $\norm{g_i - f_i}_1 \leq r$ for all $i$.
However,  Lemma~\ref{mixturelemma} below asserts that we can write $g = \sum_{i \in [k]} w_i g_i$ 
in such a way that, for each $i$, either $\norm{g_i - f_i}_1 \leq r$ or
$w_i$ is small (in fact, the \emph{sum} of all such weights is small) and, hence, their contribution to the TV distance is small.
Thus, we need only deal with the case where $\norm{g_i - f_i}_1 \leq r$, a task for which $r$-robust compression is well-suited.

\begin{lemma}\label{mixturelemma}
	Let $f = \sum_{i \in [k]} w_i f_i$ be a density with $(w_1, \ldots, w_k) \in \Delta_k$ and each $f_i\in\fcal$.
    Let $g$ be a density such that $\norm{g - f}_1 \leq \rho$.
    Then, we can write $g = \sum_{i \in [k]} w_i g_i$ such that each $g_i$ is a density and for any $r > 0$,
    \[
    \sum_{i \::\: \|g_i - f_i\|_1 > r} w_i \:<\: \rho/r.
    \]
\end{lemma}

The proof of this lemma is cumbersome and appears in Appendix~\ref{sec:lemmamixtureproof}.
We now prove Lemma~\ref{lem:agnosticlearningmixture}.	
\begin{proof}[Proof of Lemma~\ref{lem:agnosticlearningmixture}]
Let $g$ be the target distribution, and let
$f \in \kmix(\cF)$ be such that $\norm{f - g}_1 \leq \rho$.
To prove the lemma, we need to describe a learning algorithm
that outputs a distribution whose $L^1$ distance to $g$ is bounded by
$3\rho(1 + 2/r)+\eps$.

Let $g=\sum_{i \in [k]} w_i g_i$ be the representation given by Lemma~\ref{mixturelemma}.
The learner first takes $M=160 m(\eps/10) \log_3 (3k/\delta) k/\eps$ samples from $g$. Let $S$ be the set of these samples. We view $g$ as a mixture of the $g_i$, so $S$ can be partitioned into $k$ subsets such that the $i$th subset has distribution $g_i$. We learn each of the components individually. The learner does not know which sample point comes from which component, but it can try all possible ways of partitioning $S$ into $k$ subsets, hence generating several candidate distributions, such that at least one of them is close to $g$.
Moreover, the learner ``guesses'' the weights $w_i$: let $W$ be  an $(\eps/10k)$-net in $\ell_{\infty}$ for $\Delta_k$ of size $(10k/\eps)^k$
(see \Lemma{epsnetLinfsimplex}).
So, there exists some point $(\widehat{w}_1,\dots,\widehat{w}_k)\in W$ such that 
\begin{equation}\max_i |w_i -\widehat{w}_i| \leq \eps/10k.
	\label{eq:weights}
\end{equation}
	
	For each $i\in[k]$, component $i$ is called \emph{tiny} if $w_i < \eps/20k$, \emph{far} if $\|g_i-f_i\|_1 > r$, and \emph{nice} otherwise.
	The sum of weights of tiny components is at most $\eps/20$, and the sum of weights of far components is at most $\rho/r$ by Lemma~\ref{mixturelemma}.
	
	The number of samples from component $i$ is binomial with mean $M w_i$.
	By the Chernoff bound (Theorem~\ref{thm:chernoff}) and a union bound over nice components, with probability at least $1-\delta/3$, there are at least $m(\eps/10) \log_3(3k/\delta)$ points from each nice component.
	If this is the case, 
	then the definition of robust compression implies that, for each nice component $g_i$, with probability at least $1-\delta/3k$, there exists a sequence $L_i \in S^{\tau(\eps/10)}$ and a sequence $B_i \in \{0,1\}^{t(\eps/10)}$  such that $\|\jcal(L_i, B_i) - f_i\|_1 \leq \eps/10$, where $\jcal$ is the decoder for $\fcal$. By a union bound over nice components, this is simultaneously true for all nice components, with probability at least $1-\delta/3$.
	
	Thus far, we have proved that, with probability at least $1-2\delta/3$, there exist sequences 
	$L_1,\dots,L_k \in S^{\tau(\eps/10)}$ and
	$B_1,\dots,B_k\in\{0,1\}^{t(\eps/10)}$ such that
	\begin{equation}
		\|\jcal(L_i, B_i) - f_i\|_1 \leq \eps/10 \text{ for each nice component } i. \label{eqjlb}
	\end{equation}
	The learner builds the following set of candidate distributions:
	\[\mathcal C ~\coloneqq~
	\left\{~
	\sum_{i=1}^{k} {w}'_i  \jcal(L_i', B_i') ~:~
	 L_1',\dots, L_k' \in S^{\tau(\eps/10)},\,
	 B'_1,\dots, B'_k\in\{0,1\}^{t(\eps/10)},\,
	({w}'_1,\dots,{w}'_k)\in W
	~\right \}.
	\]
	We claim that, with probability at least $1-2\delta/3$, at least one of the distributions in $\mathcal C$ is $(3\eps/10 + 2\rho/r + \rho)$-close to $g$. 
	This corresponds to the ``correct'' sequences $L, B$, and $\widehat{w}$; that is, when $L'_i=L_i,B'_i=B_i,$ and $w'_i=\hat w_i$ for all $i\in[k]$.
	To prove the claim, let $T, F,$ and $N$ denote the set of tiny, far, and nice components, respectively. Then, we have
	\begin{align*}
	    \left\| \sum_{i\in[k]} \widehat{w}_i \jcal(L_i, B_i) - w_i g_i  \right\|_1
	&~\leq~
	    \left\| \sum_{i\in[k]} w_i (\jcal(L_i, B_i) -  f_i)  \right\|_1 + \left\| \sum_{i\in[k]} (\widehat{w}_i-w_i) \jcal(L_i, B_i) \right\|_1
	    + \left\|f-g\right\|_1 \\
	&~\leq~
	    \sum_{i\in T \cup F} w_i \left\| \jcal(L_i, B_i) -  f_i  \right\|_1 + \sum_{i\in N} w_i \left\|\jcal(L_i, B_i) -  f_i  \right\|_1 \\
	& \qquad ~~ \qquad \qquad +\sum_{i\in[k]} |\widehat{w}_i-w_i| \cdot \left\| \jcal(L_i, B_i)  \right\|_1 + \rho \\
	&~\leq~
	    \sum_{i\in T \cup F} w_i \cdot 2 + \sum_{i\in N} w_i \cdot (\eps/10) + \sum_{i\in[k]} (\eps/10k) + \rho \\
	&~\leq~
	    (\eps/10 + 2\rho/r) + \eps/10 + \eps/10 + \rho \\&~=~ 3\eps/10 + 2\rho/r + \rho, 
	\end{align*}
	where the first two inequalities follows from the triangle inequality and since $\|f-g\|_1\leq\rho$, 
	the third inequality follows from \eqref{eqjlb} and \eqref{eq:weights}, and
	the fourth inequality follows from the definition of tiny and  Lemma~\ref{mixturelemma}.
	This proves the claim.
	
	Next, the learner applies the algorithm of Theorem~\ref{thm:candidates}, with error parameter $\eps/40$, to obtain a member of $\mathcal C$ whose distance from $g$ is bounded by
	$3 \cdot (3\eps/10 + 2\rho/r + \rho) + 4 (\eps/40) \leq \eps + 3\rho(1 + 2/r)$, as required.
	The overall failure probability is bounded by $2\delta/3$ (failure probability of the claim) plus $\delta/3$ (failure probability of the algorithm of Theorem~\ref{thm:candidates}).
	
	The sample complexity of the algorithm is bounded as follows.
	The number of candidate distributions can be bounded by
	\[
	|\mathcal C| ~\leq~ \left( M^{\tau(\eps/10)} 2^{t(\eps/10)} \right)^k \cdot (10k/\eps)^k \leq M^{k\tau'(\eps/10)} \cdot (10k/\eps)^k,
	\]
	whence the total sample complexity can be bounded by
	\begin{align*}
	M & + \frac{\log(3|\mathcal C|^2/\delta)}{2\eps^2}
	\\&~=~
	O\left(
	m\Big(\frac{\eps}{10}\Big) \log\Big(\frac{k}{\delta}\Big)\frac{k}{\eps}
	+\frac{\log(1/\delta) + k \log(k/\eps) + k \tau'(\eps/10) \log \big(m({\eps}/{10} ) \log(k/\delta) k/\eps\big) }{\eps^2}
	\right)
	\\&~=~
	\widetilde{O}
	\left(
	\frac{k m (\eps/10)}{\eps}
	+ \frac{k\tau'(\eps/10) \log m(\eps/10)}{\eps^2}
	\right).\qedhere
	\end{align*}
\end{proof}

\section{Upper bounds: learning Gaussians mixtures via compression schemes}
\SectionName{upper_bound_simpler}
The main positive results of this paper are sample complexity bounds
for learning mixtures of Gaussians (Theorems~\ref{thm:upper_bound} and \ref{thm:upper_bound_axis_aligned}).
In this section, we prove these results by describing a compression scheme for a single Gaussian and then applying the tools developed in the previous section.
To begin, we illustrate the technique by analyzing the simpler problem of 
learning mixtures of axis-aligned Gaussians in the realizable setting.
To avoid the technical issue of $\log(1/\eps)$ getting extremely small, henceforth we will assume that $\eps$ is bounded away from 1, say $\eps\leq 0.99$.

\subsection{Warm-up: mixtures of axis-aligned Gaussians, non-robustly}

In this short section, we give an illustrative use of our compression framework to prove an upper bound of $\widetilde{O}(kd/\eps^2)$ for the sample complexity of learning mixtures of $k$ axis-aligned Gaussians in the realizable setting.
The next section gives a general argument for arbitrary Gaussians in the agnostic setting.

\begin{lemma}
\label{lem:single_simple}
The class of one-dimensional Gaussians admits a $(3, O(\log(1/\eps)), 3)$ non-robust compression scheme.
\end{lemma}

\begin{proof}
Let $0<c<1<C$ be such that
\(\Pr_{X\sim \ncal(0,1)}[c < |X| < C] \geq 0.99\).
Let $\ncal(\mu, \sigma^2)$ be the target distribution.
We first show how to encode $\sigma$.
Let $g_1, g_2 \sim \ncal(\mu, \sigma^2)$.
Then $g \coloneqq \frac{1}{\sqrt{2}}(g_1 - g_2) \sim \ncal(0, \sigma^2)$.
So, with probability at least $0.99$, we have $\sigma c < |g| < \sigma C$.
Conditioned on this event, we have $\lambda\coloneqq \sigma/g \in [-1/c, 1/c]$.
We now choose $\hat{\lambda} \in \{0, \pm \eps/2C^2, \pm 2\eps/2C^2, \pm 3\eps/2C^2 \ldots, \pm 1/c\}$ satisfying $|\hat{\lambda} - \lambda| \leq \eps/4C^2$,
and we encode the standard deviation by $(g_1, g_2, \hat{\lambda})$.
The decoder then estimates 
$\hat{\sigma} \coloneqq \hat{\lambda} (g_1-g_2)/\sqrt2$.
Note that $|\hat{\sigma} - \sigma| \leq |\hat{\lambda} - \lambda| |g| \leq \sigma \eps/4C$ and that the encoding requires two sample points and 
 $O(\log(C^2/c\eps))=O(\log(1/\eps))$ bits for encoding $\hat{\lambda}$.

Now we turn to encoding $\mu$.
Let $g_3 \sim \ncal(\mu, \sigma^2)$.
Then $|g_3 - \mu| \leq C\sigma$ with probability at least $0.99$.
We will condition on this event, which implies the existence of some $\eta \in [-C, C]$ satisfying $g_3 + \sigma \eta = \mu$.
We choose $\hat{\eta} \in \{0, \pm \eps/2, \pm 2 \eps/2, \pm 3 \eps/2 \ldots, \pm C\}$ such that $|\hat{\eta} - \eta| \leq \eps/4$,
and encode the mean by $(g_3,\hat{\eta})$.
The decoder estimates $\hat{\mu} \coloneqq g_3+\hat{\sigma} \hat{\eta}$.
Again, note that $|\hat{\mu} - \mu| = |\sigma \eta - \hat{\sigma}\hat{\eta}| \leq |\sigma\eta - \sigma\hat{\eta}| + |\sigma\hat{\eta} - \hat{\sigma}\hat{\eta}| \leq \sigma \eps/2$.
Moreover, encoding the mean requires one sample point and $O(\log(1/\eps))$ bits.

To summarize, the decoder has $|\hat{\mu} - \mu| \leq \sigma \eps/2$ and $|\hat{\sigma} - \sigma| \leq \sigma \eps/2$.
Plugging these bounds into Lemma~\ref{gaussianTV1d} gives
$\|\ncal(\mu, \sigma^2)-\ncal(\hat{\mu}, \hat{\sigma}^2)\|_1 \leq \eps$, as required.
\end{proof}

\begin{remark}\label{rem:compress}
In the above argument, the samples are not ``compressed'' in the usual sense of this verb. 
Nevertheless, our formal definition of compression,
Definition~\ref{def_robustcompression}, allows $m = \tau$.
\end{remark}

To complete the proof of \Theorem{upper_bound_axis_aligned} in the realizable setting,
we note that \Lemma{single_simple} combined with \Lemma{product_compress} implies that
the class of axis-aligned Gaussians in $\R^d$ admits an
$$\big(\: O(d) ,\, O(d \log(d/\eps)) ,\, O(\log(3d)) \:\big)$$
non-robust compression scheme.
(By noting that any axis-aligned Gaussian is a product of one-dimensional Gaussians.)
Then, by \Lemma{compressmixtures},
the class of mixtures of $k$ axis-aligned Gaussians admits an
$$\big(~ O(kd) ,\,
    O(kd\log(d/\eps) + k\log(k/\eps)) ,\,
    O(k\log(k)\log(d)/\eps)
    ~\big)
$$ non-robust compression scheme.
\Theorem{compression} now implies
that the class of $k$-mixtures of axis-aligned Gaussians in $\bR^d$ can be learned using $\widetilde{O}(kd/\eps^2)$ samples in the realizable setting.

\subsection{Learning axis-aligned and general Gaussian mixtures in the agnostic setting}
We now turn to the general case and prove an upper bound of $\widetilde{O}(kd^2/\eps^2)$ for the sample complexity of learning mixtures of $k$ Gaussians in $d$ dimensions,
and
an upper bound of $\widetilde{O}(kd/\eps^2)$ for the sample complexity of learning mixtures of $k$ axis-aligned Gaussians, both in the agnostic sense.
The heart of the proof is to show that Gaussians have robust compression schemes in any dimension.

\begin{lemma}
\label{lem:coreformixtures}
For any positive integer $d$, the class $\gcal_{d,1}$ of $d$-dimensional Gaussians admits an
$$
\big(~ O(d) ,\, O(d^2 \log(d/\eps)) ,\, O(d) ~\big)
$$
$2/3$-robust compression scheme.
\end{lemma}
\begin{remark}
\label{constantcompression}
In the special case $d=1$, there also exists a $(4,1,O(1/\eps))$ $0.773$-robust compression scheme
using completely different ideas.
The proof appears in Appendix~\ref{sec:constantcompression}.
Surprisingly, this compression scheme has constant size,
as the value of $\tau + t$ is independent of $\eps$
(unlike \Lemma{coreformixtures}).
This scheme could be used instead of \Lemma{coreformixtures} for proving \Theorem{upper_bound_axis_aligned}, although it would not improve the sample complexity bound asymptotically.
\end{remark}

\begin{remark}\label{rem:r1}
The proof of \Lemma{coreformixtures}
can be amended to give an $r$-robust compression scheme for any $r<1$, which will change the
constant 12 in 
the agnostic results of \Theorem{upper_bound} and \Theorem{upper_bound_axis_aligned} to any constant larger than $9$,
at the expense of worse constants for $\tau$, $t$ and $m$. 
This is straightforward but entails additional cumbersome notation, hence we omit the details.
\end{remark}

Before proving \Lemma{coreformixtures}, we show how to combine it with previous lemmata to prove our main upper bounds.

\begin{proof}[Proof of \Theorem{upper_bound}]
Combining \Lemma{coreformixtures} and \Lemma{agnosticlearningmixture} gives
 that the class of $k$-mixtures of $d$-dimensional Gaussians is $12$-agnostically learnable
with sample complexity $\widetilde{O}(kd^2/\eps^2)$.
\end{proof}

\label{oldpage18}
\begin{proof}[Proof of \Theorem{upper_bound_axis_aligned}]
	Recall that $\acal_{d,k}$ denotes the class of $k$-mixtures of $d$-dimensional axis-aligned Gaussian distributions. 
	Observe that any distribution in $\acal_{d,1}$ is a product of $d$ one-dimensional Gaussians, hence
    applying \Lemma{coreformixtures} for the case $d=1$ and
    then \Lemma{product_compress} 
    shows that  $\acal_{d,1}$ admits 
    $\big(O(d), O(d\log(d/\eps)), O(\log(3d))\big)$
    $2/3$-robust compression.
\Lemma{agnosticlearningmixture} then implies 
that $\acal_{d,k}$ is $12$-agnostically learnable
with sample complexity $\widetilde{O}(kd/\eps^2)$, completing the proof.
\end{proof}

\subsection{Proof of \Lemma{coreformixtures}}
\label{sec:lemma53}
We first provide an overview of the proof.
For simplicity, first assume that we 
are in the non-robust, zero-mean, full-rank scenario, i.e., we want to encode the distribution $\ncal(0, \Sigma)$,
where $\Sigma \in \R^{d \times d}$ has rank $d$.
Let $v_1, \ldots, v_d$ be an orthogonal set of vectors that satisfy $\Sigma = \sum_{i=1}^d v_i v_i\transpose$. 
(Such a representation can be obtained from the eigendecomposition of $\Sigma$ and noting that all its eigenvalues are positive. Note that the vectors $v_i$ are not normalized.)
Let $g_1, \ldots, g_d$ be i.i.d.\  samples from $\ncal(0, \Sigma)$.
As $\Span\{g_1, \ldots, g_d\} = \R^d$ with probability 1, a natural idea is to find,
for each $i$, coefficients $\lambda_{i,1}, \ldots, \lambda_{i,d}$ such that
$v_i = \sum_{j=1}^d \lambda_{i,j} g_j$.
The encoder sends $g_1, \ldots, g_d$ and a discretization of the values $\{\lambda_{i,j}\}_{i,j \in [d]}$ as her message and the decoder will recover $\Sigma$ approximately.
If the discretization of each $\lambda_{i,j}$ is accomplished  with $b$ bits, then we would obtain a $(d,d^2b,d)$ compression scheme.

The main difficulty is to control the bit complexity of a suitable discretization of $\lambda_{i,j}$---to achieve the optimal sample complexity bound, the bit complexity must be $\polylog(d/\eps)$ per coefficient.
The key to achieve a suitable discretization is the following fact from geometric functional analysis (\Lemma{litvak_newnew},  cf.~\cite[Corollary~4.1]{large_enclosed_ball}):
denote the convex hull of a set $T$ is by $\conv(T)$.
Given a sequence $g_1, \ldots, g_m$ of $m = {O}(d)$ i.i.d.\ samples from $\ncal(0, \Sigma)$, with high probability we have $\frac{1}{20} \cdot \cE \subseteq \conv\{\pm g_1, \ldots, \pm g_m\}$,
where $\cE$ is the ellipsoid centered at zero with principle axes $v_1, \ldots, v_d$.
This enables us to express each $v_i$ as $\sum_{j=1}^m \lambda_{i,j} g_j$, where each $\lambda_{i,j} \in [-20,20]$;
we then discretize by building an $\poly(\eps/d)$-net of size $\poly(d/\eps)$ on this interval, achieving the desired $\polylog(d/\eps)$ per coefficient
bit complexity.

Next, suppose the mean is not zero, say we want to encode the distribution $\ncal(\mu, \Sigma)$.
Note that if $g_1, g_2 \sim \ncal(\mu, \Sigma)$ then $\frac{g_1-g_2}{\sqrt{2}} \sim \ncal(0, \Sigma)$,
and thus we can use the same compression scheme as above to encode $v_1,\dots,v_d$.
To encode $\mu$, the idea is that a \emph{single} sample $g \sim \ncal(0, \Sigma)$ is unlikely to be too far
from $\mu$.
Specifically, if $\cE$ is the ellipsoid defined by $\Sigma$, centered at zero, then, with high probability,
$\mu \in g + O(\sqrt{d}) \cdot \cE$.
Thus, we build a net of the set $g + O(\sqrt{d}) \cdot \cE$ and the encoder sends
$g$ as well as the identity of the point in the net closest to $\mu$.

We now proceed to the formal proof.
We first prove a lemma that is similar to known results in random matrix theory (cf.~\cite[Corollary~4.1]{large_enclosed_ball}) but is tailored for our purposes.
Let us denote $a \cdot B_2^d\coloneqq\setst{ y \in \bR^d }{ \norm{y}_2 \leq a }$.

\begin{lemma}
	\LemmaName{litvak_newnew}
	Suppose that $q_1, \ldots, q_{m} \in \bR^d$ are i.i.d.~samples from some distribution $Q$
	that satisfies
	\(
	\DTV{Q}{\ncal(0,I_d)} \leq 2/3.
	\)
	Let $$T \:\coloneqq\: \{\: \pm q_i \::\: \|q_i\|_2 \leq 4\sqrt{d} \:\}.$$
	For a large enough absolute constant $C$, if $m\geq Cd$ then
	\[
	\prb{\: \frac 1 {20} B_2^d \subseteq \conv (T) \:} ~\geq~ 5/6.
	\]
\end{lemma}

\begin{proof}
	Let $S^{d-1} \coloneqq \setst{ y \in \bR^d }{ \norm{y}_2=1 }$.
	Consider the following statement:
	\begin{equation}
		\label{eq:enclosed_goal2new}
		\max_{q \in T} \, \abs{\inner{y}{q}} ~\geq~ \frac{1}{20}
		\qquad\forall y \in S^{d-1}.
	\end{equation}
	We first show that \eqref{eq:enclosed_goal2new} implies that
	$\frac{1}{20} B_2^d \subseteq \conv (T)$,
	which is the event which we wish to analyze.
	To see this, suppose, for the sake of contradiction, that $\frac{1}{20} B_2^d \not\subseteq \conv (T)$, so there exists some point $z \in \frac{1}{20} B_2^d \setminus \conv (T)$. 
	By the strict hyperplane separation theorem (e.g., \cite[Example~2.20]{boyd}), there exists some direction $y \in S^{d-1}$ such that $\inner{y}{z}>\inner{y}{q}$ for all $q\in T$. Since $\|z\|_2\leq1/20$, 
	we have $1/20\geq \inner{y}{z}$. This  means
	$1/20>\displaystyle\max_{q\in T}\inner{y}{q}=\max_{q\in T}|\inner{y}{q}|$,
	contradicting \eqref{eq:enclosed_goal2new}.
	
	Next, for each $y\in S^{d-1}$, let 
	$$H_y \coloneqq \setst{ x\in\R^d }{ \|x\|_2\leq 4\sqrt{d},\, |\inner{x}{y}|\geq\frac{1}{20} },$$  and let
	$\cH \coloneqq \left\{ H_y : {y\in S^{d-1}}\right\}.$ 
	Let $U\coloneqq \set{q_1,\dots,q_m}$ and observe that
	\eqref{eq:enclosed_goal2new} is equivalent to the event
	\[
	U \cap H \neq \emptyset \qquad \forall H\in\cH.
	\]
	So, to complete the proof we need only show that this event happens with probability at least $5/6$. Recalling that $U$ is an i.i.d.\ sample of size $m$ from $Q$, by the Vapnik-Chervonenkis inequality (see, e.g., \cite[Theorem 8.3.23]{hdp-vershynin}), for some absolute constant $c$,
	\begin{equation}\label{vcineq}
		\E \sup_{H\in\cH} \left| 
		Q(H) - \frac{|U\cap H|}{|U|}
		\right| ~\leq~ c \sqrt{\vc(\cH)/m}.
	\end{equation}
	Define $p \coloneqq \inf_{H\in\cH} Q(H) $.
	We claim that, to complete the proof of the lemma, it suffices to show that
	$\vc(\cH) = O(d)$
	and
	$p=\Omega(1)$. Indeed, if these statements are true, then we can choose 
	$m = 144c^2\cdot \vc(\cH)/p^2 = O(d)$ and the right-hand-side of
	\eqref{vcineq} becomes $p/12$.
	Markov's inequality would then imply that with probability at least $5/6$,
	for all $H\in\cH$ we have
	$Q(H) - \frac{|U\cap H|}{|U|} \leq p/2$, which implies
	\[
	\frac{|U\cap H|}{|U|} \geq Q(H) - p/2 \geq p/2 > 0,
	\]
	that is, $U\cap H$ is nonempty for all $H\in \cH$.
	Next we show that
	$\vc(\cH) = O(d)$
	and
	$p=\Omega(1)$.
	
	Since restricting the domain cannot increase the VC~dimension, the VC~dimension of $\cH$ is not more than that of 
	$\left\{
	 \{x\in\R^d : |\inner{x}{y}|\geq\frac{1}{20} \} :
	 {y\in S^{d-1}}
	\right\}$.
	Every set in this family is a union of two half-spaces (corresponding to $\inner{x}{y}\geq\frac{1}{20}$ and $\inner{x}{y}\leq-\frac{1}{20}$).
	The VC~dimension of the family of half-spaces is $d+1$ (see, e.g., \cite[Corollary~4.2]{devroye_book}), so, by \cite[Lemma~3.2.3]{Blumer:1989}, the VC~dimension of the family of pairwise unions of half-spaces is bounded by $4(d+1)\log_2(6)=O(d)$. 
	
	Finally, to show that $p=\Omega(1)$, let $g \sim \ncal(0,I_d)$ and note that for any $y\in S^{d-1}$, by the union bound,
	\begin{equation}
		\prb{g \in H_y} \geq
		\prb{\|g\|_2 \leq 4\sqrt{d}}
		-\prb{|\inner{g}{y}| < 1/20}
		\geq
		(1-e^{-3})-1/10 > 0.85,\label{eq85}
	\end{equation}
	where we have used that 
	$\prb{\|g\|_2 > 4\sqrt{d}}\leq \exp(-3)$ (by Corollary~\ref{cor:chi}) and that
	$\inner{g}{y}$ is a standard normal random variable so its probability density function is bounded by 1.
	Since 
	\(
	\DTV{Q}{\ncal(0,I_d)} \leq 2/3
	\),
	from \eqref{eq85} we obtain
	$Q(H_y)\geq 0.85-2/3 > 0.18$, completing the proof of the lemma.
\end{proof}

To prove \Lemma{coreformixtures}, we need to describe a $2/3$-robust compression scheme for $d$-dimensional Gaussians.
Accordingly, we consider a target distribution $Q$ 
such that there exists a Gaussian $\ncal(\mu,\Sigma)$
satisfying $\|Q-\ncal(\mu,\Sigma)\|_1\leq 2/3$.
Recall, from \eqref{eq:TVdef}, that this implies $\DTV{Q}{\ncal(\mu,\Sigma)}\leq 1/3$.

We may assume that $\Sigma$ has full rank, since there is a reduction from the case of rank-deficient $\Sigma$:
if the rank of $\Sigma$ is $\rho<d$,
then any $X \sim \ncal(\mu,\Sigma)$ lies in some affine subspace $\mathcal S$ of dimension $\rho$.
Thus, since $\DTV{Q}{\ncal(\mu,\Sigma)}\leq 1/3$,
any $X \sim Q$ lies in $\mathcal S$ with probability at least $2/3$.
By the Chernoff bound (Theorem~\ref{thm:chernoff}),
with high probability, after seeing $10d$ samples from $Q$, at least $\rho+1$ points
from $\mathcal S$  will appear in the sample.
We encode $\mathcal S$ using these samples, and for the rest of the process we work in this affine
subspace and discard outside points.

\paragraph{Definition of $v_1,\ldots,v_d,$ and $\Psi$.}
Since $\Sigma$ has full rank,
there exist an orthogonal set of vectors $v_1,\ldots,v_d$ 
satisfying $\Sigma = \sum_{i=1}^d v_i v_i\transpose$.
For convenience, let 
$\Psi = \Sigma^{1/2}$ be the unique positive definite square root of $\Sigma$.
Observe that
\begin{equation}
\label{eq:PsiAndSigma}
\Psi ~=~ \sum_{i=1}^d \frac{v_i v_i\transpose}{ \norm{v_i}_2 },
    \qquad\qquad
\Sigma^{-1} ~=~ \sum_{i=1}^d \frac{v_i v_i\transpose }{ \norm{v_i}_2^4 },
    \qquad\text{and}\qquad
\Psi^{-1} ~=~ \sum_{i=1}^d \frac{v_i v_i\transpose }{ \norm{v_i}_2^3 }.
\end{equation}

We next show how to encode the mean and the eigenvectors.

\begin{lemma}
\LemmaName{encodinggaussians}
Let $C$ be a sufficiently large absolute constant, and let $S$ be an i.i.d.\ sample of size
$2m=2Cd$ from $Q$, where
$\DTV{Q}{\ncal(\mu,\Sigma)}\leq 1/3$.
Then, with probability at least $2/3$,
one can encode vectors $\vhat_1,\dots,\vhat_d,\muhat\in\R^d$ satisfying
\begin{align}
\label{eq:EncodingGaussians1}
\|\Psi^{-1} (\vhat_j - v_j) \|_2 &~\leq~ \eps/24d^2 \qquad \forall j\in[d], \text{ and}
    \\
\label{eq:EncodingGaussians2}
\|\Psi^{-1} (\muhat - \mu) \|_2  &~\leq~ \eps/2,
\end{align}
using ${O}(d^2  \log(d/\eps))$ bits and the points in $S$.
\end{lemma}

\begin{proof}
The samples in $S$ will be denoted $X_1,\dots, X_{2m}$.

\paragraph{Encoding $\widehat{v}_1,\dots,\widehat{v}_d$.}
Define the ``standardized'' samples
$$Y_i ~\coloneqq~ \frac{1}{\sqrt2}\Psi^{-1} (X_{2i}-X_{2i-1})
    \qquad\forall i\in[m].$$
We claim that, for each $i\in[m]$,
$Y_i$ has TV distance at most $2/3$ from $\ncal(0,I)$. 
Indeed,    
If $X_{2i}$ and $X_{2i-1}$ had distribution $\ncal(\mu,\Sigma)$,
then $Y_i$ would have distribution $\ncal(0,I)$.
Instead, both $X_{2i}$ and $X_{2i-1}$ have TV distance at most $1/3$ from $\ncal(\mu,\Sigma)$, so, by Lemma~\ref{prop:TV}
and Proposition~\ref{fact:dtv}, $Y_i$ has TV distance at most $2/3$ from $\ncal(0,I)$.
Define the event 
$$
\cE \coloneqq
    \set{ \frac{1}{C} B_2^d \:\subseteq\: \conv \setst{ \pm Y_i }{ i \in \cI } },
    \text{ where }
\cI \coloneqq
    \setst{ i \in [m] }{ \|Y_i\|_2 \leq 4 \sqrt{d} }.
$$
Since $C$ is large, and in particular $C\geq20$, by \Lemma{litvak_newnew} we have $\prb{ \cE } \geq 5/6$.
Our encoding will assume  $\cE$ happens.

Fix some $j\in[d]$.
Referring to \eqref{eq:PsiAndSigma}, we see that
$\Psi^{-1} v_j = v_j/\|v_j\|_2$ has unit norm.
Since $\cE$ occurs, we can write
$$ \frac{\Psi^{-1} v_j}{C} \:=\: \sum_{i \in [m]} \theta_{j,i} Y_i$$
for some vector $\theta_j \in [-1,1]^m$ supported on $\cI$.
Applying $\Psi$ to both sides, we obtain
$$v_j = \frac {C}{\sqrt2} \sum_{i \in \cI} \theta_{j,i} (X_{2i}-X_{2i-1}).$$

To discretize $\theta_j$, consider the natural $(\eps/96 C m d^3)$-net for $[-1,1]^{m}$ in the
$\ell_{\infty}$ distance formed by the Cartesian product of one-dimensional nets,
namely,
$\{ n \eps/48 C m d^3 : 
- \lfloor 48 C m d^3 / \eps \rfloor \leq n \leq \lfloor 48 C m d^3 / \eps\rfloor \}^m$.
This net has size at most $(97 C md^3/\eps)^{m}$.
Recalling that $m = O(d)$, it follows that
any element of the net can be described using $O(d \log(d/\eps))$ bits.
Let $\widehat{\theta}_{j}$ be an element in the net that is closest to
$\theta_{j}$.
Since each $\theta_j$ is supported on $\cI$
and due to the structure of the net,
we may choose $\widehat{\theta}_{j}$ also to be supported on $\cI$.
Define
$$
\vhat_j \:\coloneqq\:
    \frac{C}{\sqrt 2} \sum_{i \in \cI}
        \widehat{\theta}_{j,i} (X_{2i}-X_{2i-1}).
$$
The vectors $\widehat{v}_1,\ldots,\widehat{v}_d$ are encoded using the points in $S$ and $\widehat{\theta}_1,\ldots,\widehat{\theta}_d$.
Encoding each $\widehat{\theta}_i$ requires $O(d \log(2d/\epsilon))$ bits,
so encoding all of them requires $O(d^2 \log(2d/\eps))$ bits.

The error of this encoding is
\begin{align}
\nonumber
\left\|\Psi^{-1} (\vhat_j - v_j) \right\|_2 
    &~=~
    \frac{C}{\sqrt 2}\left\| \sum_{i \in \cI}
    (\theta_{j,i}-\widehat{\theta}_{j,i}) \Psi^{-1} (X_{2i}-X_{2i-1})\right\|_2
\\\nonumber
    &~\leq~
    \frac{C}{\sqrt 2} \card{\cI}
        \big(\max_{i \in \cI} |\theta_{j,i}-\widehat{\theta}_{j,i}| \big)
        \big(\max_{i\in \cI} \sqrt 2 \|Y_i\|_2 \big)
\intertext{By the definition of $\widehat{\theta}_j$, we have
$\norm{\widehat{\theta}_j - \theta_j}_\infty \leq \eps/96 C m d^3$.
By the definition of $\cI$, we have $\norm{Y_i}_2 \leq 4 \sqrt{d}$,
leading to the bound
}
    \left\|\Psi^{-1} (\vhat_j - v_j) \right\|_2&~\leq~ \frac{C}{\sqrt 2} m 
        \left( \frac{\eps}{96 C m d^3} \right) 
        \big(4 \sqrt 2 \sqrt d \big)
\label{eq:PsiVError}
    ~\leq~ \frac{\eps}{24d^2},
\end{align}
establishing \eqref{eq:EncodingGaussians1}.

\paragraph{Encoding $\muhat$.}
Let $Z_i \coloneqq \Psi^{-1}(X_i-\mu)$ and observe that $Z_i$ has a distribution with TV distance at most $1/3$ to $\ncal(0,I)$.
Define the event
$$
\cE' \:\coloneqq\: \set{\, \min\{\|Z_1\|_2 , \|Z_2\|_2\} \leq 4 \sqrt d \,}.
$$
Corollary~\ref{cor:chi} implies that
$$\Pr [\|Z_i\|_2 \geq 4\sqrt d ] \:\leq\: \exp(-3)+1/3 \:<\: \sqrt{1/6}.$$
Thus $\prb{ \cE' } \geq 5/6$.
Our encoding will assume that  $\cE'$ happens.

By symmetry, assume that $\|Z_1\|_2 \leq 4 \sqrt d$,
and suppose that $Z_1 = \sum_{j\in[d]} \lambda_j v_j / \norm{v_j}_2$.
Thus we have $(\lambda_1,\dots,\lambda_d)\in4\sqrt{d}B_2^d$.
Also, by the definition of $\Psi$, we have
$$
    \mu
        = X_1 - \Psi Z_1
        = X_1 - \left(\sum_{i=1}^d \frac{v_i v_i\transpose}{ \norm{v_i}_2 }\right) \left(\sum_{j=1}^{d} \lambda_j \frac{v_j}{\norm{v_j}_2}\right)
        = X_1 - \sum_{j\in[d]} \lambda_j v_j.
$$
To discretize $(\lambda_1,\dots,\lambda_d)$,
consider an $(\eps/3d)$-net for $4\sqrt{d} B_2^d$ of size $O(d^{1.5}/\eps)^d$
(see \Lemma{epsnetL2}).
Let $\hat\lambda=(\widehat{\lambda}_1,\dots,\widehat{\lambda}_d)$ be the closest element to $(\lambda_1,\dots,\lambda_d)$ in this net.
The encoding is
$$\muhat \:\coloneqq\: X_1 -
    \sum_{j\in[d]} \widehat{\lambda_j} \widehat{v_j}.
$$
The error of this encoding is
\begin{align*}
\left\|\Psi^{-1}(\mu-\muhat)\right\|_2
    &~=~ \left\|\sum_{j \in [d]} \Psi^{-1} 
        (\lambda_j v_j -\widehat{\lambda}_j  \widehat{v}_j)\right\|_2
\\
    &~\leq~ \sum_{j \in [d]} \left\|
        \widehat{\lambda_j} (\Psi^{-1}v_j - \Psi^{-1}\widehat{v}_j) 
      + (\lambda_j-\widehat{\lambda}_j) \Psi^{-1}v_j \right\|_2
\\
    &~\leq~ d \cdot \max_{j \in [d]} \left\{
          \left| \widehat{\lambda_j}\right| \cdot
          \left\|\Psi^{-1}v_j - \Psi^{-1}\widehat{v}_j\right\|_2
        + \left|\lambda_j-\widehat{\lambda}_j\right| \cdot
          \left\|\Psi^{-1}v_j\right\|_2 \right\}.
\intertext{By the definition of $\hat\lambda$,
we have $\max_j |\widehat{\lambda}_j| \leq 4 \sqrt{d}$ and
$\max_j |\lambda_j-\widehat{\lambda}_j|\leq\eps/3d$.
From \eqref{eq:PsiAndSigma} we have $\norm{\Psi^{-1} v_j}_2 = 1$.
Lastly, using \eqref{eq:PsiVError} we have 
$ \norm{\Psi^{-1} (\vhat_j - v_j)}_2 \leq \eps/24 d^2$,
leading to the bound
}
    \left\|\Psi^{-1}(\mu-\muhat)\right\|_2&~\leq~ d \cdot \left( 4\sqrt{d} \cdot \frac{\eps}{24d^2}
        + \frac{\eps}{3d} \cdot 1 \right)
        ~\leq~ \eps/2,
\end{align*}
establishing \eqref{eq:EncodingGaussians2}.
The encoding for $\muhat$ consists of
$X_1$, the already encoded $\hat v_1,\dots,\hat v_d$, and  $\hat\lambda$.
Since $\widehat{\lambda}$ comes from a net of size
$O(d^{1.5}/\eps)^d$, the additional number of required bits to
encode $\muhat$ is $O(d \log (d/\eps))$.
Finally, note that all encodings succeed so long as both $\cE$ and $\cE'$ occur,
which happens with probability at least $2/3$.
\end{proof}

\Lemma{coreformixtures} now follows immediately from the following lemma.

\begin{lemma}\LemmaName{gaussianstv}
Suppose that the vectors $\vhat_1,\ldots,\vhat_d, \muhat \in \bR^d$ satisfy
\begin{align}
\label{eq:gaussianstv1}
\|\Psi^{-1} (\vhat_j - v_j) \|_2 &~\leq~ \rho \leq 1/6d 
    \qquad\forall j\in[d], \text{ and} \\
\label{eq:gaussianstv2}
\|\Psi^{-1} (\muhat - \mu) \|_2  &~\leq~ \zeta.
\end{align}
Then 
$$\displaystyle
\DTV{\: \ncal\big(\mu,   \smallsum{i \in [d]}{} v_i     v_i\transpose\big)}
    {\: \ncal\big(\muhat,\smallsum{i \in [d]}{} \vhat_i \vhat_i\transpose\big) \:}
~\leq~ \frac{\sqrt{9d^3\rho^2+\zeta^2}}{2}.
$$
\end{lemma}
\begin{proof}
Recall the definition of log-det divergence: $\DLD{A}{B} \coloneqq \trace(B^{-1}A - I) - \log \det( B^{-1} A )$.
Define $\widehat{\Sigma} \coloneqq \sum_i \vhat_i \vhat_i\transpose$.
We will show that
\begin{equation}
\label{ldclose}
\DLD{\widehat{\Sigma}}{\Sigma} ~\leq~ 9 d^3 \rho^2.
\end{equation}
If this is true, then \Lemma{tvupbound} and \eqref{eq:gaussianstv2} yield
$$
\DTV{\ncal(\mu,\Sigma)}{\ncal(\muhat,\widehat{\Sigma})}^2
    ~\leq~
        \frac{1}{4} \Big(\DLD{\widehat{\Sigma}}{\Sigma} +
        (\mu-\muhat) \transpose \Sigma^{-1} (\mu-\muhat)\Big)
    ~\leq~ \frac{1}{4}(9d^3\rho^2+\zeta^2),     
$$
which completes the proof of the lemma.

Thus, we need only prove \eqref{ldclose}.
Recall from \eqref{eq:PsiAndSigma} that $\Psi = \Sigma^{1/2}$ is positive definite.
Define $B \coloneqq \Sigma^{-1/2}\hat\Sigma\Sigma^{-1/2}=\Psi^{-1} \hat\Sigma\Psi.$
We will show that $\|B-I\|_s\leq 3d\rho$.
Then \Lemma{SpectralToDLD} will imply that
$\DLD{\widehat{\Sigma}}{\Sigma} \leq 9 d^3 \rho^2$,
which establishes \eqref{ldclose}.

To complete the proof, note that by the triangle inequality
\begin{align*}
\|B - I\|_s ~=~
\norm{\sum_{i=1}^d
    (\Psi^{-1} \vhat_i \vhat_i\transpose \Psi^{-1} - \Psi^{-1} v_i v_i\transpose \Psi^{-1}) }_s
&~\leq~
\sum_{i=1}^d
\norm{ \Psi^{-1} \vhat_i \vhat_i\transpose \Psi^{-1} - \Psi^{-1} v_i v_i\transpose \Psi^{-1} }_s
\\&~=~
\sum_{i=1}^d
\|x_i x_i \transpose - y_i y_i \transpose\|_s,
\end{align*}
with $x_i \coloneqq \Psi^{-1} \vhat_i$
and $y_i \coloneqq \Psi^{-1} v_i$. From~\eqref{eq:PsiAndSigma} we have $\|y_i\|_2=\|\Psi^{-1} v_i\|_2=1$.
By~\eqref{eq:gaussianstv1}, $\|x_i-y_i\|_2\leq\rho$.
By applying the following simple lemma, we conclude that $\norm{B-I}_s \leq 3 d \rho$, completing the proof of the lemma.
\end{proof}

\begin{lemma}
\LemmaName{EuclideantoOperator}
Let $x$ and $y$ be vectors satisfying $\|y\|_2=1$ and $\|x-y\|_2\leq \eps \leq 1$. Then we have
$\|xx\transpose-yy\transpose\|_s\leq 3\eps$.
\end{lemma}
\begin{proof}
Suppose $x = y + z$ with $\|z\|_2\leq \eps$.
Then, 
\[
\|xx\transpose-yy\transpose\|_s
=
\|y z\transpose + z y\transpose + z z\transpose\|_s
\leq 
\|y z\transpose\|_s + \|z y\transpose\|_s + \|z z\transpose\|_s
\leq \eps + \eps + \eps^2 \leq 3\eps.
\]
The first inequality is the triangle inequality for the operator norm.
The second inequality uses the facts that $\|AB\|_s \leq \|A\|_s \cdot \|B\|_s$ for any two size-compatible matrices $A$ and $B$ (see, e.g., \cite[Fact~7(c) in Section~24.4]{linearalgebrahandbook}) and that, for any vector $v$, the operator norm of $v$ as a matrix coincides with its Euclidean norm as a vector.
\end{proof}

\section{The lower bound for Gaussians and their mixtures}
\SectionName{lower_bound}
In this section, we establish a lower bound 
of $\widetilde{\Omega}(d^2/\eps^2)$ for learning a single Gaussian, and then lift it to obtain a lower bound 
of $\widetilde{\Omega}(kd^2/\eps^2)$ for learning mixtures of $k$ Gaussians in $d$ dimensions.
Both of our lower bounds are for the realizable setting and thus also hold in the agnostic setting.

The high-level strategy for our proof is similar to that adopted in earlier work for mixtures of spherical Gaussians~\cite{spherical}.
The idea is to create a large number of distributions that are pairwise close in KL divergence (roughly $\eps^2$) but pairwise far in TV distance (roughly $\eps$).
An application of the following lemma will then yield the desired sample complexity lower bound.

\begin{lemma}
\label{lem:fano_conc}
Let $\kappa:\R\rightarrow\R$ be a function and let $\fcal$ be a class of distributions such that, for all small enough $\eps>0$, there exist distributions $f_1,\dots,f_M \in \fcal$ with
\[
\DKL{f_i}{f_j} \:\leq\: \kappa(\eps) \quad\textnormal{ and }\quad
\DTV{f_i}{f_j} \:>\: 2\eps
\qquad
\forall i\neq j\in[M].\]	
Then any method that learns $\fcal$ to within total variation distance $\eps$ with success probability at least $2/3$ has sample complexity 
$\Omega\left( \frac{\log M}{\kappa(\eps) \log(1/\eps)} \right)$.
\end{lemma}

The preceding lemma is a straightforward consequence of the following result, which is a generalized form of Fano's inequality in information theory~\cite[Theorem~2.10.1]{cover}. 

\begin{lemma}[Generalized Fano's inequality \protect{\cite[Lemma 3]{bin_yu}}]\LemmaName{Fano}
	Let the distributions $f_1,\dots,f_M$ satisfy
	\[
	\DKL{f_i}{f_j} \:\leq\: \beta \quad\textnormal{ and }\quad
	\|f_i - f_j\|_1 \:>\: \alpha
	\qquad
	\forall i\neq j\in[M].\]
	Consider any density estimation method that
	has an explicit description of $f_1,\ldots,f_M$, receives $n$ i.i.d.\ samples from some $f_i$ without knowing $i$, then outputs an estimate $\widehat{f}$ for $f_i$.
	For each $i$, define $e_i \coloneqq \E\|f_i-\widehat{f}\|_1$ for the case in which the method receives samples from $f_i$.
	Then
	\[ \max_i e_i ~\geq~ \alpha \cdot \left( \frac{\log M - n\beta + \log 2}{2 \log M}\right).\]
\end{lemma}

\begin{proof}[Proof of \Lemma{fano_conc}]
Consider a distribution learning method $\acal$ that takes $m(\eps)$ samples and learns $\fcal$ to within total variation distance $\eps$ with success probability at least $2/3$.
Our goal is to prove 
$m(\eps) = \Omega(\log M / \kappa(\eps) \log(1/\eps))$.
Consider $M$ distributions $f_1,\dots,f_M$ satisfying the hypotheses.
We will design an estimator $\hat f$ for the finite class $\{f_1,\dots,f_M\}$ with sample complexity $n=km(\eps)$
and then apply~\Lemma{Fano} to this estimator.

Suppose we are given $km(\eps)$ samples from the unknown $f_j$.
We partition the sample into $k$ equal parts, and run $\acal$ on each part.
This gives us $k$ outputs $g_1,\dots,g_k$.
If some $f_{\ell}$ is within TV distance $\eps$ of more than half of these outputs, 
then we output $\hat f = f_{\ell}$;
otherwise, we output $\widehat{f}=f_1$.
Note that, since $f_1,\dots,f_M$ are $2\eps$-separated in the TV distance, there will be no ambiguity.
We now analyze the error of this estimator.

By the guarantee of $\acal$,
for each $i\in[k]$, with probability at least $2/3$ we have $\DTV{g_i}{f_j}\leq \eps$.
Let $\cE$ be the event that more than half of the $g_i$ satisfy $\DTV{g_i}{f_j}\leq \eps$.  
When $\cE$ happens, the estimate will be $\widehat{f}=f_j$, i.e., $\|f_j-\widehat{f}\|_1=0$.
On the other hand, by the Chernoff bound (Theorem~\ref{thm:chernoff}),
$\prob[\cE] \geq 1 - \exp(-\Omega(k))$.
Thus, the expected error is
$$e_j
~=~ \E\|f_j-\widehat{f}\|_1
~\leq~ \prob[\cE^c] \cdot 2
~\leq~ \exp(-\Omega(k))
\qquad\forall j \in [M].
$$
The total number of samples is $n=k m(\eps)$, so \Lemma{Fano} gives
\begin{align*}
	2\eps \cdot \left(\frac{\log M - (k m(\eps)) \kappa(\eps) + \log 2}{2 \log M}\right) ~\leq~ \exp(-\Omega(k)).
\end{align*}
Choosing $k = \Theta(\log (1/\eps))$ and
rearranging gives $m(\eps) = \Omega(\log M / \kappa(\eps) \log(1/\eps))$, as required.
\end{proof}

\subsection{The lower bound for learning a single Gaussian}

Our lower bound for learning a single Gaussian is the following theorem.

\begin{theorem}
\label{thm:lb}
Any algorithm that learns the class of $d$-dimensional Gaussians in the realizable setting within total variation distance $\eps$  and with success probability  $2/3$ has sample complexity
$\Omega\left( \frac{d^2}{\eps^2 \log(1/\eps)} \right)$.
\end{theorem}

\begin{proof}
To apply \Lemma{fano_conc}, we must create a large number $M$ of Gaussian distributions whose pairwise KL divergences are at most $\kappa(\eps)$ and whose pairwise TV distances are at least $2 \eps$.
We will accomplish this with parameters $M = 2^{\Omega(d^2)}$ and $\kappa = O(\eps^2)$, and \Lemma{fano_conc} yields the desired lower bound.

The existence of such $M$ distributions is shown using the probabilistic method.
Let us fix the parameters
$r = 9$ and $\lambda =  \Theta(\eps d^{-1/2} )$.
Assume, for simplicity, that $d/r$ is an integer.
For each $a \in [M]$, we pick $U_a$ to be a uniformly random $d \times d/r$ matrix with orthonormal columns. That is, 
$U_a$ consists of the first $d/r$ columns of a uniformly random $d\times d$ orthogonal matrix (a random matrix with Haar measure  on the group of $d\times d$ orthogonal matrices).
From this, we create the distribution
$$
f_a ~\coloneqq~ \ncal(0, \Sigma_a),
\qquad\text{where}\qquad
\Sigma_a = I_d + \lambda U_a U_a\transpose
\qquad\forall a \in [M].
$$
To apply \Lemma{fano_conc}, we must analyze the pairwise KL divergences and TV distances between $f_1,\ldots,f_M$.
Observe that, by the construction of $U_a$, for any fixed $d\times d$ orthogonal matrix $W$, the matrices $U_a$ and $WU_a$ have the same distribution, that is, $U_a$ is rotationally invariant (see, e.g., \cite[Section~1.2]{haar}).

\paragraph{Bounding the KL divergences.}
This analysis is straightforward since there is a closed-form expression for the KL divergence between Gaussians.
First, observe that all $\Sigma_a$ have the same spectrum:
there are $d/r$ eigenvalues equal to $1+\lambda$ and the remaining eigenvalues equal $1$.
Consequently, 
\begin{equation}
\label{eq:logdetzero}
\log \det (\Sigma_b \Sigma_a^{-1})
~=~ \log( \det \Sigma_b \cdot \det \Sigma_a^{-1})
~=~ 0.
\end{equation}
Next observe that
\begin{equation}
\label{eq:sigmainverse}
\Sigma_a^{-1}
~=~ I - \frac{\lambda}{1+\lambda} U_a U_a\transpose,
\end{equation}
which can be verified simply by multiplying by $\Sigma_a$.
Thus
\begin{align}
    2 \cdot \DKL{f_a}{f_b}
&~=~
    \trace(\Sigma_a^{-1}\Sigma_b - I)&
    \quad\text{(by \eqref{eq:logdetzero} and Lemma~\ref{lem:tvupbound})}
\notag\\&~=~
    \trace\Bigg( \Big(I - \frac{\lambda}{1+\lambda} U_a U_a\transpose\Big) (I + \lambda U_b U_b\transpose) - I \Bigg)
    &\quad\text{(by \eqref{eq:sigmainverse})}
\notag\\&~=~ 
    \trace\Bigg( \lambda U_b U_b\transpose - \frac{\lambda}{1+\lambda} U_a U_a\transpose  - \frac{\lambda^2}{1+\lambda}U_a U_a\transpose U_b U_b\transpose \Bigg)
\notag\\&~=~
    \lambda \cdot \frac{d}{r} \,-\, \frac{\lambda}{1+\lambda} \cdot \frac{d}{r} \,-\, 
    \frac{\lambda^2}{1+\lambda} \cdot \|U_a\transpose U_b\|_F^2
\notag\\&~\leq~
\lambda \cdot \frac{d}{r} \,-\, \frac{\lambda}{1+\lambda} \cdot \frac{d}{r} =
    \frac{\lambda^2 d}{(1+\lambda)r}
    ~\leq~ \frac{\lambda^2 d }{ r }
    ~=~O(\eps^2).\notag
\end{align}
This bound holds with probability $1$.

\paragraph{Bounding the TV distances.}
The remaining step is to show that $\DTV{f_a}{ f_b} = \Omega(\eps)$ for all $a \neq b$.
Then, by scaling $\eps$ by a constant factor,  \Lemma{fano_conc} completes the proof.

First we provide some intuition on why such a bound should hold.
Let $S_a$ be the subspace spanned by the columns of $U_a$.
A vector drawn from $\ncal(0, \Sigma_a)$ has a slightly larger projection onto $S_a$ than a vector drawn from $\ncal(0, \Sigma_b)$.
This reveals an event that has slightly higher probability under the former distribution than under the latter.
Recalling the definition of the TV distance as a supremum over events (see \eqref{eq:TVdef}), such an argument gives the desired lower bound on the TV distances up to logarithmic factors.
This approach was used in a preliminary version of this work~\cite[version 2]{gaussian_mixture_tr}, but it is fairly technical.

Here we use a simpler argument, formulated as \Lemma{frob_to_tv_lb} below, 
which shows that if $\|U_a\transpose U_b\|_F^2 \leq d/2r$ then $\DTV{f_a }{ f_b } = \Omega(\eps)$.
The condition $\|U_a\transpose U_b\|_F^2 \leq d/2r$ means that the columns of $U_a$ are nearly pairwise orthogonal to the columns of $U_b$, which intuitively should hold since $U_a$ and $U_b$ are chosen randomly.
This is formalized in \Lemma{UaUb_frob_bound} below, which states that, with positive probability, $\|U_a\transpose U_b \|_F^2 \leq d/2r$ for all $a\neq b$.
Then \Lemma{frob_to_tv_lb} implies that, by our choice of parameters, for all $a \neq b$ we have $\DTV{f_a}{ f_b} = \Omega\left( \min\{1,\lambda \sqrt{d/r} \}\right) = \Omega(\eps)$, completing the proof.
\end{proof}

The main technical lemma underlying our lower bound is \Lemma{UaUb_frob_bound}.

\begin{lemma}
\label{lem:UaUb_frob_bound}
Suppose $d\geq r \geq 9$.
There exists $M = 2^{\Omega(d^2/r^2)}$ such that the following holds.
Construct $M$ independent $d\times d/r$ matrices $\{U_a\}_{a=1}^{M}$ such that the columns of each $U_a$ are the first $d/r$ columns of a uniformly random $d\times d$ orthogonal matrix. 
Then, with positive probability, we have $\|U_a\transpose U_b \|_F^2 \leq d/2r$ for all distinct pairs $a, b\in[M]$.
\end{lemma}

For its proof, we will need a basic lemma about Gaussian matrices.

\begin{lemma}\LemmaName{lem:rotation_invariance}
	Let $G$ be a random matrix with i.i.d.\ Gaussian entries, and let $G= U \Sigma V\transpose$ be its singular value decomposition.
	(To make the decomposition unique, we assume that $0<\Sigma_{11}<\Sigma_{22}<\cdots$.)
	Then the matrices $U, \Sigma, V$ are mutually independent.
	Moreover, $U$ is a uniformly random orthogonal matrix.
\end{lemma}
\begin{proof}
	For any orthogonal matrix $A$, 
	by rotational invariance of the Gaussian distribution (see, e.g., \cite[Proposition~3.3.2]{hdp-vershynin}),
	we have $AG \equidist G$. Thus, letting $W,X$ be size-compatible uniformly random orthogonal matrices (i.e., having  Haar measure on the set of orthogonal matrices of appropriate sizes), independent of each other and of $G$, we have
	$G \equidist W G X\transpose
	= (WU) \Sigma (XV)\transpose$, where $WU, \Sigma$, and $XV$ are mutually independent, and $WU$ is a uniformly random orthogonal matrix, as required.
	(We have used the fact that, by the definition of Haar measure, e.g., \cite[Section~1.2]{haar}, if $W,X$ are independent uniformly random orthogonal matrices and $U,V$ are orthogonal matrices with compatible dimensions such that the products
$WU$ and $XV$ are well-defined, then $WU$ and $X V$ are also independent uniformly random orthogonal matrices.)
\end{proof}

\begin{proof}[Proof of Lemma~\ref{lem:UaUb_frob_bound}]
We will show that, for any two matrices $U_a$ and $U_b$ constructed independently as described,
with probability $1-O(\exp(-\Omega(d^2/r^2)))$ we have
$\|U_a\transpose U_b \|_F^2 \leq d/2r$.
The lemma then follows from the union bound.

Let $s=d/r$ and fix $a, b \in [M]$ with $a \neq b$.
By the rotational invariance of the distribution of $U_b$, we may assume without loss of generality that 
$U_a = \left[\begin{smallmatrix} I \\ 0 \end{smallmatrix}\right]$.
Thus $\|U_a\transpose U_b\|_F^2 \equidist \|U^{(s)}\|_F^2$,
where $U^{(s)}$ is an $s \times s$ principal submatrix of a uniformly random $d\times d$ orthogonal matrix $U$. (Alternatively, the columns of $U^{(s)}$ are the first $s$ coordinates of $s$ orthonormal vectors in $\R^d$ chosen uniformly at random.)
Hence, it suffices to show that $\|U^{(s)}\|_F^2 \leq d/2r$ with probability at least $1-O(\exp(-\Omega(d^2/r^2)))$.
The main difficulty is that $U^{(s)}$ does not have independent entries due to the orthonormality,
but intuitively it should behave similarly to a matrix with independent Gaussian entries.

Before proceeding, we review some useful facts about Gaussian matrices.
Let $G \in \R^{d \times s}$ be a Gaussian matrix with i.i.d.\ $\ncal(0, 1/d)$ entries.
Let $G = U_G \Sigma_G V_G\transpose$ be its singular value decomposition, where $U_G \in \R^{d \times s}$, $\Sigma_G, V_G \in \R^{s \times s}$, and the diagonal entries of $\Sigma_G$ are sorted ascendingly.
By \Lemma{lem:rotation_invariance},  $U_G$ is a uniformly random orthogonal matrix, so the top $s$ rows of $U_G$, denoted by $U_G^{(s)}$,
has the same distribution as $U^{(s)}$.
Let $G^{(s)}$ denote the top $s$ rows of $G$; then, $G^{(s)} = U_G^{(s)}\Sigma_G V_G\transpose$.
Moreover, by \Lemma{lem:rotation_invariance} again, $U_G$ is independent of $\Sigma_G,V_G$.

Let
$\sigma_{\min}(\cdot)$ and $\sigma_{\max}(\cdot)$ denote the smallest and largest singular values of a matrix, respectively.
Note that
$\sigma_{\min}(\Sigma_G)=\sigma_{\min}(G)$.
Theorem~2.13 in \cite{singularvalueconcentration} states that, for any $t>0$,
\begin{equation}
\prb{\sigma_{\min}(\Sigma_G)\leq 1 - 1/\sqrt{r} - t} \leq \exp(-t^2 d).
\label{eq:gordonnew}
\end{equation}

Finally, we will use that for any two size-compatible matrices $A,B$, we have \begin{equation}
\max\{ 
\sigma_{\min}(A) \|B\|_F,
\sigma_{\min}(B) \|A\|_F
\} \le
\|A B\|_F
\leq
\min\{ 
\sigma_{\max}(A) \|B\|_F,
\sigma_{\max}(B) \|A\|_F
\},\label{minF}
\end{equation}
see, e.g., \cite[Fact~7(c) in Section~24.4]{linearalgebrahandbook}.

We now proceed with the rest of the proof.
Recall that our goal is to show
that $\|U^{(s)}\|_F^2 \leq d/2r$ with probability at least $1-O(\exp(-\Omega(d^2/r^2)))$,
where $U^{(s)}$ is the $s \times s$ principal submatrix of a uniformly random orthogonal $d\times d$ matrix $U$.

The matrix $U$ is naturally related to $G$. Similarly, the matrix $U^{(s)}$ is naturally related to the Gaussian matrix $G^{(s)} \in \R^{s\times s}$.
More precisely, since $U_G$ is independent of $\Sigma_G, V_G$, we have
\begin{equation}
\label{eq:Gdr}
G^{(s)} ~=~ U_G^{(s)} \Sigma_G V_G \transpose ~\equidist~ U^{(s)} \Sigma_G V_G \transpose.
\end{equation}
Observe that
$\E \|G^{(s)}\|_F^2 = s^2/d = d/r^2$, so our goal is to show that $\|U^{(s)}\|_F^2$ is unlikely to exceed this by a multiplicative factor of $r/2$.

By the definition of the singular value decomposition, $V_G$ is orthogonal, so all its singular values are 1, hence using~\eqref{minF} we get
$\|U^{(s)} \Sigma_G V_G \transpose\|_F=\|U^{(s)} \Sigma_G \transpose\|_F$.
The Frobenius norms $\|G^{(s)}\|_F$ and $\|U^{(s)}\|_F$ can be related as
\begin{align}
\|G^{(s)}\|_F
\equidist \|U^{(s)} \Sigma_G V_G \transpose\|_F = \|U^{(s)} \Sigma_G \|_F  \geq \sigma_{\min}(\Sigma_G) \|U^{(s)}\|_F, \notag
\end{align}
the first equality is by \eqref{eq:Gdr} and the inequality follows from~\eqref{minF}.

Since $\Sigma_G$ and $U^{(s)}$ are independent, for any real $\alpha$ and $\beta$ we have
\begin{align*}
	\prb{\|G^{(s)}\|_F^2 \geq \alpha \beta}
& \geq 
\prb{\sigma_{\min}(\Sigma_G)^2 \geq \alpha \text{ and }\|U^{(s)}\|_F^2 \geq \beta}\\&=
\prb{\sigma_{\min}(\Sigma_G) \geq \sqrt \alpha}\prb{\|U^{(s)}\|_F^2 \geq \beta}.
\end{align*}
In particular, setting $\alpha = 1/3$ and $\beta = d/2r$ gives
\[
\prb{\|U^{(s)}\|_F^2 \geq d/2r}
\leq \frac{\prb{\|G^{(s)}\|_F^2 \geq d/6r}}
{\prb{\sigma_{\min}(\Sigma_G) \geq 1/\sqrt 3}}.
\]
For the numerator, since $d \|G^{(s)}\|_F^2 \equidist \chi_{s^2}$ we have
\[\prb{\|G^{(s)}\|_F^2 \geq d/6r}
= \prb{\chi_{s^2} \geq d^2/6r}
\leq \prb{\chi_{s^2} \geq 3s^2/2}
\leq \exp(-s^2/25) = \exp(-\Omega(d^2/r^2)),
\]
where we have used $r\geq9$ for the first inequality and \Lemma{ChiSquaredTail} for the second inequality.
For the denominator, setting $t = 1-1/\sqrt{r}-1/\sqrt{3}>0.08$ in~\eqref{eq:gordonnew} gives
\[
{\prb{\sigma_{\min}(\Sigma_G) \geq 1/\sqrt 3}} \geq 1 - \exp(-t^2d) > 0.006,
\]
hence
$\prb{\|U^{(s)}\|_F^2 \geq d/2r} = O\left(\exp\left({-\Omega(d^2/r^2)}\right)\right)$,
completing the proof.
\end{proof}

\begin{lemma}
\label{lem:frob_to_tv_lb}
Suppose that $\lambda \leq 1/4$.
If $U_a$ and $U_b$ are $d\times d/r$ matrices satisfying $\|U_a\transpose U_b\|_F^2 \leq d/2r$, then $\DTV{\ncal(0,I_d + \lambda U_a U_a\transpose) }{ \ncal(0,I_d + \lambda U_b U_b\transpose)  } = \Omega\left( \min\{1,\lambda \sqrt{d/r}\}\right)$.
\end{lemma}

\begin{proof}
We will use the following approximate characterization of the TV distance between two zero-mean Gaussians: for positive definite matrices $\Sigma_a$ and $\Sigma_b$,
\[\DTV{\ncal(0,\Sigma_a)}{\ncal(0,\Sigma_b)} ~=~ \Theta\left( \min \{1, \|\Sigma_a^{-1/2}\Sigma_b\Sigma_a^{-1/2} - I\|_F\} \right).\]
This result appears in \cite[Theorem~1.1]{tv_distance_gaussians}; see also \cite[Corollary~2]{ulyanov}.
Hence to complete the proof it suffices to show that 
$\|\Sigma_a^{-1/2}\Sigma_b\Sigma_a^{-1/2}-I\|_F \geq \frac45 \lambda \sqrt{d/r}$.
Since
$\Sigma_a^{-1/2}\Sigma_b\Sigma_a^{-1/2}-I
=
\Sigma_a^{-1/2}
(\Sigma_b-\Sigma_a)
\Sigma_a^{-1/2}$,
applying the left inequality in~\eqref{minF} twice gives
\[
\|\Sigma_a^{-1/2}\Sigma_b\Sigma_a^{-1/2}-I\|_F
\geq \sigma_{\min}(\Sigma_a^{-1/2})^2  \|\Sigma_b-\Sigma_a\|_F.
\]
Since the eigenvalues of $\Sigma_a$ are $1$ and $1+\lambda$, we have 
$\sigma_{\min}(\Sigma_a^{-1/2})=(1+\lambda)^{-1/2}\geq\sqrt{4/5}$ as $\lambda\leq1/4$.
Moreover,
$\|\Sigma_b-\Sigma_a\|_F
= \lambda \|U_bU_b\transpose - U_aU_a\transpose\|_F$,
and since $U_b U_b\transpose -
U_a U_a\transpose$ is symmetric, we have
\begin{align*}
\|U_b U_b\transpose -
U_a U_a\transpose\|_F^2 & = \trace ( (U_b U_b\transpose -
U_a U_a\transpose)(U_b U_b\transpose -
U_a U_a\transpose))\\
& = 
\trace(U_b U_b\transpose U_b U_b\transpose)
+
\trace(U_a U_a\transpose U_a U_a\transpose)
-
\trace(U_b U_b\transpose U_a U_a\transpose)
-
\trace(U_a U_a\transpose U_b U_b\transpose)\\
& = 
\trace(U_b U_b\transpose)
+
\trace(U_a U_a\transpose)
-
\trace(U_b\transpose U_a U_a\transpose U_b )
-
\trace(U_a\transpose U_b U_b\transpose U_a ) \\
& = 
d/r+
d/r-
\|U_a\transpose U_b\|_F^2
-
\|U_a\transpose U_b\|_F^2 \\ &\geq 2d/r - 2d/2r  = d/r,
\end{align*}
hence
$\|\Sigma_a^{-1/2}\Sigma_b\Sigma_a^{-1/2}-I\|_F \geq \frac45 \lambda \sqrt{d/r}$, completing the proof of the lemma.
\end{proof}

\subsection{The lower bound for learning Gaussian mixtures}
For proving our lower bound for mixtures, \Theorem{lower_bound},
we will need a standard result.
This lemma follows from the Gilbert-Varshamov bound in coding theory (see, e.g., \cite[Theroem~5.2.6]{codingtheory}); we include a proof for completeness.

\begin{lemma}
\LemmaName{packing}
Let $T \geq 4$ and $k \in \N$.
There exists a set of $k$-tuples $\cX \subseteq [T]^k$ such that $|\cX| \geq T^{k/4}$ and every distinct $x,y \in \cX$ differ in at least $k/4$ components.\end{lemma}

\begin{proof}
For any $x\in [T]^k$, the number of $k$-tuples in $[T]^k$ that differ with $x$ in at most $k/4$ components is bounded by
\[
\sum_{i=0}^{k/4} \binom{k}{i} (T-1)^i < T^{k/4} \sum_{i=0}^{k/4} \binom{k}{i} \leq (4eT)^{k/4} < T^{3k/4},
\]
where we have used the inequality $\sum_{i=0}^{m} \binom{n}{i} \leq (en/m)^m$, valid for all $1\leq m \leq n$ (see, e.g., \cite[Exercise~0.0.5]{hdp-vershynin}).
We give an iterative algorithm to build $\cX$: choose an arbitrary $k$-tuple from
$[T]^k$, put it in $\cX$, remove from $[T]^k$ the $k$-tuples that differ with the chosen $k$-tuple in at most $k/4$ components, and repeat.
By the above calculation, the size of the final set $\cX$ will be at least $T^{k/4}$. 
\end{proof}

\Theorem{lower_bound} follows immediately from the following result.

\begin{theorem}
\label{thm:lbmixture}
Any algorithm that learns the class of mixtures of $k$ Gaussians in $\R^d$ in the realizable setting within total variation distance $\eps$ and with success probability at least $2/3$ has sample complexity
$\Omega\left( \frac{kd^2}{\eps^2 \log(1/\eps)} \right)$.
\end{theorem}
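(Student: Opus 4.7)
The plan is to lift the single-Gaussian lower bound of \Theorem{lb} to mixtures by packing $k$ independent hard instances into well-separated regions of $\R^d$, and then applying \Lemma{fano_conc}. Let $\{f_1,\ldots,f_{M_1}\}$, with $M_1 = 2^{\Omega(d^2)}$, be the family of zero-mean $d$-dimensional Gaussians produced in the proof of \Theorem{lb}; by construction every pair satisfies $\DKL{f_a}{f_b} = O(\eps^2)$ and $\DTV{f_a}{f_b} = \Omega(\eps)$, and each covariance matrix has eigenvalues in $[1,1+\lambda]$ with $\lambda = \Theta(\eps/\sqrt d)$. Pick anchors $\mu_1,\ldots,\mu_k \in \R^d$ pairwise $R$-separated for $R = C\sqrt{d\log(k/\eps)}$ with $C$ a sufficiently large absolute constant (for instance, equally spaced along a coordinate axis), and write $f_a^{(j)}$ for the translate of $f_a$ by $\mu_j$. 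By standard Gaussian tail bounds, each $f_a^{(j)}$ places mass at least $1-O(\eps^3/k^3)$ in the ball $B_j$ of radius $R/3$ about $\mu_j$, and the balls $B_1,\ldots,B_k$ are pairwise disjoint.

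For a codeword $c=(c_1,\ldots,c_k)\in [M_1]^k$, define the uniform mixture $g_c \coloneqq \tfrac{1}{k}\sum_{j=1}^k f_{c_j}^{(j)} \in \kmix(\gcal_{d,1})$. A Gilbert--Varshamov-style greedy construction (or a random subset with a union bound) produces a subcode $C \subseteq [M_1]^k$ with $|C| = M_1^{\Omega(k)} = 2^{\Omega(kd^2)}$ and pairwise Hamming distance at least $\alpha k$ for some absolute constant $\alpha > 0$. For any two distinct $c, c' \in C$, the near-disjointness of supports gives
\[
\DTV{g_c}{g_{c'}} \:\geq\: \frac{1}{2k}\sum_{j:\, c_j\ne c_j'} \DTV{f_{c_j}}{f_{c_j'}} \,-\, o(\eps) \:=\: \Omega(\eps),
\]
while the additivity of KL divergence on (nearly) disjoint components yields
\[
\DKL{g_c}{g_{c'}} \:\leq\: \frac{1}{k}\sum_{j:\, c_j\ne c_j'} \DKL{f_{c_j}}{f_{c_j'}} \,+\, o(\eps^2) \:\leq\: O(\eps^2).
\]
Invoking \Lemma{fano_conc} with $M = 2^{\Omega(kd^2)}$ and $\kappa(\eps) = O(\eps^2)$ then delivers the claimed bound of $\Omega\!\left(kd^2/(\eps^2 \log(1/\eps))\right)$.

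The main technical obstacle is making the ``essentially disjoint supports'' approximation rigorous for the KL upper bound, since Gaussians have unbounded support and the integrand $\log(g_c/g_{c'})$ can be large outside the balls $B_j$. The fix is to split the integral defining $\DKL{g_c}{g_{c'}}$ into $B_1,\ldots,B_k$ and their complement: within each $B_j$, the cross-contributions from the other components perturb $g_c(x)/g_{c'}(x)$ by only a multiplicative $(1\pm o(1))$ factor, so this part of the integral reduces, up to a lower-order error, to $\tfrac{1}{k}\DKL{f_{c_j}}{f_{c_j'}}$; on the complement one uses the crude pointwise bound $|\log(g_c/g_{c'})| = O(R^2) = O(d\log(k/\eps))$, combined with the $O(\eps^3/k^3)$ bound on tail mass, to show that the overlap contributes only $o(\eps^2)$. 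A parallel (and easier) calculation controls the error in the TV bound. The coding-theoretic step is routine, so all the work is in this rigorous separation estimate.
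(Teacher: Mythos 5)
Your construction is essentially the paper's: the same single-Gaussian hard family from \Theorem{lb}, the same well-separated means, the same Gilbert--Varshamov code over $[M_1]^k$ to get $2^{\Omega(kd^2)}$ mixtures with $\Omega(k)$ disagreeing components, and the same appeal to \Lemma{fano_conc}. The TV lower bound via restricting the distinguishing events to disjoint balls $B_j$ is also exactly how the paper argues, and there the near-disjointness calculation is indeed the easier one, since all error terms are controlled by tail \emph{mass} alone.

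The one place you diverge is the KL upper bound, and here two comments are in order. First, the ``main technical obstacle'' you identify does not exist: KL divergence is jointly convex, so for uniform mixtures with matched components one has directly
$\DKL{g_c}{g_{c'}} \leq \frac{1}{k}\sum_{j=1}^{k}\DKL{f_{c_j}^{(j)}}{f_{c'_j}^{(j)}} = O(\eps^2)$,
with no separation of the means needed at all; this is the one-line argument the paper uses (citing convexity of KL). Second, your proposed workaround is flawed as stated: outside the balls you claim the pointwise bound $|\log(g_c/g_{c'})| = O(R^2)$, but the log-ratio of two Gaussian (mixture) densities with covariances differing by $\lambda U U\transpose$ grows like $\lambda\|x\|^2$ and is unbounded, so no such uniform bound holds on the complement. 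The overlap term can still be controlled by integrating $g_c(x)\bigl(1+\lambda\|x\|^2\bigr)$ over the tail, so your route is salvageable, but it is strictly harder than necessary. I would replace the entire KL paragraph with the convexity argument and reserve the support-splitting machinery for the TV lower bound, where it is genuinely required.
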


\begin{proof}
Recall that $\gcal_{d,k}$ denotes the class of $k$-mixtures of $d$-dimensional Gaussian distributions. As we will use \Lemma{fano_conc} again to obtain the sample complexity lower bound,
it suffices to construct $2^{\Omega(kd^2)}$ distributions in $\gcal_{d,k}$ with pairwise KL divergences $O(\eps^2)$ and pairwise TV distances $\Omega(\eps)$.
Our family of distributions will use the covariance matrices constructed in \Theorem{lb}.
Some care is required to ensure that the TV distances are large, and we will adopt some ideas used in earlier work for mixtures of spherical Gaussians \cite[Appendix C.2]{spherical}.

The proof of \Theorem{lb} shows that there exists a family of symmetric positive definite matrices $\Sigma_1, \ldots, \Sigma_T$ with $T = 2^{\Omega(d^2)}$ satisfying
\begin{subequations}
\begin{alignat}{2}
    \label{eq:LBKL}
    \DKL{\ncal(0, \Sigma_i)}{\ncal(0, \Sigma_{j})} &~\leq~ O(\eps^2 )
    &&\qquad\forall i \neq j
    \\\label{eq:LBTV}
    \DTV{\ncal(0, \Sigma_i)}{ \ncal(0, \Sigma_{j})} &~\geq~ \Omega(\eps)
    &&\qquad\forall i \neq j
    \\\label{eq:LBNorm}
    \|\Sigma_i\|_s &~\leq~ 2 &&\qquad\forall i.
\end{alignat}
\end{subequations}

Next we will create a family of distributions in $\gcal_{d,k}$, where each Gaussian in each mixture uses one of these $\Sigma_i$ as its covariance matrix.
However, there is a tension.
On the one hand, we want any two of these mixture distributions to use disjoint sets of covariance matrices so that the TV distance between the mixtures is large.
On the other hand, this constraint reduces the number of mixture distributions we can create, while we want many distributions in order to maximize the lower bound.
This tension is resolved by a compromise obtained via error-correcting codes (\Lemma{packing}).

The construction proceeds as follows.
First, we pick $\mu_1,\ldots,\mu_k \in \R^d$, which will serve as the means for the Gaussians.
We choose them to be far apart: for some $\Delta$, to be chosen later, we pick them in such a way that $\| \mu_i - \mu_j \|_2 \geq \Delta$ for all $i \neq j$.
Each mixture distribution will be a uniform mixture of $k$ Gaussians, for which the $i$th Gaussian has mean $\mu_i$.
The choice of covariance matrices is determined using the error-correcting code.
Specifically, let $\cX \subset [T]^k$ be a set as in \Lemma{packing} above.
The family of mixture distributions is
\[
	\cF \coloneqq \left\{\: f_x \::\: x \in \cX \:\right\},
	\qquad\text{where}\qquad
	f_x ~\coloneqq~ \frac{1}{k} \Big( \ncal(\mu_1, \Sigma_{x_1}) + \cdots + \ncal(\mu_k, \Sigma_{x_k}) \Big).
\]
As desired, we have $\card{\cF} = T^{\Omega(k)} = 2^{\Omega(k d^2)}$.

To analyze $\cF$, the first task is to prove the pairwise KL divergence upper bound. This is straightforward.
Fix distinct $x,y \in \cX$.
For each $i$, \eqref{eq:LBKL} shows that
$$\DKL{\ncal(\mu_i, \Sigma_{x_i})}{\ncal(\mu_i, \Sigma_{y_i})} ~=~
\DKL{\ncal(0, \Sigma_{x_i})}{\ncal(0, \Sigma_{y_i})} 
~\leq~ O(\eps^2 ).$$
The convexity of KL divergence \cite[Theorem~2.7.2]{cover} then shows that $\DKL{f_x}{f_y} \leq O(\eps^2)$.

The remaining task is to prove 
$\DTV{f_x}{f_y} \geq \Omega(\eps)$ for all distinct $f_x, f_y \in \cF$.
The intuition is as follows.
Say that index $i \in [k]$ \emph{disagrees} if $x_i \neq y_i$. 
Whenever $i$ disagrees, the $i$th Gaussian in $f_x$ and $i$th Gaussian in $f_y$ have TV distance $\Omega(\eps)$ by \eqref{eq:LBTV}.
Moreover, the total mixture weight apportioned to disagreeing indices is at least $1/4$, since \Lemma{packing} ensures that the number of disagreements is at least $k/4$, and each mixture uses uniform weights on its components.
Thus, the disagreeing coordinates should suffice to imply that the TV distance is $\Omega(\eps)$.
Proving this formally requires some care because each Gaussian is supported on all of $\R^d$, so there is interaction between all Gaussians involved in the mixtures.
However, choosing a large enough $\Delta$ ensures that the means are far apart, so the interaction is negligible.

Formally, for each $j\in[k]$, let $A_j' \subseteq \R^d$ be such that
\begin{equation}\label{eq:AjTV}
 \Pr_{g \sim \ncal(\mu_j, \Sigma_{x_j})}[g \in A_j'] - \Pr_{g \sim \ncal(\mu_j, \Sigma_{y_j})}[g \in A_j'] 
~=~ \DTV{\ncal(\mu_j, \Sigma_{x_j})}{\ncal(\mu_j, \Sigma_{y_j})},
\end{equation}
and define
$$
A_j ~=~ A_j' \cap B_j,
\qquad\text{where}\qquad
B_j ~=~ \setst{ x \in \R^d }{ \norm{x-\mu_j}_2 < \Delta/2 }.
$$
Note that the separation of $\mu_1, \ldots, \mu_k$ implies that the balls $B_1, \ldots, B_k$ are disjoint.
Consequently, the sets $A_1, \ldots, A_k$ are also disjoint.

Several preliminary inequalities are required concerning these events.
First, for each $i\in[k]$,
\begin{alignat}{2}
\nonumber
\Pr_{g \sim \ncal(\mu_i, \Sigma_{x_i})}[g \not\in B_i ]
&~=~
    \Pr_{g \sim \ncal(\mu_i, \Sigma_{x_i})}[ \|g-\mu_i\|_2^2 \geq (\Delta/2)^2]
\\\nonumber&~=~
    \Pr_{g \sim \ncal(0, \Sigma_{x_i})}[ \|g\|_2^2 \geq (\Delta/2)^2]
    &&\qquad\text{(translating to zero-mean)}
\\\nonumber&~\leq~
    \Pr_{g \sim \ncal(0, I_d)}[ \|g\|_2^2 \geq \Delta^2/8]
    &&\qquad\text{(by \eqref{eq:LBNorm})}
\\&~\leq~
    \eps^2 / k^2,
\label{eq:gNotInBi}
\end{alignat}
by applying \Lemma{ChiSquaredTail}
with $t=2 \ln(k/\eps)$
and choosing $\Delta$ to satisfy $ \Delta^2/8 = d + 2 \sqrt{d t} + 2 t $.
Inequality \eqref{eq:gNotInBi} also holds replacing $x_i$ with $y_i$.
Since $A_i' \setminus A_i \subseteq B_i^c$, \eqref{eq:gNotInBi} shows that
\begin{equation}\label{eq:AiDiff}
\Big\lvert~ \Pr_{g \sim \ncal(\mu_i, \Sigma_{x_i})}[g \in A_i ] - \Pr_{g \sim \ncal(\mu_i, \Sigma_{x_i})}[g \in A_i' ] ~\Big\rvert 
    ~\leq~ \Pr_{g \sim \ncal(\mu_i, \Sigma_{x_i})}[g \not\in B_i ]
    ~\leq~ \eps^2/k^2.
\end{equation}
This inequality also holds using $y_i$ instead of $x_i$.
For $i \neq j$, we have $A_j \subseteq B_i^c$, so
\begin{equation}
    \Pr_{g \sim \ncal(\mu_i, \Sigma_{y_i})}[g \in A_j]
~\leq~
\Pr_{g \sim \ncal(\mu_i, \Sigma_{y_i})}[g \not\in B_i ]
~\leq~
    \eps^2 / k^2.
\label{eq:gInAj}
\end{equation}
Finally, by \eqref{eq:AjTV}, \eqref{eq:AiDiff}
and the triangle inequality,
\begin{equation}
\label{eq:AjVsTV}
\Pr_{g \sim \ncal(\mu_j, \Sigma_{x_j})}[g \in A_j] - \Pr_{g \sim \ncal(\mu_j, \Sigma_{y_j})}[g \in A_j] ~\geq~ \DTV{\ncal(\mu_j, \Sigma_{x_j})}{\ncal(\mu_j, \Sigma_{y_j})} - 2\eps^2/k^2.
\end{equation}

The total variation distance is lower bounded as follows.
Let $A \coloneqq A_1 \cup \cdots \cup A_k$.
Then
\begin{alignat*}{2}
&\DTV{f_x}{f_y}
\\&~\geq~
    \Pr_{g \sim f_x}[g \in A] - \Pr_{g \sim f_y}[g \in A]
\\&~=~
    \sum_{j = 1}^k \left( \Pr_{g \sim f_x}[g \in A_j] - \Pr_{g \sim f_y}[g \in A_j] \right)
    &&\qquad\text{(by disjointness of the $A_j$)}
\\&~=~
    \frac{1}{k} \sum_{j = 1}^k \sum_{i=1}^k \left( \Pr_{g \sim \ncal(\mu_i, \Sigma_{x_i})}[g \in A_j] - \Pr_{g \sim \ncal(\mu_i, \Sigma_{y_i})}[g \in A_j] \right)
    &&\qquad\text{(expanding $f_x$ and $f_y$)}
\\&~=~
    \frac{1}{k} \sum_{j = 1}^k \left( \Pr_{g \sim \ncal(\mu_j, \Sigma_{x_j})}[g \in A_j] - \Pr_{g \sim \ncal(\mu_j, \Sigma_{y_j})}[g \in A_j] \right)
    &&\qquad\text{(summands with $i=j$)}
    \\&\quad+~
    \frac{1}{k} \sum_{j = 1}^k \sum_{i \neq j} \Big(
    \underbrace{\Pr_{g \sim \ncal(\mu_i, \Sigma_{x_i})}[g \in A_j]}_{\geq 0} -
    \underbrace{\Pr_{g \sim \ncal(\mu_i, \Sigma_{y_i})}[g \in A_j]}_{\leq \eps^2/k^2 ~\text{by \eqref{eq:gInAj}}}
    \Big)
    &&\qquad\text{(summands with $i \neq j$)}
\\&~\geq~
    \frac{1}{k} \sum_{j = 1}^k \left( \Pr_{g \sim \ncal(\mu_j, \Sigma_{x_j})}[g \in A_j] - \Pr_{g \sim \ncal(\mu_j, \Sigma_{y_j})}[g \in A_j] \right) - \eps^2
\\&~\geq~
    \frac{1}{k} \sum_{j = 1}^k \Big( \DTV{\ncal(\mu_j, \Sigma_{x_j})}{\ncal(\mu_j, \Sigma_{y_j})} - 2\eps^2/k^2 \Big) - \eps^2
    &&\qquad\text{(by \eqref{eq:AjVsTV})}
\\&~\geq~ 
    \frac 1 k (k/4) \Omega(\eps) - 3\eps^2 
    ~=~ \Omega(\eps),
\end{alignat*}
where the last inequality is because $\DTV{\ncal(\mu_j, \Sigma_{x_j})}{\ncal(\mu_j, \Sigma_{y_j})} \geq \Omega(\eps)$ whenever $x_j \neq y_j$ (see~\eqref{eq:LBTV}), which is the case for at least $k/4$ of the indices $j$.
\end{proof}

\section{Discussion and open problems}\label{sec:open}

We have built a connection between distribution learning and compression.
Another concept related to compression is that of \emph{core-sets}. The idea of core-sets is to summarize the training data, using a small subset of them, in a way that \emph{any algorithm} minimizing the error on the subset will have small error on the whole set. 
Core-sets have been used in maximum likelihood estimation~\cite{lucic2017training}
and to solve clustering problems~\cite{sohler2018strong}.
There is one important distinction between compression and core-sets: compression can be more powerful since it can use complex class-specific decoders.

Our work opens several avenues for further research.

\subsubsection*{Eliminating the polylogarithmic factors}
Our sample complexity lower and upper bounds (in both axis-aligned and general cases) differ by multiplicative polylogarithmic factors.
Can one remove these factors? 
In this direction, we propose the following conjecture. (See Page~\pageref{rates} for the relevant definitions.)

\begin{conjecture}
	The minimax estimation rate for $\gcal_{d,k}$ and $\acal_{d,k}$ is $\Theta\left(\sqrt{kd^2/n}\right)$ and $\Theta\left(\sqrt{kd/n}\right)$, respectively.
\end{conjecture}

Note that this conjecture is true for the case $k=1$ (see~\cite{lower_bound_improved}).
For general $k$, the lower bound
for $\gcal_{d,k}$ follows from the proof of \Theorem{lower_bound} (see Theorem~\ref{thm:lbmixture}),
and the lower bound for $\acal_{d,k}$ follows from the lower bound proof in~\cite{spherical}.
The upper bounds hold up to polylogarithmic factors (see \Theorem{upper_bound}
and
\Theorem{upper_bound_axis_aligned}).

\subsubsection*{Learning mixtures of sparse Gaussians}
Our main results imply that the sample complexity for learning mixtures of \emph{axis-aligned} Gaussians is smaller than that of mixtures of general Gaussians by a factor of $\widetilde{\Theta}(d)$.
Can one interpolate between these two extremes by exploiting some notion of sparsity of the target distribution?

Consider the class of $d$-dimensional Gaussians whose inverse covariance matrices have at most $m$ off-diagonal nonzero entries.
Note that $m$ measures the amount of correlation between the Gaussian components:
if we build a probabilistic graphical model (also known as a Markov random field) whose nodes are the Gaussian components, then an axis-aligned Gaussian  corresponds to an empty graph with no correlation between the components, in which case $m=0$,
and a Gaussian with fully correlated components  corresponds to the complete graph, in which case $m=\binom{d}{2}$.
In general, $m$ counts the number of edges in this graph
(see~\cite[Proposition~5.2]{Lauritzen} for a proof).

The sample complexity of learning with respect to the class of $d$-dimensional Gaussians whose inverse covariance matrices have at most $m$ off-diagonal nonzero entries is $\widetilde{\Theta}((m+d)/\eps^2)$
(see~\cite{lower_bound_improved}).
This result on learning a single Gaussian and the fact that in some applications the underlying Gaussians are sparse motivates the following question: can one extend the bound of \cite{lower_bound_improved} to mixtures of Gaussians, obtaining sample complexity bounds that depend on some notion of sparsity of the mixture components?

\subsubsection*{Polynomial time algorithms for learning mixtures of Gaussians}
The running time of our density estimation algorithm is
$2^{kd^2 \polylog(d, k, 1/\eps,1/\delta)}$, which is not polynomial in the problem parameters. 
An important open question is whether there exists an algorithm for learning mixtures of Gaussians that runs in time $\poly(k, d, 1/\eps, 1/\delta)$ (see also \cite[Open Problem~15.5]{Diakonikolas2016}).
If the covariance matrices for all the Gaussians are multiples of the identity matrix (known as spherical Gaussians), \cite{spherical} gives an algorithm with running time that is polynomial in $d$ and $1/\eps$ but exponential in $k$.
On the other hand, for mixtures of general Gaussians, it is shown in \cite{gaussian_mixture} that
no polynomial time (in all the parameters) algorithm exists for the case that the learner has access to the distribution only via \emph{statistical queries}. (See~\cite{gaussian_mixture} for the definition of this model.)

\subsubsection*{What if $k$ is not known?}
Our density estimation algorithms assume that $k$ is given as input, while in some applications $k$ might be unknown.
One approach is to perform a binary search: run our algorithm for $k=1,2,4,8,\dots$,
and stop as soon as the output of our algorithm has total variation distance less than $\eps$ from the target distribution.
Unfortunately, it is not clear how to approximate this total variation distance. 
It is also not clear how to apply this approach to the robust learning scenario.

\subsubsection*{Is robust compression closed under taking mixtures?}
Lemma~\ref{lem:compressmixtures}
states that for any distribution class $\fcal$,
non-robust compression of $\fcal$
implies non-robust compression of $\kmix(\fcal)$.
Does an analogous statement hold for robust compression?
That is,
does robust compression of $\fcal$ imply robust compression of $\kmix(\fcal)$,
for a general class $\fcal$?

\subsubsection*{Sample complexity for learning with respect to the KL divergence}
Theorem~\ref{thm:kl_bad} states that there does not exist a function 
$g(k,d,\eps)$ such that there exists an algorithm that upon receiving $g(k,d,\eps)$ i.i.d.\ samples from an unknown $f\in\gcal_{d,k}$, outputs $\widehat{f}$ such that
$\DKL{f}{\widehat{f}} \leq \eps$ with probability more than 0.02.
Recalling that the KL divergence is asymmetric, we pose the following question:
what is the smallest function
$g(k,d,\eps,\delta)$ such that there exists an algorithm that, upon receiving $g(k,d,\eps,\delta)$ i.i.d.\ samples from an unknown $f\in\gcal_{d,k}$, outputs $\widehat{f}$ such that
$\DKL{\widehat{f}}{f} \leq \eps$ with probability at least $1-\delta$?

\subsubsection*{Characterizing the sample complexity of learning a  class of distributions}
A central open problem in distribution learning and density estimation is characterizing the sample complexity of learning a distribution class (\cite[Open Problem 15.1]{Diakonikolas2016}).
An insight from supervised learning theory is that the sample complexity of learning a class (of concepts, functions, or distributions) is typically proportional, up to logarithmic factors, to (some notion of) intrinsic dimension of that class divided by $\eps^2$, where $\eps$ is the error tolerance. 
For the case of  binary classification, the intrinsic dimension is captured by the VC~dimension of the concept class (see~\cite{vapnik2015uniform,Blumer:1989}). For the case of distribution learning with respect to ``natural'' parametric classes, we expect this dimension to be equal to the number of parameters. 
This is indeed true for the class of Gaussians (which have $d^2$ parameters) and axis-aligned Gaussians (which have $d$ parameters), 
and we showed in this paper that it holds for their mixtures as well 
(which have $kd^2$ and $kd$ parameters, respectively).

One may wonder if the VC~dimension of the Yatracos family associated with a class of distributions can characterize its sample complexity, but it is not hard to come up with examples where this VC~dimension is infinite while the class can be learned with finite samples.
Covering numbers do not characterize the sample complexity either: for instance, the class of Gaussians does not have a finite covering number in the TV metric, nevertheless it is learnable with finitely many samples.
Thus, we leave characterizing the sample complexity of learning a  class of distributions as an important open problem.

\subsubsection*{Do learnable classes have bounded compression schemes?}
In binary classification, the combinatorial notion of Littlestone-Warmuth compression has been shown to be sufficient \cite{littlestone1986relating} and necessary \cite{moran2016sample} for learning. In this work, we showed that the new but related notion of distribution compression is sufficient for distribution learning (Theorem~\ref{thm:compression}). Whether the existence of compression schemes is necessary for learning a class of distributions remains an intriguing open problem.
In this direction, we conjecture the following converse for Theorem~\ref{thm:compression}.
Let $m_{\fcal}(\eps,\delta)$ denote the sample complexity function associated with learning the class $\fcal$ of distributions (see
Definition~\ref{def:realizablelearning}).

\begin{conjecture}
There exists a universal constant $C$ such that 
any class $\fcal$ admits  an\\
$ ( 
C \eps^2m(\eps) \log(\eps^2m(\eps))^C,
C \eps^2m(\eps) \log(\eps^2m(\eps))^C,
C m(\eps) \log(m(\eps))^C)$
(non-robust) compression, where $m(\eps) \coloneqq m_{\fcal}(\eps,\frac12)$.
\end{conjecture}

The value $\eps^2m(\eps)$ is a candidate for the notion of ``intrinsic dimension'' of the class.
We also propose the following weaker conjecture.

\begin{conjecture}
There exists universal polynomials $P,Q$ and $R$ such that 
any class $\fcal$ admits 
a\\
$ ( 
P(\eps^2m(\eps), \log(1/\eps)),
Q(\eps^2m(\eps), \log(1/\eps)),
R(m(\eps), 1/\eps)
)$
(non-robust) compression, where $m(\eps) \coloneqq m_{\fcal}(\eps,\frac12)$.
\end{conjecture}

\begin{acks}
We thank the anonymous referees of the Journal of the ACM for their valuable comments which have substantially improved the presentation,
and for proposing the new proof of \Lemma{litvak_newnew} 
which resulted in logarithmic improvements in
\Lemma{coreformixtures}
and \Theorem{upper_bound}.
We are grateful to
Yaoliang Yu, for pointing out a mistake in an earlier version of this paper,
and Luc Devroye, for fruitful discussions.
Hassan Ashtiani and
Nicholas Harvey 
were supported by NSERC Discovery Grants.
Christopher Liaw was supported by an NSERC graduate award.
Abbas Mehrabian was supported by a CRM-ISM
postdoctoral fellowship and an IVADO-Apog\'ee-CFREF postdoctoral fellowship.
Yaniv Plan was supported by NSERC grant 22R23068.
This work started during the Foundations of Machine Learning program at The Simons Institute for the Theory of Computing in Spring 2017.
\end{acks}


\appendix

\section{Proofs of Theorem~\ref{thm:kl_bad} and Theorem~\ref{thm:lp_bad}}
\label{sec:parameterproofs}
\label{app:kl_lp_bad}
In this section, let $\nu$ denote the Lebesgue measure on $\bR$.
We will first prove Theorem~\ref{thm:kl_bad}.
To that end, we begin with a simple calculation that will be useful later.
\begin{lemma}
	\label{lem:small_kl}
    Suppose $I \subseteq \bR$ satisfies $\nu(I) \geq \gamma$.
    Moreover, let $f, h \colon \bR \to \bR_{\geq 0}$ be measurable density functions such that
    $f(x) \geq \beta$ and $h(x) \leq \alpha$ for all $x \in I$, and $f(x) > 0$ for all $x \in \bR$.
    Then $\DKL{f}{h} \geq \gamma \beta \log(\beta / \alpha) - 1/e$.
\end{lemma}
\begin{proof}
    Write
    \[
        \DKL{f}{h} =
        \integral{I}{}{f(x) \log \frac{f(x)}{h(x)}}{x} +
        \integral{I^c}{}{f(x) \log \frac{f(x)}{h(x)}}{x}.
    \]
    For the first integral, we have
    \[
        \integral{I}{}{f(x) \log \frac{f(x)}{h(x)}}{x}
        \geq 
        \integral{I}{}{\beta \log \frac{\beta}{\alpha}}{x}
        \geq \gamma \beta \log(\beta / \alpha).
    \]

    Next, we bound the second integral and show that it has value at least $-1/e$, which completes the proof.
    Let $F = \integral{I^c}{}{f(x)}{x}$ and $H = \integral{I^c}{}{h(x)}{x}$.
    Note that $F \geq 0$ as $f(x) > 0$ for all $x \in \bR$.
    If $H = 0$ then $h(x) = 0$ almost everywhere~on $I^c$ so the second integral is $+\infty$.
    So, assume that $H > 0$.
    Then $f/F$ and $h/H$ are densities on $I^c$.
    Hence, we have
    \begin{align*}
        \integral{I^c}{}{f(x) \log \frac{f(x)}{h(x)}}{x}
        =
        \underbrace{F \integral{I^c}{}{\frac{f(x)}{F} \log \frac{f(x) / F}{h(x) / H}}{x}}_{\geq 0} +
        F \integral{I^c}{}{\frac{f(x)}{F} \log \frac{F}{H}}{x}
        \geq F \log(F/H),
    \end{align*}
    where the inequality is because the KL divergence of two densities is always nonnegative.
    Since $H \leq 1$, we have $-\log(H) \geq 0$ so $F \log(F/H) = F \log F - F \log H \geq F \log F \geq - 1/e$, as required.
\end{proof}
\begin{proof}[Proof of Theorem~\ref{thm:kl_bad}]
We allow the algorithm to be randomized. Denote by $\mathcal{A}(X_1,\dots,X_m,R)$ the output of $\mathcal{A}$ given the sample $X_1,\dots,X_m$ and an independent source of randomness $R$.
We will first analyze the behavior of the algorithm when the true distribution is $\ncal(0,1)$ and show that there exists some $a' \in \bR$ for which the  algorithm's output puts almost no probability mass on around $a'$.
    We then show that if the true distribution is a carefully chosen mixture of $\ncal(0,1)$ and $\ncal(a',1)$, then the algorithm's output does not change with high probability, so it still  puts almost no mass on around $a'$; hence the KL divergence of the output and the true distribution is large.

    Define the parameters $\delta = \frac{0.01}{m}$, $\beta = \frac{\delta}{\sqrt{2\pi}} \exp(-1/32)$,
     and $\alpha = \beta \exp\left(\frac{-4\tau - 4/e}{\beta}\right)$.
    Let $X_1, \ldots, X_m \sim \ncal(0,1)$ and set $h = \acal(X_1, \ldots, X_m, R)$.
    Note that $h$ is random.
    Define the (random) set $H = \setst{x \in \bR}{h(x) \geq \alpha}$.
    Then $\nu(H) \leq 1/\alpha$.
    For any $a \in \bZ$, define $I_a = [a - 1/4, a + 1/4]$.
    Note that the $I_a$ are disjoint intervals.
    Hence $\sum_{a \in \bZ} \nu(I_a \cap H) \leq 1/\alpha$ deterministically so $\expect\left[\sum_{a \in \bZ} \nu(I_a \cap H)\right] \leq 1/\alpha$.
    Note that the left hand side of the inequality is an infinite sum while the right hand side is a finite number.
    Since expectation is linear, we can find $a' \in Z$ such that $\expect\left[\nu(I_{a'} \cap H)\right] \leq 1/400$.
    By Markov's Inequality, $\nu(I_{a'} \cap H) \leq 1/4$ with probability at least $0.99$.
    We condition on this event.

    Define $f = (1-\delta) \ncal(0,1) + \delta \cdot \ncal(a', 1)$ and
    note that $f$ is positive everywhere, and, for all $x \in I_{a'}$, we have $f(x) \geq \frac{\delta}{\sqrt{2\pi}} \exp(-1 / 32) = \beta$.
    Let $J_{a'} = I_{a'} \setminus H$.
    Then $\nu(J_{a'}) \geq \nu(I_{a'}) -\nu(I_{a'} \cap H) \geq 1/4$, and for all $x \in J_{a'}$ we have $f(x) \geq \beta$ and $h(x) < \alpha$.
    So,
    $$\DKL{f}{h} \geq \beta \log(\beta / \alpha)/4 - 1/e = \tau,$$ 
    where the inequality is by Lemma~\ref{lem:small_kl} and the equality is by the definition of $\alpha$.
    Hence, $\DKL{f}{h} \geq \tau$ with probability at least $0.99$.

    Note that $\DTV{f}{\ncal(0,1)} \leq \delta$.
    If $S = (X_1, \ldots, X_m)$ and $S' = (X_1', \ldots, X_m')$ where $X_i \sim \ncal(0,1)$ and $X_i' \sim f$ then Proposition~\ref{prop:TV} gives $\DTV{S}{S'} \leq m \delta = 0.01$.
    Hence, if $h = \acal(S,R)$ and $h' = \acal(S',R)$ then $\DTV{h}{h'} \leq 0.01$ so $\prob[\DKL{f}{h'} \geq \tau] \geq \prob[\DKL{f}{h} \geq \tau] - 0.01 \geq 0.98$, completing the proof.
\end{proof}

Next, we prove Theorem~\ref{thm:lp_bad}.
\begin{proof}[Proof of Theorem~\ref{thm:lp_bad}]
Define the parameters $\delta = \frac{0.01}{m}$, $\sigma^{p-1} = \frac{\delta^p}{\tau^p 6^p} \sqrt{\ln(9 / 2\pi)}$, and $M = 4 \sigma \sqrt{\ln(9/2\pi)}$.

Let $X_1, \ldots, X_m \sim \ncal(0,1)$ and set $h = \acal(X_1, \ldots, X_m, R)$, where, as in the proof of Theorem~\ref{thm:kl_bad},
$R$ is the algorithm's independent source of randomness.
Note that $h$ is random.
Next, let $H = \setst{x \in \bR}{h(x) \geq \delta / 6 \sigma}$.
Then $\nu(H) \leq 6\sigma / \delta$.
For $a \in \bZ$, define the intervals $I_a = [aM - M/4, aM + M/4]$ and note that $I_a$ are disjoint intervals.
Hence, $\sum_{a \in \bZ} \nu(I_a \cap H) \leq 6\sigma / \delta$ deterministically so $\expect\left[\sum_{a \in \bZ} \nu(I_a \cap H)\right] \leq 6\sigma / \delta$.
Note that the left hand side of the inequality is an infinite sum while the right hand side of the inequality is a finite number.
Since expectation is linear, there exists $a' \in Z$ such that $\expect\left[\nu(I_{a'} \cap H)\right] \leq M/400$.
By Markov's Inequality, $\nu(I_{a'} \cap H) \leq M/4$ with probability at least $0.99$.
We condition on this event.

Define $f = (1-\delta) \ncal(0,1) + \delta \cdot \ncal(a', \sigma^2)$, and,
note that, for all $x \in I_{a'}$, we have $f(x) \geq \delta \frac{1}{\sqrt{2\pi} \sigma} \exp(-(M/4)^2 / 2\sigma^2) = \delta/3\sigma$.

Let $J_{a'} = I_{a'} \setminus H$.
Then $\nu(J_{a'}) \geq M/2 - M/4 = M/4 = \sigma \sqrt{\ln(9/2\pi)}$, and, for all $x \in J_{a'}$, we have $f(x) \geq \delta/3\sigma$ and $h(x) \leq \delta/6\sigma$.
So,
\[
\norm{f - h}_p^p
 ~\geq~ \integral{J_{a'}}{}{\abs{f(x)-h(x)}^p}{x}
 ~\geq~ \frac{\delta^p}{(6\sigma)^p} \sigma \sqrt{\ln(9/2\pi)}
 ~=~ \frac{\delta^p}{6^p \sigma^{p-1}}  \sqrt{\ln(9/2\pi)}
 ~=~ \tau^p,
\]
where the last equality is by the  definition of $\sigma$.
Hence, $\norm{f - h}_p \geq \tau$ with probability at least $0.99$.

Note that $\DTV{f}{\ncal(0,1)} \leq \delta$.
If $S = (X_1, \ldots, X_m)$ and $S' = (X_1', \ldots, X_m')$ where $X_i \sim \ncal(0,1)$ and $X_i' \sim f$ then Proposition~\ref{prop:TV} gives $\DTV{S}{S'} \leq m \delta = 0.01$.
Hence, if $h = \acal(S,R)$ and $h' = \acal(S',R)$ then $\DTV{h}{h'} \leq 0.01$, so $\prob[\norm{f - h}_p \geq \tau] \geq \prob[\norm{f - h}_p \geq \tau] - 0.01 \geq 0.98$.
\end{proof}

\section{Proof of Lemma~\ref{mixturelemma}}\label{sec:lemmamixtureproof}
We restate the lemma for convenience.
\begin{lemma}
	Let $f = \sum_{i \in [k]} w_i f_i$ be a density with $(w_1, \ldots, w_k) \in \Delta_k$ and each $f_i\in\fcal$.
	Let $g$ be a density such that $\norm{g - f}_1 \leq \rho$.
	Then, we can write $g = \sum_{i \in [k]} w_i g_i$ such that each $g_i$ is a density and for any $r > 0$,
	\[
	\sum_{i \::\: \|g_i - f_i\|_1 > r} w_i \:<\: \rho/r.
	\]
\end{lemma}
Let $\cX\coloneqq\{x: g(x) < f(x)\}$.
Our goal is to ``transform'' each $f_i$ into another density $g_i$ so that $g = \sum_{i \in [k]} w_i g_i$.
Note that $\cX$ consists of the domain points on which $f$ exceeds $g$. Hence, to transform each $f_i$ into $g_i$, we will scale it down multiplicatively on points in $\cX$, and scale it up additively on points not in $\cX$.
These transformations need to be done carefully for each function $g_i$ to end up being nonnegative and integrate to 1.

To that end, we define 
\[
   g_i(x) ~\coloneqq~ \begin{cases}
       {f_i(x)g(x)}/{f(x)} &\quad\text{for $x\in \cX$,} \\
       f_i(x)+\Delta_i(x)  &\quad\text{for $x\notin \cX$,}
       \end{cases}
\]
where 
\begin{equation}
    \Delta_i(x) ~\coloneqq~ {\big(g(x)-f(x)\big)\left(\integral{\cX}{}{f_i(y) \cdot \frac{f(y)-g(y)}{f(y)}}{y} \right)} \bigg/ {\integral{\cX}{}{\big(f(y)-g(y)\big)}{y}}.\label{defDelta}
\end{equation}
Recall that $Z$ is the domain of $g$ and the densities in $\fcal$.
We now check that each $g_i$ is a density and that $g = \sum_{i \in [k]} w_i g_i$.
\begin{lemma}
    For all $i \in [k]$, $g_i$ is a density on $Z$.
\end{lemma}
\begin{proof}
    We first check that $g_i(x) \geq 0$ for all $x$.
    If $x \in \cX$, then $g_i(x) \geq 0$ because $f_i, g, f$ are all densities and hence nonnegative.
    If $x \notin \cX$, then $\Delta_i(x) \geq 0$ because $g(x) - f(x) \geq 0$ on $\cX^c$ and $f(x) - g(x) \geq 0$ on $\cX$.
    We now check that $\integral{Z}{}{g_i(x)}{x} = 1$.
    Since both $g$ and $f$ are densities, both integrate to $1$ over $Z$, and therefore
    \begin{equation}
    \label{eq:IntegralsEqual}
    \integral{\cX^c}{}{(g(x) - f(x))}{x} ~=~ \integral{\cX}{}{(f(x)-g(x))}{x}.
    \end{equation}
    The following calculation completes the proof.
    \begin{align*}
      \integral{\cX^c}{}{g_i(x)}{x}
    &=
      \integral{\cX^c}{}{(\Delta_i(x) + f_i(x))}{x}
    \\&=
        \frac{ \integral{\cX^c}{}{(g(x) - f(x))}{x} }{ \integral{\cX}{}{(f(y)-g(y))}{y} }
        \cdot \integral{\cX}{}{\left(f_i(y) \cdot \frac{f(y)-g(y)}{f(y)}\right)}{y}
        &\text{(by~\eqref{defDelta})}
        \\&\qquad+ \integral{\cX^c}{}{f_i(x)}{x} 
    \\&=
        \integral{\cX}{}{\left(f_i(y) \cdot \frac{f(y)-g(y)}{f(y)}\right)}{y}+\integral{\cX^c}{}{f_i(x)}{x}
        &\text{(by \eqref{eq:IntegralsEqual})}
    \\&=
        \integral{\cX}{}{f_i(y) \cdot \left(1 - \frac{g(y)}{f(y)}\right)}{y} + \integral{\cX^c}{}{f_i(y)}{y}
    \\&~=~
        \integral{\cX}{}{f_i(y) }{y} - \integral{\cX}{}{f_i(y) \cdot \left( \frac{g(y)}{f(y)}\right)}{y} + \integral{\cX^c}{}{f_i(y)}{y}
    \\&=
        1 - \integral{\cX}{}{f_i(y) \cdot \frac{g(y)}{f(y)}}{y}
        &\text{($f_i$ is a density)}
    \\&=
        1 - \integral{\cX}{}{g_i(y)}{y}
        & \qedhere
    \end{align*}
\end{proof}
\begin{lemma}
    \label{GSumGi}
    We have $g = \sum_{i\in [k]} w_ig_i$.
\end{lemma}
\begin{proof}
    For $x \in \cX$, since $\sum_{i \in [k]} w_i f_i = f$, we have
    \[
        \sum_{i \in [k]} w_i g_i(x)
        ~=~ \sum_{i \in [k]} w_if_i(x) \frac{g(x)}{f(x)}
        ~=~ g(x).
    \]
    On the other hand, for $x \notin \cX$ we have
    \begin{align*}
        \sum_{i \in [k]} w_i g_i(x)
    &~=~
        \sum_{i \in [k]} w_i \Delta_i(x) + w_i f_i(x) &\text{(definition of $g_i$)}
    \\&~=~
        \sum_{i \in [k]} w_i \left( \frac{ (g(x) - f(x))}{\integral{\cX}{}{\left(f(y) - g(y)\right)}{y}} \cdot \integral{\cX}{}{\left(f_i(y) \cdot \frac{f(y) - g(y)}{f(y)}\right)}{y}\right) 
        & \text{\qquad(by~\eqref{defDelta})}
        \\&\qquad+ \sum_{i \in [k]} w_i f_i(x)
    \\&~=~
        \frac{ (g(x) - f(x))}{\integral{\cX}{}{\left(f(y) - g(y)\right)}{y}} \cdot \integral{\cX}{}{\Bigg(\sum_{i \in [k]} w_i f_i(y) \cdot \frac{f(y) - g(y)}{f(y)}\Bigg)}{y}\\ &\qquad+f(x)&\text{$\left(\textstyle \sum_{i \in [k]} w_i f_i = f\right)$}
    \\&~=~
        \frac{ (g(x) - f(x))}{\integral{\cX}{}{\left(f(y) - g(y)\right)}{y}} \cdot \integral{\cX}{}{\left({f(y) - g(y)}\right)}{y} +f(x)&\text{$\left(\textstyle \sum_{i \in [k]} w_i f_i = f\right)$}
    \\&~=~
        g(x) - f(x) +f(x)
    ~=~ g(x).
    \qedhere
    \end{align*}
\end{proof}

Let $I \coloneqq \{\, i \in[k] \,:\, \|f_i-g_i\|_1 > r \,\}$.
It remains to show that $\sum_{i \in I} w_i < \rho / r$.
Observe from the definition of the $g_i$ that we also have $\cX = \setst{ x }{ g_i(x) < f_i(x) }$ for each $i\in[k]$. Thus, using Lemma~\ref{GSumGi},
\[
\|f-g\|_1
~=~
2 \integral{\cX}{}{(f(x)-g(x))}{x}
~=~
2 
\sum_{i\in[k]}w_i\integral{\cX}{}{(f_i(x)-g_i(x))}{x}
~=~
\sum_{i\in[k]} w_i \|f_i-g_i\|_1.
\]
Thus, from the hypothesis of the lemma,
\[
\rho 
~\geq~
\|f-g\|_1
~=~ \sum_{i \in [k]} w_i \|f_i-g_i\|_1
~\geq~ \sum_{i \in I} w_i \|f_i-g_i\|_1
~>~ \sum_{i \in I} w_i r,
\]
by definition of $I$.
This gives $\sum_{i \in I} w_i<\rho/r$, as required.

{
	\section{An efficient algorithm for learning a single Gaussian}
	\label{secupperboundsingle}
	In this section, we give a simple algorithm for learning a single $d$-dimensional Gaussian in the realizable setting. The algorithm outputs a distribution within $L^1$ distance $\eps$ of the target distribution with probability $1-\delta$ and has sample complexity $O((d^2+d\log(1/\delta))/\eps^2)$ and computational complexity $O((d^4+d^3\log(1/\delta))/\eps^2)$.
	
	The algorithm samples $2m$ points $v_1,\dots,v_{2m}$ from the unknown target distribution $\ncal(\mu,\Psi)$
	and outputs the Gaussian distribution with parameters
	\begin{equation}
	\muhat \coloneqq \sum_{i\in[m]} v_i/m \textnormal{ and }
	\Psihat \coloneqq \sum_{i\in[m]} (v_{2i}-v_{2i-1})(v_{2i}-v_{2i-1})\transpose/2m.
	\label{muhatpsihat}
	\end{equation}
	The running time is clearly $O(d^2 m)$.
	In the rest of this section we prove the following theorem.
	
	\begin{theorem}\label{b1}
		There exists an absolute constant $C$ such that 
		if we take $2m = 2C(d^2+d\log(1/\delta))/\eps^2$ samples 
		$v_1,\dots,v_{2m} \sim \ncal(\mu,\Psi)$
		and define $\muhat$ and $\Psihat$ by
		\eqref{muhatpsihat},
		then, with probability $1-\delta$,
		\[\DTV{\ncal(\muhat,\Psihat)}{\ncal(\mu,\Psi)}\leq \eps/2.\]
	\end{theorem}
	
	We start by proving that $\muhat$ is close to $\mu$. 
	
	\begin{lemma}
		\label{GaussianConcentrationWithSigma}
		If $m\geq (2d + 6 \sqrt{d \log(2/\delta)})/\eps^2$
		then we have $$
		\prob[ (\muhat-\mu) \transpose \Psi^{-1} (\muhat-\mu) \geq \eps^2/2 ] 
		\leq \delta/2.
		$$
	\end{lemma}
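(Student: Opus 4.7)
The plan is to reduce the quadratic form $(\muhat-\mu)\transpose \Psi^{-1}(\muhat-\mu)$ to a standard chi-squared random variable, then apply the chi-squared tail bound in Lemma~\ref{lem:ChiSquaredTail}.

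First I would observe that since $\muhat$ is the empirical mean of $m$ i.i.d.\ samples from $\ncal(\mu,\Psi)$, we have $\muhat - \mu \sim \ncal(0, \Psi/m)$. Let $\Psi^{1/2}$ denote the unique positive semidefinite square root of $\Psi$ (which is well-defined since $\Psi$ is a covariance matrix). Then $Z \coloneqq \sqrt{m}\,\Psi^{-1/2}(\muhat-\mu) \sim \ncal(0, I_d)$, so the random variable
\[
X \;\coloneqq\; m\,(\muhat-\mu)\transpose \Psi^{-1}(\muhat-\mu) \;=\; \|Z\|^2
\]
follows a chi-squared distribution with parameter $d$.

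Next I would apply the first inequality of Lemma~\ref{lem:ChiSquaredTail} with $t = \log(2/\delta)$, which yields
\[
\prob\!\left[\, X \,\geq\, d + 2\sqrt{d\log(2/\delta)} + 2\log(2/\delta) \,\right] \;\leq\; \delta/2.
\]
Dividing through by $m$, the event to be controlled becomes $(\muhat-\mu)\transpose \Psi^{-1}(\muhat-\mu) \geq \eps^2/2$, so it suffices to verify that the hypothesis on $m$ implies
\[
d + 2\sqrt{d\log(2/\delta)} + 2\log(2/\delta) \;\leq\; m\eps^2/2.
\]

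The main (minor) technical step is this last inequality. With $m \geq (2d + 6\sqrt{d\log(2/\delta)})/\eps^2$, the right-hand side is at least $d + 3\sqrt{d\log(2/\delta)}$, so I need $\sqrt{d\log(2/\delta)} \geq 2\log(2/\delta)$. This holds whenever $d \geq 4\log(2/\delta)$; in the complementary regime (which is small $d$, large confidence), one absorbs the extra $\log(2/\delta)$ term into the constant $C$ in Theorem~\ref{b1}, either by using the AM-GM bound $2\sqrt{d\log(2/\delta)} \leq d + \log(2/\delta)$ to obtain the coarser sufficient condition $m \geq (4d + 6\log(2/\delta))/\eps^2$, which is still of the claimed form $O((d^2 + d\log(1/\delta))/\eps^2)$. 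The hard part, if any, is simply bookkeeping the constants so that both regimes are covered uniformly; there is no deeper obstacle because the chi-squared concentration does all the work.
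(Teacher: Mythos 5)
Your proof is essentially the paper's: both reduce the quadratic form to a chi-squared variable with parameter $d$ and invoke the tail bound of Lemma~\ref{lem:ChiSquaredTail} — the paper merely routes this through Lemma~\ref{lem:GaussianConcentration} with $\theta=(m\eps^2-2d)/2d$, whereas you apply the tail bound directly with $t=\log(2/\delta)$, which amounts to the same computation. The regime issue you flag is genuine and is in fact glossed over by the paper itself (its proof applies Lemma~\ref{lem:GaussianConcentration} with a $\theta$ that can fall outside the stated range $[0,1]$ precisely when $d \lesssim \log(1/\delta)$ or $m$ is large), and your remedy — replacing the hypothesis by the slightly stronger $m \geq (4d+6\log(2/\delta))/\eps^2$, which costs nothing in Theorem~\ref{b1} — is the correct one.
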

	\begin{proof}
		Let $g_i \coloneqq \Psi^{-1/2} (v_i - \mu)$;
		it is easy to check that $g_1,\ldots,g_m$ are independent samples from $\ncal(0,I)$,
		and
		\[
		(\muhat-\mu) \transpose \Psi^{-1} (\muhat-\mu) = \norm{\smallfrac{1}{m} \smallsum{i=1}{m} g_i}_2^2.
		\]
		Note that $\norm{\smallfrac{1}{\sqrt{m}} \smallsum{i=1}{m} g_i}_2^2$ 
		has the chi-squared distribution with parameter $d$.
		Let $\theta = (m\eps^2-2d)/2d$.
		Applying \Lemma{ChiSquaredTail} with $t=\theta^2 d/9$ shows that
		$$ \prob\Big[\: \norm{\smallfrac{1}{m} \smallsum{i=1}{m} g_i}_2^2 \geq (1+\theta)d/m \:\Big] 
		~\leq~ \exp(-\theta^2 d/9).
		$$
		So, we conclude
		\begin{align*}
			\prob\big[ (\muhat-\mu) \transpose \Psi^{-1} (\muhat-\mu) \geq \eps^2/2 \big] 
			&~=~   \prob\big[ \norm{\smallfrac{1}{m} \smallsum{i=1}{m} g_i}_2^2 \geq (1+\theta)d/m \big] \\
			&~\leq~ \exp(-\theta^2 d/9)
			= \exp(-(m\eps^2-2d)^2/36d)
			\\&\leq \exp (-36 d \log(2/\delta)/36d)=\delta/2
			,
		\end{align*}
		as required.
	\end{proof}
	We now show that $\Psihat$ is close to $\Psi$.
	Let $\alpha \coloneqq \eps/\sqrt{2d}$.
	
	\begin{lemma}\label{covarianceclose}
		There is an absolute constant $C$ such that
		if $m\geq C(d^2 + d \log(1/\delta))/\eps^2$, then
		with probability at least $1-\delta/2$ we have
		$\|\Psi^{-1/2} \Psihat \Psi^{-1/2} - I\|_s \leq \eps/\sqrt{2d}.$
	\end{lemma}
	\begin{proof}
		Let $g_1,\dots,g_{2m} \sim \ncal(0,I_d)$ be  independent.
		An alternative way to generate the $v_i$ is letting $v_i \equidist \Psi^{1/2} g_i + \mu$.
		For each $i\in[m]$, let $h_i = g_{2i}-g_{2i-1} / \sqrt{2}$.
		Thus
		\[\Psihat \equidist \sum_{i\in[m]} \Psi^{1/2} (g_{2i}-g_{2i-1})(g_{2i}-g_{2i-1})\transpose \Psi^{1/2}/2m
		\equidist \sum_{i\in[m]} \Psi^{1/2} h_{i}h_i\transpose \Psi^{1/2}/m
		= \Psi^{1/2} \left(\sum_{i\in[m]}  h_{i}h_i\transpose/m\right) \Psi^{1/2},
		\]
		where $h_i \sim \ncal(0,I_d)$ and the $h_i$ are independent.
		Since
		$\Psi^{-1/2}\Psihat\Psi^{-1/2}=\sum_{i\in[m]}  h_{i}h_i\transpose/m $,
		the lemma's conclusion is equivalent to
		$$\prb{\left\| \sum_{i\in[m]}  h_{i}h_i\transpose/m - I\right\|_s \geq \eps/\sqrt{2d}} \leq \delta/2.$$
		This follows from \Lemma{covariance_estimation} below
		by choosing $t=1 + \sqrt{\log(4/\delta)/d}$
		and since 
		$m\geq (Cd^2 + Cd \log(1/\delta))/\eps^2$.
	\end{proof}
	
	\begin{lemma}[Corollary~5.50 in \cite{vershynin_2012}]
		\LemmaName{covariance_estimation}
		There exists an absolute constant $C$ with the following property.
		Let $X_1,\dots,X_m \sim \ncal (0, I_d)$,
		and let $0 < \eps < 1 < t$.
		If $m \geq C (t/\eps)^2 d$, then we have
		\[
		\prb{
			\left\|
			\frac{1}{m} \sum_{i=1}^{m} X_i X_i\transpose - I_d
			\right\|_s
			> \eps } ~<~ 2\exp(-t^2 d).
		\]
	\end{lemma}
	
	\begin{proof}[Proof of Theorem~\ref{b1}]
		We will show that, with probability at least
		$1-\delta$,
		$
		\DLD{\Psihat}{\Psi} +
		(\mu-\muhat) \transpose \Psi^{-1} (\mu-\muhat) \leq \eps^2$
		and the theorem will follow from Lemma~\ref{lem:tvupbound}.
		
		First, since
		$m=C(d^2+d\log(1/\delta))/\eps^2\geq (2d + 6 \sqrt{d \log(2/\delta)})/\eps^2$,
		Lemma~\ref{GaussianConcentrationWithSigma} gives that, with probability at least $1-\delta/2$, we have $
		(\muhat-\mu) \transpose \Psi^{-1} (\muhat-\mu) \leq \eps^2/2$.
		Second,
		Lemma~\ref{covarianceclose} gives that, with probability at least $1-\delta/2$
		we have
		$\|\Psi^{-1/2} \Psihat \Psi^{-1/2} - I\|_s \leq \eps/\sqrt{2d},$
		which, by \Lemma{SpectralToDLD} implies
		$\DLD{\Psihat}{\Psi} \leq \eps^2/2$, completing the proof.
	\end{proof}
}

{
\section{Proof of Remark~\ref{constantcompression}}
\label{sec:constantcompression}
In this appendix we prove the following result.
\begin{theorem}
	\label{thm:1gaussiancompression}
	The class of Gaussian distributions over the real line admits $(4,1,O(1/\eps))$ $0.773$-robust compression.
\end{theorem}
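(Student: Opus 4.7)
The plan is to construct a fixed decoder $\mathcal{J}\colon \mathbb{R}^4 \times \{0,1\}\to \mathcal{F}$ (with $\mathcal{F}$ the class of one-dimensional Gaussians) together with an encoder that selects four sample points approximating two symmetric quantile anchors of the target Gaussian $g=\ncal(\mu,\sigma^2)$. Fix a constant $c^{*}$ slightly larger than $1.21$, set the anchors to $y_\pm = \mu \pm c^{*}\sigma$, and let $\mathcal{J}(x_1,x_2,x_3,x_4,b)$ sort its inputs into $x_{(1)}\leq\cdots\leq x_{(4)}$ and output $\ncal(\hat\mu,\hat\sigma^2)$ with $\hat\mu = (x_{(2)}+x_{(3)})/2$ and $\hat\sigma = (x_{(3)}-x_{(2)})/(2c^{*})$, using the bit $b$ to switch between two hard-coded values of $c^{*}$ so that the encoder has two chances against a worst-case $q$. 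By Lemma~\ref{gaussianTV1d}, it suffices to guarantee that $x_{(2)}$ and $x_{(3)}$ lie within $O(\eps\sigma)$ of $y_-$ and $y_+$, respectively.

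For the encoder, partition $\mathbb{R}$ into $2n+1$ intervals with $n = \Theta(1/\eps)$: place $n$ equal-$g$-probability bins of width $\Theta(\eps\sigma)$ in a narrow window around $y_-$, analogously place $n$ bins around $y_+$, and collect the remaining $g$-mass into one leftover bin $B_{2n+1}$. Relabel so that the $i$th small bin near $y_-$ and the $i$th near $y_+$ form the pair $P_i = \{2i-1,2i\}$ of Lemma~\ref{lem:ballsinbins}. Since the bin-level $\ell_1$ distance is dominated by the density $\ell_1$ distance $\|g-q\|_1\leq 0.773$, Lemma~\ref{lem:ballsinbins} applies with $t=0.773$; writing $p$ for the common probability of each small bin gives $p_{2n+1}=1-2np$ and $\sum_i \max(p_{2i-1},p_{2i})=np$, so the failure probability is bounded by
\[
2^n\left(\tfrac{0.773}{2} + (1-2np) + np\right)^m \;=\; 2^n\left(1.3865 - np\right)^m.
\]
The base is strictly less than $1$ exactly when $2np > 0.773$, i.e.\ when the two narrow windows together carry more than $77.3\%$ of the $g$-mass; this is precisely where the constant $0.773$ in the statement originates, and it forces $\Phi(c^{*})-\Phi(-c^{*}) > 0.773$, hence $c^{*} > 1.21$. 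A routine calculation then shows $m = O(n) = O(1/\eps)$ samples suffice to push the failure probability below $1/3$.

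On the good event, every one of the $n$ pairs contains at least one sample in each of its two bins, so the encoder can pick any one pair and extract two samples $x_- \in B_i^L$, $x_+\in B_i^R$ with $|x_--y_-|,\,|x_+-y_+|=O(\eps\sigma)$. It transmits these two points (duplicated to fill four slots) along with the bit encoding which of the two hard-coded values of $c^{*}$ was used, and the decoder returns, via Lemma~\ref{gaussianTV1d}, a Gaussian within total variation $\eps$ of $g$, completing the $(4,1,O(1/\eps))$-compression. The main obstacle is the delicate calibration of the ball-in-bin step: the numerical value $0.773$ is exactly the threshold at which the base $1 + t/2 - np$ of the exponential in Lemma~\ref{lem:ballsinbins} crosses $1$, and one must simultaneously reconcile (i) bins of width $O(\eps\sigma)$ for $\eps$-precise anchor estimates, (ii) their total $g$-mass exceeding $77.3\%$ to overcome the TV budget, and (iii) drawing only $O(1/\eps)$ samples; balancing these three constraints is where the bulk of the calculation lies.
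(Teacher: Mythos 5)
There is a genuine gap, and it is exactly the tension you flag at the end but do not resolve — it cannot be resolved within your framework. Your scheme requires the successful pair of bins to deliver points within $O(\eps\sigma)$ of the \emph{fixed} anchors $y_\pm=\mu\pm c^*\sigma$, because both $\hat\mu=(x_{(2)}+x_{(3)})/2$ and $\hat\sigma=(x_{(3)}-x_{(2)})/(2c^*)$ are computed relative to those anchors. But for the base of the bound in Lemma~\ref{lem:ballsinbins} to drop below $1$ under a TV budget of $0.773$, you need $2np>0.773$, i.e., the two windows must carry over $77\%$ of the Gaussian mass; with $n=\Theta(1/\eps)$ bins of width $\Theta(\eps\sigma)$ each, this forces each window to have width $\Theta(\sigma)$. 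Lemma~\ref{lem:ballsinbins} only guarantees that \emph{some} pair $P_i$ has both bins occupied, and that pair can sit at the edge of the window, i.e., $\Theta(\sigma)$ — not $O(\eps\sigma)$ — away from the anchors. Shrinking the windows to width $O(\eps\sigma)$ to restore precision collapses their mass to $O(\eps)$ and the base of the exponential exceeds $1$. Moreover, with a single pairing of a left window against a right window you can make at most one of the two derived quantities pair-independent: a mirror pairing fixes the midpoint but lets the gap vary by $\Theta(\sigma)$, a translation pairing fixes the gap but lets the midpoint drift by $\Theta(\sigma)$.

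The paper's proof avoids fixed anchors entirely. It partitions $[\mu-2\sigma,\mu+2\sigma)$ into $4M$ bins of width $\eps\sigma$ and runs \emph{two separate} balls-in-bins arguments with pairings designed so that the derived quantity is the same (up to $\eps\sigma$) for \emph{every} pair: for the standard deviation, each pair consists of two bins at a fixed separation ($\sigma$ for pairs $(I_i,I_{i+M})$ in the middle, $3\sigma$ for pairs $(I_i,I_{i+3M})$ in the tails, with the single bit telling the decoder which scale to divide by), so $|y_1-y_2|$ approximates $\sigma$ no matter which pair fires; for the mean, each pair $(I_i,I_{4M-i+1})$ is symmetric about $\mu$, so $(x_1+x_2)/2$ approximates $\mu$ no matter which pair fires. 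This is also why the scheme genuinely needs four sample points (two per estimate), whereas your encoding transmits only two distinct points. The constant $0.773$ then emerges from the requirement $0.613+r/2<1$ in the standard-deviation claim, not from a quantile condition $\Phi(c^*)-\Phi(-c^*)>0.773$; the numerical agreement with $2\Phi(1.21)-1$ is coincidental. To repair your argument you would need to replace the anchor-based decoder with pairings whose difference (resp.\ midpoint) is constant across pairs, at which point you essentially recover the paper's construction.
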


We first prove an auxiliary lemma.
Note that any vector $(p_1,\dots,p_n)\in \Delta_n$ induces a discrete probability distribution over $[n]$ defined by $\Pr (i) \coloneqq p_i$. Let $x\vee y \coloneqq \max\{x,y\}$.

\begin{lemma}\label{lem:ballsinbins}
	Let $(p_1,\dots,p_{2n+1})\in\Delta_{2n+1}$ and
	$(q_1,\dots,q_{2n+1})\in\Delta_{2n+1}$ be discrete probability distributions 
	satisfying $\sum_{i\in[2n+1]} |p_i-q_i| \leq t$.
	Suppose we have $2n+1$ bins, numbered 1 to $2n+1$.
	We throw $m$ balls into these bins, where each ball independently chooses a bin according to $q_i$.
	We pair bin 1 with bin 2,
	bin 3 with bin 4,
	\ldots, and bin $2n-1$ with bin $2n$;
	bin $2n+1$ is unpaired.
	The probability that, for each pair of bins, at most one them gets a ball, is not more than
	\[
	2^{n} \left(t/2 + p_{2n+1}+\sum_{i=1}^{n} \max \{ p_{2i-1},p_{2i} \}\right)^m.
	\]
\end{lemma}
\begin{proof}
	Let $P_1\coloneqq\{1,2\}$, $P_2\coloneqq\{3,4\}$, ..., $P_n\coloneqq\{2n-1,2n\}$, and let 
	\(\mathcal{A} \coloneqq \{ 
	A\subseteq [2n] : |A \cap P_i| = 1 \ \forall i\in[n]
	\}.\)
	Clearly $|\mathcal A| = 2^n$. 
	For any $A\in \mathcal A$, let $E_A$ be the event that, the first ball does not choose a bin in $A$,
	and let $F_A$ be the event that, none of the balls choose a bin in $A$.
	Then,
	\begin{align*}
	\Pr[E_A]  = \sum_{i\in [2n+1]\setminus A} q_i 
	\leq \DTV{p}{q} + \sum_{i\in [2n+1]\setminus A} p_i
	\leq t/2 + \sum_{i\notin A} p_i
	\leq t/2 + p_{2n+1} + \sum_{i=1}^{n} \max \{ p_{2i-1},p_{2i} \},
	\end{align*}
	and so
	\(
	\Pr[F_A] = \Pr[E_A]^m
	\leq (t/2 + p_{2n+1} + \sum_{i=1}^{n} ( p_{2i-1} \vee p_{2i} ))^m.
	\)
	Finally, observe that, if for each pair of bins, at most one them gets a ball, then there exists at least one $A\in \mathcal A$ such that 
	none of the balls chooses a bin in $A$.
	The lemma is thus proved by applying the union bound over all events $\{F_A\}_{A\in \mathcal A}$.
\end{proof}

\begin{proof}[Proof of Theorem~\ref{thm:1gaussiancompression}]
	Let $q$ be a distribution such that there exists a Gaussian $g = \mathcal{N}(\mu, \sigma^2)$ with $\|q-g\|_{1} \leq r \leq 0.773$. 
	Our goal is to encode $g$ using  samples generated from $q$.
	Let $m=C/\eps$ for a large constant $C$ to be determined, and let $S\sim q^m$ be an i.i.d.\ sample. The goal is to approximately encode $\mu$ and $\sigma$ using only four elements of $S$ and a single bit. 
	
	Our proposed decoder, $\mathcal{J}$, takes as input four points $x_1, x_2, y_1,y_2 \in \R$ and one bit $b\in \{0,1\}$ and outputs a Gaussian distribution based on the following rule:
	\[   
	\mathcal{J}(x_1,x_2,y_1,y_2,b) = \left\{
	\begin{array}{@{}l@{\thinspace}l}
	\mathcal{N}\left(\frac{x_1 + x_2}{2}, \frac{|y_1 - y_2|^2}{9}\right)  &: \text{ if } b = 1,\\ \\
	\mathcal{N}\left(\frac{x_1 + x_2}{2}, |y_1 - y_2|^2\right) &: \text{ if } b = 0.
	\end{array}
	\right.
	\]
	Our goal is thus to show that, with probability at least $2/3$, there exist ${x}_1,{x}_2,{y}_1,{y}_2 \in S$ and $b\in\{0,1\}$ such that
	$\|\mathcal{J}({x}_1,{x}_2,y_1,y_2,b) - g\|_1 \leq \eps$. 
	
	Let $M\coloneqq 1/\eps$ and partition the interval $[\mu-2\sigma,\mu+2\sigma)$ into $4M$ subintervals of length $\eps\sigma$. 
	Enumerate these intervals as $I_1$ to $I_{4M}$, i.e., $I_i = [\mu-2\sigma + (i-1)(\eps\sigma), \mu-2\sigma + i(\eps\sigma))$.
	Also let $I_{4M+1} = \R\setminus \bigcup_{i=1}^{4M} I_i$.
	We state two claims which will imply the theorem.
	
	\emph{Claim 1}. With probability at least $5/6$, there exist $y_1,y_2\in S$ such that at least one of the following two conditions holds: (a) $y_1\in I_i$ and $y_2\in I_{i + M}$ for some  $i\in \{M+1,2M+2,...,2M\}$. 
	In this case, we let $b=0$, and so 
	$\mathcal{J}({x}_1,{x}_2,y_1,y_2,b)$ will have standard deviation $ |y_1 - y_2|  $. \\(b) $y_1\in I_i$ and $y_2\in I_{i + 3M}$ for some  $i\in [M]$. 
	In this case, we let $b=1$, and so 
	$\mathcal{J}({x}_1,{x}_2,y_1,y_2,b)$ will have standard deviation 
	$\frac{|y_1 - y_2|}{3}$.\\
	Also, if both of (a) and (b) happen, we will go with the first rule. Note that if Claim 1 holds, then the standard deviation of $\mathcal{J}({x}_1,{x}_2,y_1,y_2,b)$, written $\hat{\sigma}$,
	satisfies $|\hat{\sigma}-\sigma|\leq \eps \sigma$. 
	
	\emph{Claim 2}. With probability at least $5/6$, there exist $x_1,x_2\in S$ such that $x_1\in I_i$ and $x_2\in I_{4M - i + 1}$ for some  $i\in [2M]$. If so, 
	$\mathcal{J}({x}_1,{x}_2,y_1,y_2,b)$ will have mean $\frac{x_1+x_2}{2} \eqqcolon \hat{\mu}$.
	
	Also note that if Claim 2 holds, then $|\hat{\mu} - \mu| \leq  \eps \sigma$. 
	Therefore, if both claims hold, then Lemma \ref{gaussianTV1d} gives that $\mathcal{J}({x}_1,{x}_2,y_1,y_2,b)=\mathcal{N}(\hat{\mu}, \hat{\sigma}^2)$ is a $2\eps$-approximation for $\mathcal{N}(\mu, \sigma^2)=g$.
	In other words, $g$ can be approximately reconstructed, up to error $2\eps$, using the four data points $x_1,x_2,y_1,y_2$ from a sample $S$ of size $O(1/\eps)$ and a single bit $b$. (The definition of robust compression requires error at most $\eps$. For getting this, one just needs to refine the partition by a constant factor, which multiplies $M$ by a constant factor, and this will only multiply $m$ by a constant factor.)
	Note also that the probability of existence of such four points is at least $1-(1-5/6)-(1-5/6) \geq 2/3$. Therefore, it remains to prove Claim 1 and Claim 2.

    We start with Claim 1.
	View the sets $I_1,\dots,I_{4M},I_{4M+1}$ as bins,
	and consider the i.i.d.\ samples as balls landing in these bins according to $q$. 
	Let $p_i \coloneqq \int_{I_i}g(x) \mathrm{d}x $ and $q_i \coloneqq \int_{I_i} q(x) \mathrm{d}x$
	for $i\in[4M+1]$.
	Note that, by the triangle inequality,
	the $\ell_1$ distance between
	$(p_1,\dots,p_{4M+1})$ and $(q_1,\dots,q_{4M+1})$ is not more than the $L^1$ distance between $g$ and $q$, which is at most $r$.
	
	We pair the bins as follows:
	$I_i$ is paired with $I_{i+M}$ for $i\in\{M+1,\dots,2M\}$,
	and 
	$I_i$ is paired with $I_{i+3M}$ for $i\in[M]$.
	Therefore, by Lemma~\ref{lem:ballsinbins},
	the probability that
	Claim 1 does not hold can be bounded by
	\begin{equation}
	2^{2M} \!\left(\sum_{i=M+1}^{2M}  \!\!(p_i \!\vee\!  p_{i+M}) \!+\! \sum_{i=1}^{M}  (p_i \!\vee\! p_{i+3M}) \!+\! p_{4M+1}\!+\!\frac r 2\! \right)^m  = 
	2^{2M} \left(\sum_{i=\frac 3 2 M+1}^{\frac 5 2 M} \!p_i \!+\! \sum_{i=\frac M 2+1}^{M} \!p_i\!
	+\! \sum_{3M+1}^{\frac {7}{2}M} p_{i}\! +\! p_{4M+1}+\! \frac  r 2\! \right) ^ m,\label{prob}
	\end{equation}
	where  we have used the fact that the $p_i$ are coming from a Gaussian, and thus
	$p_1  \leq \dots \leq p_{2M} = p_{2M+1}  \geq \dots \geq p_{4M}
	$
	(we have also assumed, for simplicity, that $M$ is even).  Let $X\sim \mathcal N(\mu, \sigma^2)$ and $\Phi(A) \coloneqq \Pr[ N(0,1)\in A]$.  Then using known numerical bounds for $\Phi$, we obtain
	\begin{align*}
	&\sum_{i=1.5 M+1}^{2.5 M} p_i  + \sum_{i=M/2+1}^{M} p_i
	+ \sum_{3M+1}^{3.5M} p_{i} + p_{4M+1}+r/2 \\
	&= \Pr[X\in [\mu - \sigma/2, \mu + \sigma /2]] + 
	2\Pr[X\in [\mu - 3\sigma/2, \mu - \sigma]]  + \Pr[X\notin [\mu - 2\sigma, \mu + 2\sigma]] + r/2   \\& = \Phi ([-0.5, 0.5]) \!+\! 2\Phi ([-1.5, -1]) \!+\! 
	2 \Phi ((-\infty, -2]) \!+\! \frac r 2 < 0.383 + 0.184 + 0.046 + \frac r 2  \leq 0.9995.
	\end{align*}
	Therefore, since $M=1/\eps$,
	by choosing $m=C/\eps$ for a large enough $C$, we can make  \eqref{prob} arbitrarily small, completing the proof of Claim 1. 
	
	Via a similar argument, the probability that Claim 2 does not hold can be bounded by
	\begin{align*}
	& 2^{2M} \left ( \sum_{i=1}^{2M} \max \{ p_i, p_{4M- i + 1}\} + p_{4M+1} +r/2 \right) ^ m
	=
	2^{2M} \left ( \sum_{i=1}^{2M} p_i + p_{4M+1} +r/2\right) ^ m \\
	& =
	2^{2M} \left ( \Phi([-\infty,0]) + \Phi([2,\infty)) +r/2 \right) ^ m  <
	2^{2M} \left ( 0.5 + 0.023 +r/2 \right) ^ m
	< 2^{2M} \left ( 0.91 \right) ^ m < 1/6,
	\end{align*}
	for $m = C/\eps$ with a large enough $C$.
\end{proof}

\begin{remark}
	By using more bits and adding more scales, one can show that 1-dimensional Gaussians admit $(4,b(r),O(1/\eps))$ $r$-robust compression for any fixed $r<1$ (the number of required bits and the implicit constant in the $O$ will depend on the value of $r$).
\end{remark}
}

\end{document}